\newcommand{\tb}[1]{{\textbf{#1}}}
\theoremstyle{plain}
\newtheorem{theorem}{Theorem}
\newtheorem{lemma}{Lemma}
\theoremstyle{definition}
\newtheorem{assumption}{Assumption}[section]
\newtheorem{remark}{Remark}
\theoremstyle{plain}
\newcommand{\fS}{\mathcal{S}}
\newcommand{\fA}{\mathcal{A}}
\newcommand{\fY}{\mathcal{Y}}
\newcommand{\fT }{\mathcal{T}}
\newcommand{\fF}{\mathcal{F}}
\newcommand{\fO}{\mathcal{O}}
\newcommand{\R}[1][]{\mathbb{R}^{#1}}
\newcommand{\E}{\mathbb{E}}
\newcommand{\ns}{{\abs{\fS}}}
\newcommand{\na}{{\abs{\fA}}}
\newcommand{\ny}{{\abs{\fY}}}
\newcommand{\tref}[1]{\text{\ref{#1}}}
\newcommand{\cmark}{\ding{51}}%
\newenvironment{assumptionp}[1]{
  
  \assumptionalt
}{\endassumptionalt}
\DeclareMathOperator*{\argmax}{arg\,max}
\icmltitlerunning{Convergence Rates of Linear $Q$-Learning}
\begin{document}

\twocolumn[
\icmltitle{Linear $Q$-Learning Does Not Diverge in $L^2$: \\Convergence Rates to a Bounded Set}



\icmlsetsymbol{equal}{*}

\begin{icmlauthorlist}
\icmlauthor{Xinyu Liu}{equal,yyy}
\icmlauthor{Zixuan Xie}{equal,yyy}
\icmlauthor{Shangtong Zhang}{yyy}
\end{icmlauthorlist}

\icmlaffiliation{yyy}{{Department of Computer Science}, {University of Virginia}}

\icmlcorrespondingauthor{\\Xinyu Liu}{xinyuliu@virginia.edu}
\icmlcorrespondingauthor{\\Zixuan Xie}{xie.zixuan@email.virginia.edu}
\icmlcorrespondingauthor{\\Shangtong Zhang}{shangtong@virginia.edu}

\icmlkeywords{Machine Learning, ICML}

\vskip 0.3in
]



\printAffiliationsAndNotice{\icmlEqualContribution} 

\begin{abstract}
$Q$-learning is one of the most fundamental reinforcement learning algorithms.
It is widely believed that $Q$-learning with linear function approximation (i.e., linear $Q$-learning) suffers from possible divergence until the recent work \citet{meyn2024bellmen} which establishes the ultimate almost sure boundedness of the iterates of linear $Q$-learning.
Building on this success,
this paper further establishes the first $L^2$ convergence rate of linear $Q$-learning iterates (to a bounded set).
Similar to \citet{meyn2024bellmen},
we do not make any modification to the original linear $Q$-learning algorithm, do not make any Bellman completeness assumption,
and do not make any near-optimality assumption on the behavior policy.
All we need is an $\epsilon$-softmax behavior policy with an adaptive temperature.
The key to our analysis is the general result of stochastic approximations under Markovian noise with fast-changing transition functions.
As a side product,
we also use this general result to establish the $L^2$ convergence rate of tabular $Q$-learning with an $\epsilon$-softmax behavior policy,
for which we rely on a novel pseudo-contraction property of the weighted Bellman optimality operator. 
\end{abstract}

\section{Introduction}
Reinforcement learning (RL, \citet{sutton2018reinforcement}) emerges as a powerful paradigm for training agents to make decisions sequentially in complex environments.
Among various RL algorithms, $Q$-learning 
\citep{watkins1989learning,watkins1992q} stands out as one of the most celebrated \citep{mnih2015human}.
The original $Q$-learning in \citet{watkins1989learning}
uses a look-up table for representing the action-value function.
To improve generalization and work with large or even infinite state spaces,
linear function approximation is used to approximate the action value function,
yielding linear $Q$-learning.

Linear $Q$-learning is, however,
widely believed to suffer from possible divergence \citep{baird1995residual,sutton2018reinforcement}.
In other words,
the weights of linear $Q$-learning can possibly diverge to infinity as learning progresses.
However, \citet{meyn2024bellmen} recently proves
that when an $\epsilon$-softmax behavior policy with an adaptive temperature is used,
linear $Q$-learning does not diverge.
Instead,
the weights are eventually bounded almost surely.
Or more specifically, let $\qty{w_t}$ be the weights of linear $Q$-learning.
\citet{meyn2024bellmen} proves that $\lim\sup_t \norm{w_t} < C$ almost surely for some deterministic constant $C$.
Building on this success,
we further provide a nonasymptotic analysis of linear $Q$-learning by
\tb{establishing the first $L^2$ convergence rate of linear $Q$-learning to a bounded set}.
Specifically, we establish the rate at which $\E[\norm{w_t}^2]$ diminishes to a bounded set.
Notably,
this work differs from many previous analyses of linear $Q$-learning (prior to \citet{meyn2024bellmen}) in that, except for the aforementioned behavior policy, we do not make any modification to the original linear $Q$-learning algorithm (e.g., no target network, no weight projection, no experience replay, no i.i.d. data, no regularization).
We also use the weakest possible assumptions (e.g., no Bellman completeness assumption,
no near-optimality assumption on the behavior policy). 
Table~\ref{tbl linear q works concentration} summarizes the improvements.

\begin{table*}[h!]
    \centering
  \begin{tabular}{c|c|cccc|cc|c|c}
  \hline
  & Type & \multicolumn{4}{c|}{\makecell{Algorithm Modification}} & \multicolumn{2}{c|}{Assumptions} & \makecell{Behavior \\ Policy} & \makecell{Rate} \\
  \hline & & \cancel{$\bar w_t$} & \cancel{$\mathcal{B}$} & \cancel{$\Pi$} & \cancel{$\norm{w_t}$} & \cancel{$\fT Xw$} & $\cancel{X}$ & & \\\hline
  \citet{melo2008analysis} & linear & \cmark & \cmark & \cmark & \cmark & \cmark & \cmark & $\mu_*$ &  \\\hline
  \citet{lee2019unified} & linear & \cmark & \cmark & \cmark & \cmark & \cmark & & $d_\mu$ & \\\hline
  \citet{chen2022performance} & linear & \cmark & \cmark & \cmark & \cmark & \cmark & \cmark & $\mu_*$ & \cmark \\\hline
  \citet{carvalho2020new} & linear & & & \cmark & \cmark & \cmark & \cmark & $d_\mu$ & \\\hline
  \citet{zhang2021breaking} & linear &  & \cmark &  &  & \cmark & \cmark & $\mu_{\epsilon, \text{softmax}}$ & \\\hline
  \citet{Gopalan2022Approximate} & linear & \cmark& \cmark & \cmark & \cmark & \cmark & \cmark & $d_{\mu_{\epsilon, \argmax}}$ & \\\hline
  \citet{chen2023target} & linear & & \cmark & & \cmark & \cmark & \cmark & $\mu$ & \cmark \\\hline
  \citet{lim2024regularized} & linear & \cmark & \cmark & \cmark & & \cmark & \cmark & $d_\mu$ &  \\\hline
  \citet{meyn2024bellmen} & linear & \cmark & \cmark & \cmark & \cmark & \cmark &  \cmark & $\mu_{\epsilon, \text{softmax}}$ &  \\\hline
  \citet{fan2020theoretical} & neural & & \cmark & & \cmark & & \cmark & $d_\mu$ & \cmark \\\hline
  \citet{xu2020finite} & neural & \cmark & \cmark & & \cmark & \cmark & \cmark & $\mu_*$ & \cmark \\\hline
  \citet{cai2023neural} & neural & \cmark & \cmark & & \cmark & \cmark & \cmark & $d_{\mu_*}$ & \cmark \\\hline
  \citet{zhang2023dqnconvergence} & neural & & & \cmark & \cmark & \cmark & \cmark & $\mu_{\epsilon, \argmax}$ & \cmark \\\hline
  Theorem~\ref{thm:stan} & linear & \cmark & \cmark & \cmark & \cmark & \cmark & \cmark & $\mu_{\epsilon, \text{softmax}}$ & \cmark \\\hline
  \end{tabular}
  \caption{\label{tbl linear q works concentration} Summary of notable analyses of linear $Q$-learning. 
  We additionally include $Q$-learning with neural networks for completeness.
  The exact definitions of the terminologies used in the columns and a more detailed exposition of the table are in Section~\ref{sec related}. 
  ``\cancel{$\bar w_t$}" is checked if target network is not used.
  ``\cancel{$\mathcal{B}$}" is checked if experience replay is not used.
  ``\cancel{$\Pi$}" is checked if weight projection is not used.
  ``\cancel{$\norm{w_t}$}" is checked if no additional regularization is used.
  ``\cancel{$\fT  Xw$}" is checked if Bellman completeness assumption is not used.
  ``\cancel{$X$}'' is checked if no restrictive assumptions on the features are used.
  ``Rate'' is checked if a convergence rate is provided.
  For the ``Behavior Policy'' column, 
  ``$\mu$'' indicates that a fixed behavior policy is used.
  ``$\mu_*$'' indicates that a fixed behavior policy is used and some strong near-optimality assumption of the behaivor policy is made. 
  ``$\mu_{\epsilon, \argmax}$'' indicates that an $\epsilon$-greedy policy is used,
  ``$\mu_{\epsilon, \text{softmax}}$'' indicates that an $\epsilon$-softmax policy is used.
  Let $\mu_0 \in \qty{\mu, \mu_*, \mu_{\epsilon, \argmax}, \mu_{\epsilon, \text{softmax}}}$ be any type of the aforementioned behavior policy.
  Then $d_{\mu_0}$ indicates that i.i.d. samples from the stationary distribution of $\mu_0$ is provided directly instead of obtaining Markovian samples by executing $\mu_0$.
  } 
  \end{table*}
  
  \begin{table*}[h!]
      \centering
    \begin{tabular}{c|c|c|c}
    \hline
    & $\cancel{\alpha_{\nu(S_t, A_t, t)}}$ &  \makecell{Behavior \\ Policy} & Rate \\\hline
    \makecell{\citet{watkins1989learning,watkins1992q,jaakkola1993convergence}\\\citet{tsitsiklis1994asynchronous,littman1996generalized}} & & any & \\ \hline
    \citet{szepesvari1997asymptotic} & & $d_\mu$ & \cmark \\\hline
    \citet{even2003learning} &  & any & \cmark  \\\hline
    \citet{lee2019unified} & \cmark & $d_\mu$ & \\\hline
    \citet{devraj2020uniformly} & \cmark & $\mu$ &  \\\hline
    \citet{chen2021lyapunov,li2024q,zhang2024constant} & \cmark  &  $\mu$ & \cmark \\\hline
    Theorem~\ref{thm:stan_tabular} & \cmark & $\mu_{\epsilon, \text{softmax}}$ & \cmark \\\hline
    \end{tabular}
    \caption{\label{tbl nonlinear q works concentration} Summary of notable analyses of tabular $Q$-learning.
  The exact definitions of the terminologies used in the columns and a more detailed exposition of the table are in Section~\ref{sec related}. 
     ``$\cancel{\alpha_{\nu(S_t, A_t, t)}}$" is checked if count based learning rate is not required.
     Notably,
     analyses of the synchronous variant of tabular $Q$-learning is not surveyed in this paper.
    } 
    \end{table*}

Our $L^2$ convergence rate is made possible by a novel result concerning the \tb{convergence of stochastic approximations under fast-changing time-inhomogeneous Markovian noise},
where
the transition function of the Markovian noise evolves as fast as the stochastic approximation weights, i.e.,
there is only a single timescale.
By contrast,
\citet{zhang2022globaloptimalityfinitesample}
obtain similar results only with a two-timescale framework,
where the transition function of the Markovian noise evolves much slower than the stochastic approximation weights.

As a side product,
we also use this general stochastic approximation result to establish the $L^2$ convergence rate of tabular $Q$-learning with an $\epsilon$-softmax behavior policy.
To our knowledge, this is the first time that such convergence rate is established.
The key to this success is the identification of \tb{a novel pseudo-contraction property of the weighted Bellman optimality operator}.
Table~\ref{tbl nonlinear q works concentration} summarizes the improvements over previous works.

\section{Background}
\label{sec:background}
\tb{Notations.}
We use $\langle x,y \rangle \doteq x^\top y$ to denote the standard inner product in Euclidean spaces.
A function $f$ is said to be $L$-smooth (w.r.t. some norm $\norm{\cdot}_s$) if $\forall w, w'$,
\begin{align}
\!\!\!\!\! \textstyle f(w') \leq f(w) + \langle\nabla f(w), w' - w\rangle + \frac{L}{2} \norm{w' - w}_s^2.
\label{eq:smoothness}
\end{align}
Since all norms in finite dimensional spaces are equivalent,
when we say a function is smooth in this paper,
it means it is smooth w.r.t. any norm (with $L$ depending on the choice of the norm).
In particular,
to simplify notations,
we in this paper use $\norm{\cdot}$ to denote an arbitrary vector norm such that its square $\norm{\cdot}^2$ is smooth.
We abuse $\norm{\cdot}$ to also denote the corresponding induced matrix norm.
We use $\norm{\cdot}_*$ to denote the dual norm of $\norm{\cdot}$.
We use $\norm{\cdot}_2$ and $\norm{\cdot}_\infty$ to denote the $\ell_2$ and infinity norm.
We use functions and vectors exchangeably when it does not confuse.
For example, $f$ can denote both a function $\fS \to \R$ and a vector in $\R[\ns]$ simultaneously.
 
We consider an infinite horizon Markov Decision Process (MDP, \citet{bellman1957markovian})
defined by a tuple $(\fS, \fA, p, r, \gamma, p_0)$, where $\fS$ is a finite set of states, $\fA$ is a finite set of actions, $p: \fS \times \fS \times \fA \to \qty[0,1]$ is the transition probability function, $r: \fS \times \fA \to \R$ is the reward function, $\gamma \in [0, 1)$ is the discount factor,
and $p_0: \fS \to [0, 1]$ denotes the initial distribution.
At the time step $0$,
an initial state $S_0$ is sampled from $p_0$.
At the time step $t$,
an action $A_t$ is sampled according to some policy $\pi: \fA \times \fS \to [0, 1]$, i.e., $A_t \sim \pi(\cdot | S_t)$.
A reward $R_{t+1} \doteq r(S_t, A_t)$ is then emitted and a successor state $S_{t+1}$ is sampled from $p(\cdot | S_t, A_t)$.
We use $P_\pi$ to denote the transition matrix between state action pairs for an arbitrary policy $\pi$, i.e., $P_\pi\qty[(s,a),(s',a')]=p(s'|s,a)\pi(a'|s')$.
We use $q_\pi: \fS \times \fA \to \R$ to denote the action value function of a policy $\pi$,
which is defined as $q_\pi(s, a) \doteq \E_{\pi}\left[\sum_{i=0}^\infty \gamma^{i}R_{t+i+1} | S_t = s, A_t = a\right]$.
One fundamental task in RL is control,
where the goal is to find the optimal action value function, denoted as $q_*$,
satisfying $q_*(s, a) \geq q_\pi(s, a) \forall \pi, s, a$.
It is well-known that the $q_*$ is the unique fixed point of the Bellman optimality operator $\fT: \mathbb{R}^{\ns\na} \to \mathbb{R}^{\ns\na}$ defined as $(\fT q)(s, a) \doteq \sum_{s'} p(s' | s, a) \left[r(s,a) + \gamma \max_{a'} q(s', a')\right]$.

\tb{Tabluar $Q$-Learning.} $Q$-learning is the most celebrated method to estimate $q_*$.
In its simplest form,
it uses a lookup table $q \in \mathbb{R}^{\ns\na}$ to store the estimate of $q_*$ and generates the iterates $\qty{q_t}$ as
\begin{align}
 \label{eq:tabular q update}
    &A_t \sim \mu_{q_t}(\cdot | S_t), \tag{tabular $Q$-learning} \\
    &\textstyle\delta_t = R_{t+1} + \gamma \max_{a'} q_t(S_{t+1}, a') - q_t(S_t, A_t),\\
    &\textstyle q_{t+1}(S_t, A_t) = q_t(S_t, A_t) + \alpha_t \delta_t.
\end{align}
Here, $\qty{\alpha_t}$ are learning rates and $\mu_q$ is the behavior policy.
In this paper,
we consider an $\epsilon$-softmax behavior policy defined as
 \begin{equation}
 \label{eq:mu_tabular}
     \textstyle \mu_q(a|s) \doteq \frac{\epsilon}{\na} + (1-\epsilon)\frac{\exp(q(s,a))}{\sum_b \exp(q(s,b))},
 \end{equation}
 where $\epsilon \in (0,1]$ controls the degree of exploration.

\tb{Linear $Q$-Learning.} To promote generalization or work with large state spaces,
$Q$-learning can be equipped with linear function approximation,
where we approximate $q_*(s, a)$ with $x(s, a)^\top w$.
Here $x: \fS \times \fA \to \R[d]$ is the feature function that maps a state action pair to a $d$-dimensional feature and
$w \in \R[d]$ is the learnable weight.
Linear $Q$-learning generates the iterates $\qty{w_t}$ as
\begin{align}
\label{eq:linear q update}
&A_t \sim \mu_{w_t}(\cdot | S_t), \tag{linear $Q$-learning} \\
    &\textstyle \delta_t = R_{t+1} + \gamma \max_{a'} x(S_{t+1},a')^\top w_t - x(S_t, A_t)^\top w_t,\\
    &\textstyle w_{t+1} = w_t + \alpha_t  \delta_t x(S_t, A_t). \label{eq:lin func}
\end{align}
Here we have abused $\mu_w$ to also denote the behavior policy used in~\eqref{eq:linear q update},
which is defined as
\begin{equation}
\label{eq:mu_linear}
    \textstyle \mu_w(a|s) = \frac{\epsilon}{\na} + (1-\epsilon)\frac{\exp(\kappa_w x(s,a)^\top w)}{\sum_b \exp(\kappa_w x(s,b)^\top w)},
\end{equation}
where $\epsilon \in (0, 1]$ and the temperature parameter $\kappa_w$ is defined as
\begin{equation}
\label{eq:temperature}
\kappa_w = \begin{cases}
\frac{\kappa_0}{\norm{w}_2} &\textstyle  \norm{w}_2 \geq 1 \\
\kappa_0 & \text{otherwise}
\end{cases}
\end{equation}
with $\kappa_0 > 0$ being a constant. 
To our knowledge,
this behaivor policy is first used in \citet{meyn2024bellmen}.
The softmax approximation of the greedy operator ensures the continuity of $\mu_w$.
The adaptive temperature $\kappa_w$ further ensures the Lipschitz continuity of the expected updates of~\eqref{eq:linear q update}.
Both are key to our finite sample analysis and are discussed in details shortly in the proofs.

\section{Main Results}
\label{thms}
\begin{assumption}
    \label{assum:markov}
    The Markov chain $\qty{S_t}$ induced by a uniformly random behavior policy is irreducible and aperiodic. 
\end{assumption}
Notably,
since $\epsilon$ is required to be strictly greater than 0,
Assumption~\ref{assum:markov} ensures that for any $w$ and any $q$,
the Markov chain induced by $\mu_w$ and $\mu_q$ is also irreducible and aperiodic.
This is a common assumption to analyze time-inhomogeneous Markovian noise \citep{zhang2022globaloptimalityfinitesample}. 
\begin{assumptionp}{LR}
  \label{assu lr}
  The learning rate is $\alpha_t = \frac{\alpha}{(t + t_0)^{\epsilon_\alpha}}$, \\
  where $\epsilon_\alpha \in (0.5,1]$, $\alpha>0$, $t_0>0$ are constants.
\end{assumptionp}

\begin{theorem}[$L^2$ Convergence Rate of Linear $Q$-Learning]
  \label{thm:stan}
  Let Assumptions~\tref{assum:markov} and \tref{assu lr} hold.
  Then for sufficiently small $\epsilon$ in \eqref{eq:mu_linear},
  sufficiently large $\kappa_0$ in \eqref{eq:temperature}, 
  and sufficiently large $t_0$ in $\alpha_t$,
  there exist some constant $\bar t$ such that 
  the iterates $\qty{w_t}$ generated by~\eqref{eq:linear q update} satisfy the following statements $\forall t \geq \bar t$. \\
  (1) When $\epsilon_\alpha = 1$,
  there exist some constants $B_{\tref{thm:stan},1}$, $B_{\tref{thm:stan},2}$,
  and $B_{\tref{thm:stan},3}$ such that 
  \begin{align}
        \textstyle \mathbb{E}\qty[\norm{w_t}_2^2] \leq&\textstyle \frac{B_{\tref{thm:stan},1}}{(t + t_0)^{B_{\tref{thm:stan},2} \alpha}}\norm{w_0}_2^2 + B_{\tref{thm:stan},3}.
  \end{align}
  (2) When $\epsilon_\alpha \in (0,1)$,
  there exist some constants $B_{\tref{thm:stan},4}$, $B_{\tref{thm:stan},5}$ and $B_{\tref{thm:stan},6}$ such that
    \begin{align}
        \E\qty[\norm{w_t}_2^2]
        &\textstyle \leq B_{\tref{thm:stan},4} \exp\qty( -\frac{B_{\tref{thm:stan},5} \alpha}{1 - \epsilon_\alpha} (t + t_0)^{1 - \epsilon_\alpha} ) \norm{w_0}_2^2 \\
        &\textstyle \quad + B_{\tref{thm:stan},6}.
    \end{align}    
\end{theorem}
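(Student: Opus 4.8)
The plan is to reduce Theorem~\ref{thm:stan} to the general stochastic approximation result advertised in the introduction --- convergence of single-timescale stochastic approximation under fast-changing time-inhomogeneous Markovian noise --- and then verify its hypotheses for the specific recursion \eqref{eq:lin func}. Write the update as $w_{t+1} = w_t + \alpha_t H(w_t, Y_{t+1})$ where $Y_{t+1} \doteq (S_t, A_t, S_{t+1})$ and $H(w, (s,a,s')) \doteq \big(r(s,a) + \gamma \max_{a'} x(s',a')^\top w - x(s,a)^\top w\big) x(s,a)$. The noise process $\{Y_t\}$ is Markovian but time-inhomogeneous: its transition kernel at step $t$ depends on $w_t$ through the behavior policy $\mu_{w_t}$. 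The crucial structural facts I would establish are: (i) for each fixed $w$, the chain driven by $\mu_w$ has a unique stationary distribution $d_w$ (immediate from Assumption~\ref{assum:markov} and $\epsilon>0$), so we can define the mean field $h(w) \doteq \E_{Y \sim d_w}[H(w, Y)]$; (ii) $w \mapsto \mu_w$, hence $w \mapsto d_w$, hence $w \mapsto h(w)$, is globally Lipschitz --- this is exactly where the adaptive temperature $\kappa_w$ in \eqref{eq:temperature} is needed, since without the $1/\norm{w}_2$ rescaling the softmax gradient would blow up linearly in $w$; and (iii) $H(w, y)$ has at most linear growth in $w$ uniformly in $y$, because $\norm{x(s,a)}$ is bounded on the finite state-action space and the max operator is $1$-Lipschitz and non-expansive.

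**Next I would** analyze the mean field $h$. The key is a negative-drift / one-point-monotonicity estimate: there exist constants $c_1 > 0$, $c_2 \ge 0$ such that $\langle w, h(w) \rangle \le -c_1 \norm{w}_2^2 + c_2$ for all $w$, provided $\epsilon$ is small enough and $\kappa_0$ large enough. The intuition is that $h(w)$ is close to the negative gradient-like field of a projected-Bellman-type fixed-point problem; the dominant term $-X^\top D_w (I - \gamma P_{w}) X w$ (with $D_w$ the diagonal of $d_w$ and $P_w$ encoding the greedy-in-$w$ successor) contributes a strictly negative quadratic form once the behavior distribution $d_w$ is close enough to uniform (forced by small $\epsilon$) and the greedy softmax is sharp enough (forced by large $\kappa_0$), following the negativity argument of \citet{meyn2024bellmen}. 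The bounded perturbation $c_2$ absorbs the reward term and the residual error from replacing the true greedy operator by its softmax surrogate and from $d_w$ not being exactly uniform.

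**Then** I would run the standard Lyapunov recursion with $V(w) = \norm{w}_2^2$. Using $L$-smoothness of $\norm{\cdot}_2^2$, take conditional expectations to get
\begin{align}
\E[V(w_{t+1}) \mid \mathcal{F}_t] \le V(w_t) + 2\alpha_t \langle w_t, H(w_t, Y_{t+1})\rangle + L\alpha_t^2 \norm{H(w_t, Y_{t+1})}_2^2.
\end{align}
The quadratic-growth term is $O(\alpha_t^2 (V(w_t) + 1))$. The genuinely hard term is the gap $\langle w_t, \E[H(w_t, Y_{t+1}) \mid \mathcal{F}_t] - h(w_t)\rangle$ between the realized Markovian noise and the mean field: here one must handle the bias of time-inhomogeneous Markovian noise. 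The standard device is a Poisson-equation / conditioning-$\tau$-steps-back argument, but the single-timescale difficulty is that over a window of length $\tau$ the kernel itself drifts by $O(\sum_{k} \alpha_{t-k}) = O(\tau \alpha_{t-\tau})$; one controls this using the Lipschitzness of $d_w$ in $w$ together with the uniform geometric mixing (again from $\epsilon > 0$) of each frozen kernel, picking $\tau = \tau_t \asymp \log t$ so that mixing error and drift error are both $O(\alpha_t \log t)$ times a linear-in-$V$ factor. This yields, after absorbing everything,
\begin{align}
\E[V(w_{t+1})] \le (1 - c_1 \alpha_t + C \alpha_t^2 \log t)\,\E[V(w_t)] + C'(\alpha_t + \alpha_t^2 \log t).
\end{align}
For $t \ge \bar t$ the coefficient is $\le 1 - \tfrac{c_1}{2}\alpha_t$, and unrolling this scalar recursion with $\alpha_t = \alpha/(t+t_0)^{\epsilon_\alpha}$ gives precisely the two claimed regimes: when $\epsilon_\alpha = 1$ the product $\prod_{s=\bar t}^{t}(1 - \tfrac{c_1\alpha}{2(s+t_0)})$ decays polynomially as $(t+t_0)^{-c_1\alpha/2}$, yielding case (1); when $\epsilon_\alpha \in (0,1)$ the product decays like $\exp(-\tfrac{c_1 \alpha}{2(1-\epsilon_\alpha)}(t+t_0)^{1-\epsilon_\alpha})$, yielding case (2); and in both cases the inhomogeneous term sums to the constant $B_{\tref{thm:stan},3}$ (resp.\ $B_{\tref{thm:stan},6}$).

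**The main obstacle** I expect is step controlling the Markovian-noise bias in the single-timescale regime: unlike the two-timescale setting of \citet{zhang2022globaloptimalityfinitesample} where the kernel is essentially frozen on the fast timescale, here the transition function moves at the same rate as $w_t$, so the coupling argument must simultaneously track kernel drift, chain mixing, and the possibly-unbounded iterate norm. Making the three error sources compatible --- so that the net bias is genuinely $o(\alpha_t)\cdot(1+\E[V(w_t)])$ and does not overwhelm the $-c_1\alpha_t$ negative drift --- is the technical heart of the argument and is exactly what the promised general stochastic approximation lemma is designed to package. A secondary but nontrivial point is pinning down the precise smallness of $\epsilon$ and largeness of $\kappa_0$ needed for the mean-field negativity $\langle w, h(w)\rangle \le -c_1\norm{w}_2^2 + c_2$; here I would lean on the boundedness-of-iterates machinery of \citet{meyn2024bellmen}, re-deriving their drift estimate in a quantitative (rather than asymptotic a.s.) form.
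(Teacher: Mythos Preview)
Your outline matches the paper's approach closely: cast the recursion in the general SA form with $Y_{t+1}=(S_t,A_t,S_{t+1})$, verify ergodicity/Lipschitz/growth, establish the negative-drift inequality $\langle w, h(w)\rangle \leq -\beta\norm{w}_2^2 + C\norm{w}_2$ via Meyn's argument (Lemma~\ref{lem:bound inner}), invoke the general SA Theorem~\ref{thm:sa} with $L(w)=\tfrac12\norm{w}_2^2$, and telescope. You have also correctly flagged the single-timescale kernel-drift bias as the technical heart.

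However, your step (ii) --- ``$w \mapsto \mu_w$ is globally Lipschitz'' --- while true, is too weak for the coupling argument you sketch to close. In the conditioning-$\tau$-steps-back analysis, the discrepancy between the true chain $\{Y_k\}$ and the auxiliary chain frozen at $P_{w_{t-\tau}}$ accumulates terms $\norm{P_{w_k} - P_{w_{t-\tau}}}$; with only a bounded Lipschitz constant this is at most $C\norm{w_k - w_{t-\tau}} \le C'\tau\alpha_{t-\tau}(\norm{w_t}+1)$. After multiplying by $\norm{H(w_{t-\tau},\cdot)}$ and $\norm{\nabla L(w_{t-\tau})}$, each of order $\norm{w_t}$, the resulting bias term is cubic in $\norm{w_t}$, and you cannot close a recursion in $\E[\norm{w_t}_2^2]$ against $\E[\norm{w_t}_2^3]$. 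The paper's device is a strictly sharper estimate (Lemma~\ref{lem:P_Lipschitz}, packaged as Assumption~\ref{asp:a2}): the adaptive temperature actually yields
\begin{align}
\abs{\mu_{w_1}(a\mid s) - \mu_{w_2}(a\mid s)} \le \frac{C\norm{w_1-w_2}_2}{1 + \norm{w_1}_2 + \norm{w_2}_2},
\end{align}
so that $\norm{P_{w_k} - P_{w_{t-\tau}}}$ is of order $\tau\alpha_{t-\tau}$ \emph{independently} of $\norm{w_t}$, keeping the bias quadratic (Lemma~\ref{lem:bound T32} and the remark following it isolate exactly this point). The same $1/(1+\norm{w})$ factor is also what makes your chain ``$\mu_w$ Lipschitz $\Rightarrow d_w$ Lipschitz $\Rightarrow h$ Lipschitz'' go through: $h(w)$ contains $X^\top D_{\mu_w}(\gamma P_{\pi_w}-I)Xw$, so a merely bounded Lipschitz constant for $D_{\mu_w}$ would leave an uncancelled factor $\norm{w}$ (see the remark after Lemma~\ref{lem:lip_h}). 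A minor aside: your intuition that small $\epsilon$ makes the behavior ``close to uniform'' is inverted --- in \eqref{eq:mu_linear} small $\epsilon$ pushes $\mu_w$ toward the pure softmax, and it is this near-greediness (together with large $\kappa_0$) that drives the negative definiteness in Lemma~\ref{lem:negative_definite_combined}.
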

The proof of Theorem~\ref{thm:stan} is in Section~\ref{sec:proof_thm1}.
The precise constraints on $\kappa_0$ and $\epsilon$ are specified in Lemma~\ref{lem:negative_definite_combined} in Appendix. 
The constants $B_{\tref{thm:stan},3}$ and $B_{\tref{thm:stan},6}$ depend on $\gamma$, $\ns$, $\na$, $\max_{s,a}\abs{r(s,a)}$, and $\max_{s,a}\norm{x(s,a)}_2$, 
with the detailed dependency specified in Section~\ref{proof:thm stan}.
Theorem~\ref{thm:stan} confirms the main claim of the work,
with $B_{\tref{thm:stan},3}$ and $B_{\tref{thm:stan},6}$ corresponding to the bounded set.

\begin{theorem}[$L^2$ Convergence Rate of Tabular $Q$-Learning]
  \label{thm:stan_tabular}
  Let Assumptions~\tref{assum:markov} and \tref{assu lr} hold.
  Then for sufficiently large $t_0$ in $\alpha_t$, 
  there exist some constant $\bar t$ such that 
  the iterates $\qty{w_t}$ generated by~\eqref{eq:tabular q update} satisfy the following statements $\forall t \geq \bar t$. \\
  (1) When $\epsilon_\alpha = 1$,
  there exist some constants $B_{\tref{thm:stan_tabular},1}$, $B_{\tref{thm:stan_tabular},2}$, and $B_{\tref{thm:stan_tabular},3}$ such that   
  \begin{align}
    &\mathbb{E}\qty[\norm{q_t - q_*}_\infty^2] \\
    \leq &\textstyle \frac{B_{\tref{thm:stan_tabular},1}}{(t+t_0)^{B_{\tref{thm:stan_tabular},2}\alpha}} \norm{q_0 - q_*}_\infty^2+ \frac{B_{\tref{thm:stan_tabular},3}
    \ln (t+t_0)}{(t+t_0)^{\min\qty(1, B_{\tref{thm:stan_tabular},2}\alpha)}}.
\end{align}
(2) When $\epsilon_\alpha \in (0.5, 1)$, for any $\epsilon_\alpha' \in (0.5, \epsilon_\alpha)$,
there exist some constants
  $B_{\tref{thm:stan_tabular},4}$, 
  $B_{\tref{thm:stan_tabular},5}$, and
  $B_{\tref{thm:stan_tabular},6}$ such that
  \begin{align}
    &\mathbb{E}\qty[\norm{q_t - q_*}_\infty^2] \\
    &\textstyle \leq B_{\tref{thm:stan_tabular},4}\exp\qty( -\frac{B_{\tref{thm:stan_tabular},5} \alpha}{1 - \epsilon_\alpha} (t + t_0)^{1 - \epsilon_\alpha} ) \norm{q_0 - q_*}_\infty^2 \\
    &\textstyle \quad + B_{\tref{thm:stan_tabular},6}(t+t_0)^{1-2\epsilon_\alpha'}.
\end{align}
\end{theorem}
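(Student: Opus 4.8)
The plan is to recast \eqref{eq:tabular q update} as a single‑timescale stochastic approximation driven by time‑inhomogeneous Markovian noise, verify the hypotheses of our general stochastic approximation result, and read off the rate. First I would let the noise process be $Z_t \doteq (S_t, A_t, S_{t+1})$. Conditioned on $q_t$, $\qty{Z_t}$ evolves as a Markov chain whose transition kernel $P_{q_t}$ is determined by the behavior policy $\mu_{q_t}$. Because $\epsilon > 0$ forces $\mu_q(a|s) \ge \epsilon/\na$ for every $(s,a)$ and every $q$, we have $P_q \ge \epsilon Q$ entrywise, where $Q$ is the (fixed, and by Assumption~\tref{assum:markov} irreducible and aperiodic) transition matrix of the uniform‑random‑policy chain on state–action pairs; this yields both uniform geometric ergodicity of the family $\qty{P_q}$ and a minorization that forces the state–action marginal $d_{\mu_q}$ of the stationary distribution of $P_q$ to satisfy $d_{\mu_q}(s,a) \ge d_{\min}$ for a constant $d_{\min} > 0$ that does not depend on $q$. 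Unlike the linear case, no adaptive temperature is needed here, since $q \mapsto \mu_q$ is globally Lipschitz because the softmax is. Writing $G(q,(s,a,s'))$ for the vector whose $(s,a)$‑entry is $r(s,a) + \gamma\max_{a'}q(s',a') - q(s,a)$ and whose other entries vanish, the update is $q_{t+1} = q_t + \alpha_t G(q_t, Z_t)$ and its mean field is
\begin{align}
  \bar G(q) \doteq \E_{Z \sim d_{\mu_q}}\qty[G(q,Z)] = D_{\mu_q}\qty(\fT q - q), \qquad D_{\mu_q} \doteq \mathrm{diag}(d_{\mu_q}),
\end{align}
whose unique zero is $q_*$ because $D_{\mu_q}$ is a strictly positive diagonal matrix.

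The heart of the proof is the pseudo‑contraction: I would show that for every $q$ and every $\alpha \in (0,1]$,
\begin{align}
  \norm{q + \alpha\bar G(q) - q_*}_\infty \le \qty(1 - \alpha(1-\gamma)d_{\min})\,\norm{q - q_*}_\infty .
\end{align}
This follows by observing that the $(s,a)$‑entry of $q + \alpha\bar G(q) - q_*$ equals $(1 - \alpha d_{\mu_q}(s,a))(q - q_*)(s,a) + \alpha d_{\mu_q}(s,a)\qty((\fT q)(s,a) - (\fT q_*)(s,a))$, then bounding $\abs{(q - q_*)(s,a)} \le \norm{q - q_*}_\infty$ and $\abs{(\fT q)(s,a) - (\fT q_*)(s,a)} \le \gamma\norm{q - q_*}_\infty$ (the latter is the classical $\ell_\infty$‑contraction of $\fT$), and using $d_{\mu_q}(s,a) \in [d_{\min},1]$ to collapse the per‑entry coefficient to $1 - \alpha(1-\gamma)d_{\mu_q}(s,a) \le 1 - \alpha(1-\gamma)d_{\min}$. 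The reason this is only a \emph{pseudo}‑contraction — it pins $q_*$ as the reference point rather than comparing two arbitrary iterates — is that the weighting $D_{\mu_q}$ itself drifts with $q$; the uniform lower bound $d_{\min}$ is precisely what tames this drift, and it is the ingredient absent from classical asynchronous $Q$‑learning analyses, which instead decouple the coordinates via per‑entry ("count‑based") step sizes. Alongside this I would record the two remaining regularity facts the general theorem needs: an affine growth bound $\norm{G(q,z)} \le c_1 + c_2\norm{q}$, and Lipschitz continuity of $q \mapsto \bar G(q)$ and of $q \mapsto P_q$, the latter obtained by composing the Lipschitzness of $q \mapsto \mu_q$ with standard Markov‑chain perturbation bounds for the stationary map (valid thanks to the uniform ergodicity above).

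Feeding the pseudo‑contraction, the affine growth bound, and the Lipschitz/ergodicity properties into our general stochastic approximation result produces a recursive inequality of the shape
\begin{align}
  \E\qty[\norm{q_{t+1} - q_*}^2] \le \qty(1 - 2\lambda\alpha_t + O(\alpha_t^2))\,\E\qty[\norm{q_t - q_*}^2] + O(\alpha_t^2) + (\text{Markovian bias}),
\end{align}
with $\lambda = (1-\gamma)d_{\min}$, where the Markovian‑bias contribution is controlled over a mixing window of length $O(\log(1/\alpha_t))$ using the geometric ergodicity and the Lipschitzness of $P_q$; this is the step where the single‑timescale, fast‑changing kernel is handled without recourse to a two‑timescale separation. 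Taking $t_0$ (and hence $\bar t$) large enough that $2\lambda\alpha_t$ dominates the $O(\alpha_t^2)$ terms and unrolling with $\alpha_t = \alpha/(t+t_0)^{\epsilon_\alpha}$ then gives: for $\epsilon_\alpha = 1$, decay of the initial error like $(t+t_0)^{-B_{\tref{thm:stan_tabular},2}\alpha}$ together with a residual of order $\ln(t+t_0)/(t+t_0)^{\min(1,\,B_{\tref{thm:stan_tabular},2}\alpha)}$, the logarithm being the usual borderline‑summation artifact; and for $\epsilon_\alpha \in (0.5,1)$, stretched‑exponential decay of the initial error plus a residual of order $(t+t_0)^{1-2\epsilon_\alpha'}$ for any $\epsilon_\alpha' \in (0.5,\epsilon_\alpha)$, where the infinitesimal loss from $\epsilon_\alpha$ to $\epsilon_\alpha'$ is the price of bounding the bias sums and the condition $\epsilon_\alpha > 1/2$ is exactly what forces this residual to vanish. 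A final application of finite‑dimensional norm equivalence converts the bound from the smooth norm $\norm{\cdot}$ used by the general theorem to $\norm{\cdot}_\infty$, absorbing the equivalence constants into the $B_{\tref{thm:stan_tabular},\cdot}$. I expect the pseudo‑contraction step — and inside it the uniform‑in‑$q$ positivity of the stationary state–action distribution — to be the main obstacle, since it is what makes a single, non‑count‑based step size behave in expectation like a genuine $\ell_\infty$‑contraction toward $q_*$.
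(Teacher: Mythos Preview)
Your overall plan matches the paper's: cast \eqref{eq:tabular q update} as single-timescale SA, identify the mean field $\bar G(q) = D_{\mu_q}(\fT q - q)$, establish the $\ell_\infty$ pseudo-contraction toward $q_*$, and feed into Theorem~\ref{thm:sa}. The pseudo-contraction you derive is exactly Lemma~\ref{lem pseudo contract}. But two genuine gaps remain.

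First, your claim that $q \mapsto \bar G(q)$ is globally Lipschitz is false: the cross term $(D_{\mu_{q_1}} - D_{\mu_{q_2}})(\fT q_2 - q_2)$ scales like $\norm{q_1 - q_2}\cdot(1+\norm{q_2})$, which is unbounded. The paper does \emph{not} verify Assumption~\ref{asp:a2} here (that would require the $1/(1+\norm{w_1}+\norm{w_2})$ factor on $P_q$, which the tabular softmax policy lacks). Instead it verifies Assumption~\ref{asp:a2'}: it first proves $\sup_t\norm{q_t}_\infty$ is a.s.\ bounded by a deterministic constant (Lemma~\ref{lem:bound w}, a one-line induction using $\gamma<1$), and then only needs Lipschitz continuity of $h$ on that compact set (Lemma~\ref{lem:tabular_h}). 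Without this a~priori boundedness you cannot control the Markovian-bias term in the single-timescale regime.

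Second, and more fundamentally, your final ``norm equivalence'' step is too quick. Theorem~\ref{thm:sa} requires the squared norm $\norm{\cdot}^2$ to be smooth, but your pseudo-contraction lives in $\norm{\cdot}_\infty$, whose square is not smooth. Naively transporting the contraction to some smooth norm via equivalence constants $l,u$ replaces the factor $\beta_m$ by $\tfrac{u}{l}\beta_m$, which need not be $<1$, so you lose the negative drift. The paper's fix is to take $\norm{\cdot}$ to be the Moreau-envelope norm $\norm{\cdot}_m$ of $\norm{\cdot}_\infty$ (Lemma~\ref{lem property of M}): $\norm{\cdot}_m^2$ is smooth, and its equivalence constants $l_{im},u_{im}$ can be driven arbitrarily close to~$1$ by shrinking the envelope parameter $\xi$. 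This is precisely what makes $\langle \nabla L(q_t), h(q_t)\rangle \le -\bigl(1-\tfrac{u_{im}}{l_{im}}\beta_m\bigr)\norm{q_t-q_*}_m^2$ go through with a strictly positive coefficient (Lemma~\ref{lem:bound inner tabular}). Without this construction you cannot extract the $(1-2\lambda\alpha_t)$ factor from Theorem~\ref{thm:sa}, and the recursion you wrote down does not follow.
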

The proof of Theorem~\ref{thm:stan_tabular} is in Section~\ref{sec:proof_thm2}.
The constants $B_{\tref{thm:stan_tabular},4}$ and $B_{\tref{thm:stan_tabular},6}$ depends on $\epsilon_\alpha$ and $\epsilon_\alpha'$. 
Comparing Theorems~\ref{thm:stan} \&~\ref{thm:stan_tabular},
it is now clear that~\eqref{eq:tabular q update} converges to the optimal action value function while~\eqref{eq:linear q update} converges only to a bounded set,
despite the~\eqref{eq:linear q update} adopts a more complicated behavior policy with an adaptive temperature.
We note that the softmax function in~\eqref{eq:mu_linear} and~\eqref{eq:mu_tabular} can be replaced by any Lipschitz continuous function.
We use the softmax function here because a softmax policy is a commonly used approximation for the greedy policy.

\tb{Stochastic Approximation.}
We now present a general stochastic approximation result,
which will be used to prove Theorems~\ref{thm:stan} and~\ref{thm:stan_tabular}.
In particular, we consider a general iterative update rule for a weight vector $w \in \R[d]$, driven by a stochastic process $\qty{Y_t}$ evolving in a finite space $\fY$:
\begin{equation}
    \tag{SA}
w_{t+1} = w_t + \alpha_t H(w_t, Y_{t+1}),
\label{eq:Q_update_H_background}
\end{equation}
where $H: \R[d] \times \fY \to \R[d]$ defines the incremental update.
The key difficulty in our analysis results from the fact that we allow $\qty{Y_t}$ to be a time-inhomogeneous Markov chain with transition functions controlled by $\qty{w_t}$.
\begin{assumptionp}{A1}
    \label{asp:a1}
    There exists a family of parameterized transition matrices $\Lambda_{P} \doteq \qty{ P_w \in \R[\ny \times \ny] | w \in \R[d] }$ such that
    $\Pr(Y_{t+1}\mid Y_t) = P_{w_t}(Y_{t}, Y_{t+1})$.
    Furthermore, let $\bar \Lambda_P$ denote the closure of $\Lambda_P$.
    Then for any $P \in \bar \Lambda_P$,
    the time-homogeneous Markov chain induced by $P$ is 
    irreducible and aperiodic.
\end{assumptionp}
Assumption~\ref{asp:a1} is a standard uniform ergodicity condition (cf. \citet{marbach2001simulation} and Assumption 3.2 in \citet{zhang2022globaloptimalityfinitesample}). 
It is readily satisfied in our framework, and we provide its formal verification in Section~\ref{sec:proof_thm1} and \ref{sec:proof_thm2}.
Specifically, our Assumption~\tref{assum:markov}, combined with the use of an $\epsilon$-softmax behavior policy where $\epsilon > 0$, ensures that any policy $P_w \in \Lambda_{P}$ induces an irreducible and aperiodic Markov chain.
Consequently, Assumption~\ref{asp:a1} ensures that for any $w$,
the Markov chain induced by $P_w$ has a unique stationary distribution,
which we denote as $d_{\fY,w}$.
This allows us to define the expected update as $h(w) \doteq \mathbb{E}_{y \sim d_{\fY ,w}}\qty[H(w, y)]$.
One important implication of uniform ergodicity is uniform mixing,
which plays a key role in our analysis in Section~\ref{proof:thm_sa}.
We next present a few assumptions about Lipschitz continuity.

\begin{assumptionp}{A2}
    \label{asp:a3}
        There exists a constant $C_\tref{asp:a3}$ such that
        \begin{align*}
            \norm{H(w_1,y)-H(w_2,y)} \leq C_\tref{asp:a3} \norm{w_1-w_2} \qq{$\forall w_1, w_2, y$.}
        \end{align*}
    \end{assumptionp}
\begin{assumptionp}{A3}
    \label{asp:a2}
    There exists a constant $C_{\tref{asp:a2}}$ such that 
    \begin{align}
        \norm{P_{w_1}-P_{w_2}}\leq& \textstyle \frac{C_{\tref{asp:a2}}}{1+\norm{w_1}+\norm{w_2}}\norm{w_1-w_2}, \\
        \norm{h(w_1)-h(w_2)} \leq& C_\tref{asp:a2} \norm{w_1-w_2} \qq{$\forall w_1, w_2$.}
    \end{align}
\end{assumptionp}
    
\begin{assumptionp}{A3$'$}
\label{asp:a2'}
For any given $w_0$,
there exists a constant $U_\tref{asp:a2'}$ such that the iterates $\qty{w_t}$ generated by~\eqref{eq:Q_update_H_background} satisfy 
\begin{align}
  \sup_t \norm{w_t} < U_\tref{asp:a2'} \qq{a.s.}  
\end{align}
Furthermore, there exist a constant $C_{\tref{asp:a2'}}$ such that 
    \begin{align}
        \norm{P_{w_1}-P_{w_2}}\leq C_{\tref{asp:a2'}}\norm{w_1-w_2} \qq{$\forall w_1, w_2$,}
    \end{align}
and for any $w_1, w_2$ satisfying $\max\qty{\norm{w_1}, \norm{w_2}} \leq U_\tref{asp:a2'}$, 
\begin{align}
    \norm{h(w_1)-h(w_2)} \leq C_\tref{asp:a2'} \norm{w_1-w_2}.
\end{align}
\end{assumptionp}
Notably, Assumption~\tref{asp:a2} uses a stronger Lipschitz condition for $P_w$ than Assumption~\tref{asp:a2'}.
Moreover, Assumption~\tref{asp:a2} requires $h(w)$ to be Lipschitz continuous on $\R[d]$ while Assumption~\tref{asp:a2'} requires $h(w)$ to be Lipschitz continuous only on compact subsets.
The price that Assumption~\tref{asp:a2'} pays to weaken Assumption~\tref{asp:a2} is an additional assumption that the iterates are bounded by a constant almost surely.
The particular multiplier $1/(1 + \norm{w_1} + \norm{w_2})$ is inspired by Assumption 5.1(3) of \citet{konda2002thesis} for analyzing actor critic algorithms.
We will use Assumption~\ref{asp:a2} for~\eqref{eq:linear q update} and Assumption~\ref{asp:a2'} for~\eqref{eq:tabular q update}.
Let $w_\text{ref}$ be any fixed vector in $\R[d]$.
We now introduce
\begin{equation}
\textstyle L(w) = \frac{1}{2}\norm{w - w_\text{ref}}^2
\label{eq:Lyapunov}
\end{equation}
to present our general results regarding $\qty{w_t}$ in~\eqref{eq:Q_update_H_background}.
    \begin{theorem}
        \label{thm:sa}
        Let Assumptions~\tref{asp:a1},  \tref{asp:a3}, and~\tref{assu lr} hold.
        Let at least one of Assumptions \tref{asp:a2} and \tref{asp:a2'} hold.
        Then there exist some constant $\bar t$ and some function $f(t)=\fO\qty(\frac{\ln^2(t+t_0+1)}{(t+t_0)^{2\epsilon_\alpha}})$ such that the iterates $\qty{w_t}$ generated by~\eqref{eq:Q_update_H_background} satisfy $\forall t \geq \bar t$,
        \begin{align}
            &\E\qty[L(w_{t+1})] \\
            \leq& (1 + f(t))\E\qty[L(w_t)] + \alpha_t\E\qty[\langle \nabla L(w_t), h(w_t)\rangle] + f(t).
        \end{align}
    \end{theorem}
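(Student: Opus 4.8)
The plan is to decompose the single-step increment of the Lyapunov function $L(w_t) = \frac12\norm{w_t - w_\text{ref}}^2$ into a drift term driven by the mean field $h(w_t)$ plus noise terms, and to control the noise using the uniform mixing implied by Assumption~\tref{asp:a1}. First I would use smoothness of $\norm{\cdot}^2$ (recall $\norm{\cdot}$ is chosen so that its square is $L$-smooth) to write
\begin{align}
  L(w_{t+1}) &\leq L(w_t) + \alpha_t\langle \nabla L(w_t), H(w_t, Y_{t+1})\rangle + \tfrac{L}{2}\alpha_t^2\norm{H(w_t,Y_{t+1})}^2.
\end{align}
The last term is handled by the linear growth of $H$: Assumption~\tref{asp:a3} gives $\norm{H(w,y)} \leq \norm{H(w_\text{ref},y)} + C_\tref{asp:a3}\norm{w - w_\text{ref}}$, so $\E[\norm{H(w_t,Y_{t+1})}^2] \leq c_1 L(w_t) + c_2$ for constants $c_1, c_2$ (uniform in $t$ because $\fY$ is finite), and since $\alpha_t^2 = \fO((t+t_0)^{-2\epsilon_\alpha})$ this contributes a term of the form $f(t)L(w_t) + f(t)$ with the claimed order. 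It remains to show that the middle term, $\alpha_t\E[\langle \nabla L(w_t), H(w_t,Y_{t+1})\rangle]$, equals $\alpha_t\E[\langle \nabla L(w_t), h(w_t)\rangle]$ up to an error of order $f(t)(\E[L(w_t)]+1)$; this is the heart of the argument.

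For that middle term I would add and subtract $h(w_t)$ and bound $\alpha_t\E[\langle \nabla L(w_t), H(w_t,Y_{t+1}) - h(w_t)\rangle]$. The standard approach is a Poisson-equation / telescoping argument adapted to the time-inhomogeneous Markovian noise. Fix a mixing horizon $\tau = \tau(t)$ of order $\ln(t+t_0)$. Because $\Lambda_P$ is uniformly ergodic (Assumption~\tref{asp:a1}), a chain run with the frozen kernel $P_{w_t}$ for $\tau$ steps is within $\fO((t+t_0)^{-1})$ total variation of its stationary distribution $d_{\fY,w_t}$; so if we could replace the actual kernels $P_{w_{t+1}}, \dots, P_{w_{t+\tau}}$ by the frozen $P_{w_t}$, the conditional expectation of $H(w_t, Y_{t+1+\tau})$ given $\mathcal{F}_t$ would be $\approx h(w_t)$. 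The drift of the iterate over $\tau$ steps is $\fO(\sum_{i=0}^{\tau}\alpha_{t+i}(\sqrt{L(w_{t+i})}+1)) = \fO(\tau\alpha_t(\sqrt{L(w_t)}+1))$ after absorbing the recursive growth of $L$ over a window of length $\tau$ (here one uses $\alpha_t$ slowly varying and $\tau\alpha_t \to 0$), and Assumption~\tref{asp:a3} turns this into an $\fO(\tau\alpha_t)$ perturbation of $H(w_t,\cdot)$. The distance between the true and frozen transition kernels along the window is controlled by Assumption~\tref{asp:a2} (or \tref{asp:a2'}): $\norm{P_{w_{t+i}} - P_{w_t}} \leq \frac{C_\tref{asp:a2}}{1+\norm{w_{t+i}}+\norm{w_t}}\norm{w_{t+i} - w_t}$, and crucially the factor $1/(1+\norm{w_t})$ cancels the $\norm{w_t}$-growth coming from $\norm{w_{t+i}-w_t} = \fO(\tau\alpha_t\norm{w_t})$, so the kernel perturbation over the window is $\fO(\tau\alpha_t)$ uniformly in $w_t$; under Assumption~\tref{asp:a2'} one instead uses the a.s. bound $\norm{w_t} < U_\tref{asp:a2'}$ directly. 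Propagating a kernel perturbation of size $\fO(\tau\alpha_t)$ through $\tau$ steps of a uniformly mixing chain costs another factor of order $\tau$ (geometric-sum of the mixing contractions), giving $\fO(\tau^2\alpha_t)$.

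Collecting the pieces, the one-step error in replacing $H$ by $h$ is $\fO\big(\alpha_t \cdot (\tau\alpha_t + \tau^2\alpha_t + (t+t_0)^{-1})(\sqrt{\E L(w_t)}+1)\big)$; multiplying by the outer $\alpha_t$, using $\tau = \fO(\ln(t+t_0))$ and $\sqrt{\E L(w_t)} \leq \E L(w_t) + 1$, and choosing $\bar t$ large enough that $\tau\alpha_t < 1$, yields exactly a term of the form $f(t)\E[L(w_t)] + f(t)$ with $f(t) = \fO\big(\ln^2(t+t_0+1)/(t+t_0)^{2\epsilon_\alpha}\big)$, as claimed. Combining with the smoothness term from the first step completes the proof. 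The main obstacle I anticipate is making the window argument fully rigorous while keeping the dependence on $\norm{w_t}$ under control: one must carefully track how $L(w_{t+i})$ can grow over the length-$\tau$ window (a discrete Grönwall-type estimate showing $\E[L(w_{t+i})] \leq 2\E[L(w_t)] + c$ for $i \leq \tau$ once $\tau\alpha_t$ is small), and one must verify that the cancellation afforded by the $1/(1+\norm{w_1}+\norm{w_2})$ factor in Assumption~\tref{asp:a2} is genuinely enough to make every bound uniform in the (unbounded) iterate — this is precisely the place where the single-timescale setting is harder than the two-timescale setting of \citet{zhang2022globaloptimalityfinitesample} and where the design of the adaptive temperature $\kappa_w$ ultimately pays off.
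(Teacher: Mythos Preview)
Your proposal captures the right structure and every key ingredient the paper uses: smoothness of $L$, add-and-subtract $h(w_t)$, a mixing horizon $\tau=\fO(\ln(t+t_0))$, a discrete-Gronwall bound on the drift of $w$ over the window, and the cancellation between the $1/(1+\norm{w_1}+\norm{w_2})$ factor in Assumption~\tref{asp:a2} and the linear-in-$\norm{w}$ drift. The one structural difference is direction: the paper conditions \emph{backward}, freezing at $w_{t-\tau_{\alpha_t}}$ and comparing the true $Y_{t+1}$ to an auxiliary $\tilde Y_{t+1}$ that has evolved under the frozen kernel $P_{w_{t-\tau_{\alpha_t}}}$ since time $t-\tau_{\alpha_t}$. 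Concretely, the noise is split as $T_1+T_2+T_{31}+T_{32}$, where $T_1,T_2$ swap $w_t\to w_{t-\tau_{\alpha_t}}$ in $\nabla L$ and in $(H,h)$ respectively, $T_{31}$ uses uniform mixing of the frozen chain, and $T_{32}$ is precisely your kernel-perturbation-propagated-over-$\tau$-steps term. Your forward-looking sketch leaves a step unspecified---connecting $\langle\nabla L(w_t),H(w_t,Y_{t+1})\rangle$ to something involving $Y_{t+1+\tau}$ would require an explicit Poisson-equation or telescoping identity that you mention but do not write down---whereas the backward conditioning sidesteps this entirely, since given $\fF_{t-\tau_{\alpha_t}}$ the law of $\tilde Y_{t+1}$ is already within $\alpha_t$ of $d_{\fY,w_{t-\tau_{\alpha_t}}}$. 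The final error accounting and the order of $f(t)$ are the same in both approaches.
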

The proof of Theorem~\ref{thm:sa} is in Section~\ref{proof:thm_sa}.
Theorem~\ref{thm:sa} gives a recursive bound of $L(w_t)$.
We will realize $w_\text{ref}$ differently and bound $\langle \nabla L(w_t), h(w_t) \rangle$ differently for~\eqref{eq:linear q update} and~\eqref{eq:tabular q update}.

\section{Related Works}
\label{sec related}
Previous analyses of~\eqref{eq:linear q update} typically involve different modifications to the algorithm. 
The first is a target network \citep{mnih2015human},
where a slowly changing copy of $\qty{w_t}$ is stored.
This copy is called the target network and is referred to as $\qty{\bar w_t}$.
Then the TD error $\delta_t$ in~\eqref{eq:linear q update} is computed using $\max_{a'}x(S_{t+1}, a') \bar w_t$ instead of $\max_{a'}x(S_{t+1}, a') w_t$.
The second is a replay buffer \citep{lin1992self}.
After obtaining the tuple $(S_t, A_t, R_{t+1}, S_{t+1})$,
intead of applying~\eqref{eq:linear q update} directly,
they store the tuple into a buffer,
sample some previous tuple $(s, a, r, s')$ from the buffer,
and then apply the update based on the sampled tuple.
The third is projection,
where the update to $w_t$ is modified as $w_{t+1} = \Pi(w_t + \alpha_t \delta_t x(S_t, A_t))$.
Here $\Pi$ is a projection operator that projects the weights into some compact set to make sure $\qty{w_t}$ is always bounded.
The fourth is regularization.
One example is to change the update to $w_t$ as $w_{t+1} = w_t + \alpha_t \delta_t x(S_t, A_t) - \alpha_t \nabla \norm{w_t}^2_2$,
where the last term corresponds to ridge regularization.
Such regularization is also used to facilitate boundedness of $\qty{w_t}$.
Besides the algorithmic modification,
strong assumptions on the features $X$ are also sometimes made.
For example, \citet{lee2019unified} assume all column vectors of $X$ are orthogonal to each other and and each elements of $X$ is either 0 or 1.
Finally,
different assumptions on the data are also used to facilitate analysis.
For example,
\citet{lee2019unified,carvalho2020new,lim2024regularized} assume that $(S_t, A_t, R_{t+1}, S_{t+1})$ is sampled in an i.i.d. manner from a fixed distribution.
\citet{Gopalan2022Approximate} further assume that $(S_t, A_t, R_{t+1}, S_{t+1})$ is sampled from the stationary distribution of the current $\epsilon$-greedy policy,
i.e.,
not only the samples are i.i.d., but also the sampling distribution changes every time step.
\citet{melo2008analysis,chen2022performance} make strong assumptions on the behavior policy.
\citet{melo2008analysis} use a special matrix condition assuming $\Sigma_\mu - \gamma^2\Sigma_\mu^*(w)$ is positive definite for any $w$.
Here $\Sigma_\mu$ is the feature covariance matrix induced by the fixed behaivor policy and $\Sigma_\mu^*(w)$ is the feature covariance matrix induced by the greedy policy w.r.t. $w$. 
\citet{chen2022performance} assume $\gamma^2\E[\max_a(\phi(s,a)^\top w)^2] < \E[(\phi(s,a)^\top w)^2] \forall w$.
Since those assumptions are made w.r.t. all possible $w$,
for them to hold,
one typically needs to ensure that the behaivor policy $\mu$ is close to the optimal policy in certain sense.
See Section 6 of \citet{chen2022performance} for more discussion about this. 
Similar strong assumptions are also later on used for analyzing $Q$-learning with neural networks \citep{cai2023neural,xu2020finite}.
As can be seen in Table~\ref{tbl linear q works concentration},
this work is the first $L^2$ convergence rate of~\eqref{eq:linear q update} without making any algorithmic modification or strong assumptions on the behavior policy.
This setting is known to be computationally challenging.
\citet{kane2023exponential} show that finding the optimal policy is NP-hard even if $q_*$ is linear in the given features. 
This inherent difficulty highlights the importance of establishing convergence to a bounded set, as our work does, especially under minimal assumptions.
Despite that $Q$-learning with neural network is also analyzed in previous works,
those work do not supercede our work since they also need to make algorithmic modifications.
Some of them even need the Bellman completeness assumption,
i.e., for any $w$, the vector $\fT Xw$ still lies in the column space of $X$.

There is a rich literature in analyzing~\eqref{eq:tabular q update}. Early works \citep{watkins1989learning,watkins1992q,jaakkola1993convergence,tsitsiklis1994asynchronous,littman1996generalized} implicitly or explicitly rely on the use of count-based learning rates,
where the $\alpha_t$ in \eqref{eq:tabular q update} is replaced by $\alpha_{\nu(S_t, A_t, t)}$.
Here $\nu(s, a, t)$ counts the number of visits to the state action pair $(s, a)$ until time $t$.
The count-based learning rate allows them to work with a wide range of behavior policies.
Convergence rates are also later on established by \citet{szepesvari1997asymptotic,even2003learning}.
However,
to our knowledge,
such count based learning rate is rarely used by practitioners.
Recent works \citep{chen2021lyapunov,li2024q,zhang2024constant} are able to remove the count based learning rate.
They, however,
need to assume that a fixed behaivor policy is used.
By contrast, 
practitioners usually prefer an $\epsilon$-greedy policy.
In this work,
we use an $\epsilon$-softmax policy to approximate the $\epsilon$-greedy policy and establish the $L^2$ convergence rate,
without using count based learning rate.

\citet{meyn2024bellmen} is the closest to this work in terms of the results.
However,
the underlying techniques are dramatically different.
\citet{meyn2024bellmen} uses the ODE based analysis that connects the iterates with the trajectories of certain ODEs \citep{benveniste1990MP,kushner2003stochastic,borkar2009stochastic,borkar2025ode,liu2024ode}.
This work relies on bounding the norm of the iterates recursively \citep{zou2019finite,chen2021lyapunov,zhang2022globaloptimalityfinitesample}.

Previously, \citet{zhang2023convergence} demonstrate convergence rates of linear SARSA \citep{rummery1994line} to a bounded set.
Our analysis is also largely inspired by \citet{zhang2023convergence}.
However,
our analysis is more challenging in two aspects.
First, \citet{zhang2023convergence} use a projection to ensure the weights are bounded while we do not.
Second, linear SARSA is an on-policy algorithm,
but linear $Q$-learning is an off-policy algorithm.

The key technical challenge of this work lies in the time-inhomogeneous nature of the Markovian noise.
Previously,
this is often tackled in a two-timescale framework,
where the transition function of the Markovian noise is controlled by a secondary weight, say $\qty{\theta_t}$,
and $\theta_t$ evolves much slower than $w_t$ \citep{konda2000actor,wu2020finite,zhang2019provably,zhang2022globaloptimalityfinitesample}. 
This is recently improved by
\citet{olshevsky2023small,chen2023finite} to allow $\theta_t$ and $w_t$ to evolve in the same timescale.
Our analysis is more challenging in that we actually have $\theta_t \equiv w_t$.
This setting is also previously considered in \citet{zou2019finite,zhang2023convergence} but they all rely on a projection operator.
By contrast, our~\eqref{eq:Q_update_H_background} does not have any projection.

\section{Proofs of the Main Results}
\label{proof of thms}

\subsection{Proof of Theorem~\tref{thm:sa}}
\label{proof:thm_sa}
\begin{proof}
We start by recalling that we use $\norm{\cdot}$ to denote an arbitrary norm where $\norm{\cdot}^2$ is smooth.
In particular, $\frac{1}{2}\norm{\cdot}^2$ is smooth w.r.t. $\norm{\cdot}$ for some $L$.
In~\eqref{eq:smoothness}, identifying $f(w)$ as $L(w)$, $w'$ as $w_{t+1}$, and $w$ as $w_t$ yields
\begin{align}
\textstyle L(w_{t+1}) \leq& \textstyle L(w_t) + \alpha_t \langle \nabla L(w_t), H(w_t, Y_{t+1}) \rangle \\
&\textstyle + \frac{L \alpha_t^2}{2} \norm{H(w_t, Y_{t+1})}^2\\
=& L(w_t) + \alpha_t \langle \nabla L(w_t), h(w_t) \rangle \\
&\textstyle + \alpha_t \langle \nabla L(w_t), H(w_t, Y_{t+1}) - h(w_t) \rangle\\
&\textstyle + \frac{L\alpha_t^2}{2} \norm{H(w_t, Y_{t+1})}^2.
\label{eq:key_inequality}
\end{align}
We now proceed to bounding the noise term including $H(w_t, Y_{t+1}) - h(w_t)$.
To this end,
we notice that according to Lemma 1 of \citet{zhang2022globaloptimalityfinitesample},
Assumption~\ref{asp:a1} implies that the Markov chains in $\Lambda_P$ mix both geometrically and uniformly.
In other words,
there exist constants $C_0 > 0$ and $\tau \in (0,1)$, such that
\begin{equation}
    \textstyle \sup_{w, y} \sum_{y'} \abs{P_w^n(y, y') - d_{\fY, w}(y')} \leq C_0\tau^n.
    \label{eq:uniform_mixing}
\end{equation}
We then define 
\begin{equation}
    \textstyle \tau_{\alpha} \doteq \min \{n \geq 0 \mid C_0\tau^n \leq \alpha\}
\label{eq:tau_alpha}
\end{equation}
to denote the number of steps that the Markov chain needs to mix to an accuracy $\alpha$.
This allows us to decompose the noise term as
\begin{align}
\label{eq:noise_decomp}
&\textstyle\langle \nabla L(w_t), H(w_t, Y_{t+1}) - h(w_t) \rangle \\
=&\textstyle \underbrace{\langle \nabla L(w_t) - \nabla L(w_{t-\tau_{\alpha_t}}), H(w_t, Y_{t+1}) - h(w_t) \rangle}_{T_1} \\
&\textstyle +\langle \nabla L(w_{t-\tau_{\alpha_t}}), H(w_t, Y_{t+1}) - H(w_{t-\tau_{\alpha_t}}, Y_{t+1})\\
&\textstyle\quad\underbrace{+ h(w_{t-\tau_{\alpha_t}}) - h(w_t) \rangle\quad\quad\quad\quad\quad\quad\quad\quad\quad\quad}_{T_2} \\
&\textstyle + \underbrace{\langle \nabla L(w_{t-\tau_{\alpha_t}}), H(w_{t-\tau_{\alpha_t}}, Y_{t+1}) - h(w_{t-\tau_{\alpha_t}}) \rangle}_{T_3}.
\end{align}
Both $T_1$ and $T_2$ can be bounded with Lipschitz continuity.
To bound $T_3$,
we introduce an auxiliary Markov chain $\{\tilde{Y}_t\}$ akin to \citet{zou2019finite}. 
The chain $\{\tilde{Y}_t\}$ is constructed to be identical to $\{Y_t\}$ up to time step $t - \tau_{\alpha_t}$, after which it evolves independently according to the fixed transition matrix $P_{w_{t-\tau_{\alpha_t}}}$.
By contrast, $\{Y_t\}$ continues to evolve according to the changing transition matrix $P_{w_{t-\tau_{\alpha_t}}}, P_{w_{t-\tau_{\alpha_t}+1}}, \cdots$.
This choice of $\tau_{\alpha_t}$ ensures that the discrepancy between the two chains is sufficiently small.  
We now further decompose $T_3$ with this auxiliary chain as
\begin{align}
\label{eq:T3_decomp}
&\textstyle\langle \nabla L(w_{t-\tau_{\alpha_t}}), H(w_{t-\tau_{\alpha_t}}, Y_{t+1}) - h(w_{t-\tau_{\alpha_t}}) \rangle \\
= &\textstyle\underbrace{\langle \nabla L(w_{t-\tau_{\alpha_t}}), H(w_{t-\tau_{\alpha_t}},\tilde{Y}_{t+1}) - h(w_{t-\tau_{\alpha_t}}) \rangle}_{T_{31}} \\
&\textstyle + \underbrace{\langle \nabla L(w_{t-\tau_{\alpha_t}}), H(w_{t-\tau_{\alpha_t}},Y_{t+1}) - H(w_{t-\tau_{\alpha_t}},\tilde{Y}_{t+1}) \rangle}_{T_{32}}.
\end{align}
We now bound the terms one by one.
\begin{lemma}
\label{lem:bound T1}
There exists a constant $C_\tref{lem:bound T1}$ such that
\begin{align}
T_1 \leq C_\tref{lem:bound T1} \alpha_{t-\tau_{\alpha_t}, t-1} (L(w_t) + 1 ).
\end{align}
\end{lemma}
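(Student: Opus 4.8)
The plan is to exploit the quadratic form of $L$. Since $L(w) = \tfrac12\norm{w-w_\text{ref}}^2$ we have $\nabla L(w) = w - w_\text{ref}$, hence
\[
\nabla L(w_t) - \nabla L(w_{t-\tau_{\alpha_t}}) \;=\; w_t - w_{t-\tau_{\alpha_t}} \;=\; \sum_{k=t-\tau_{\alpha_t}}^{t-1}\alpha_k H(w_k,Y_{k+1}).
\]
Applying the generalized Cauchy--Schwarz inequality $\langle a,b\rangle\le\norm{a}\norm{b}_*$ to the definition of $T_1$ in~\eqref{eq:noise_decomp} gives $T_1 \le \norm{w_t - w_{t-\tau_{\alpha_t}}}\,\norm{H(w_t,Y_{t+1})-h(w_t)}_*$, so it suffices to prove two estimates: (a) a linear-growth bound $\norm{H(w_t,Y_{t+1})-h(w_t)}_* \le C\big(1+\sqrt{L(w_t)}\big)$ on the noise term; and (b) an ``iterates barely move over the window'' bound $\norm{w_t-w_{t-\tau_{\alpha_t}}} \le C'\,\alpha_{t-\tau_{\alpha_t},\,t-1}\,\big(1+\sqrt{L(w_t)}\big)$. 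Multiplying (a) and (b) and using $\big(1+\sqrt{L}\big)^2 \le 2(1+L)$ then yields the lemma with $C_{\tref{lem:bound T1}} \doteq 2CC'$.

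For (a), I would first note that Assumption~\tref{asp:a3} gives $\norm{H(w,y)} \le \norm{H(w_\text{ref},y)} + C_{\tref{asp:a3}}\norm{w-w_\text{ref}}$, and since $\fY$ is finite, $\max_{y\in\fY}\norm{H(w_\text{ref},y)}$ is a finite constant; hence $\norm{H(w,y)} \le c_1 + c_2\norm{w-w_\text{ref}}$ uniformly in $y$, and by Jensen's inequality the same bound holds for $h(w)=\E_{y\sim d_{\fY,w}}[H(w,y)]$. Since all norms on $\R[d]$ are equivalent, this gives $\norm{H(w_t,Y_{t+1})-h(w_t)}_* \le c_3\big(1+\norm{w_t-w_\text{ref}}\big) = c_3\big(1+\sqrt{2L(w_t)}\big)$, which is (a) after absorbing $\sqrt2$ into the constant. (Under Assumption~\tref{asp:a2'} in place of~\tref{asp:a2}, this step is even easier, since then $\sup_t\norm{w_t}$ is a.s.\ bounded.)

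For (b) the obstacle is that $\norm{w_t-w_{t-\tau_{\alpha_t}}} \le \sum_{k=t-\tau_{\alpha_t}}^{t-1}\alpha_k\norm{H(w_k,Y_{k+1})}$ involves the earlier iterates $w_k$, which must first be controlled in terms of $w_t$; this is the crux. I would resolve it with a discrete Gr\"onwall argument over the window $\{m,\dots,t\}$, $m\doteq t-\tau_{\alpha_t}$: writing $a_j\doteq\norm{w_j-w_m}$ and combining the linear-growth bound from (a) with $\norm{w_i-w_\text{ref}}\le\norm{w_m-w_\text{ref}}+a_i$, one obtains $a_j \le \alpha_{m,t-1}(c_1+c_2\norm{w_m-w_\text{ref}}) + c_2\sum_{i=m}^{j-1}\alpha_i a_i$, so once $t\ge\bar t$ is large enough that $c_2\alpha_{m,t-1}\le 1$, discrete Gr\"onwall yields $\max_{m\le j\le t}a_j \le e\,\alpha_{m,t-1}(c_1+c_2\norm{w_m-w_\text{ref}})$. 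Substituting $\norm{w_m-w_\text{ref}}\le\norm{w_t-w_\text{ref}}+a_t$ and further requiring $c_2e\,\alpha_{m,t-1}\le\tfrac12$ gives $\norm{w_m-w_\text{ref}}\le 2\norm{w_t-w_\text{ref}}+c_4$ for a constant $c_4$, and feeding this back into the Gr\"onwall bound for $a_t$ gives (b). It remains to guarantee $\alpha_{m,t-1}$ is this small for every $t\ge\bar t$; for that I would use that the mixing time $\tau_{\alpha_t}$ defined in~\eqref{eq:tau_alpha} grows only like $\fO(\ln(t+t_0))$ under Assumption~\tref{assu lr}, so that $\alpha_{m,t-1}\le\tau_{\alpha_t}\alpha_{t-\tau_{\alpha_t}} = \fO\!\big(\ln(t+t_0)/(t+t_0)^{\epsilon_\alpha}\big)\to 0$, and then choose $\bar t$ accordingly. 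The only genuinely delicate part of the proof is this Gr\"onwall-and-step-size bookkeeping over the mixing window; the rest is the generalized Cauchy--Schwarz inequality and the finiteness of $\fY$.
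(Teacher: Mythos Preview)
Your proposal follows essentially the same route as the paper's proof: apply the generalized Cauchy--Schwarz inequality to $T_1$, bound one factor by the linear growth of $H$ and $h$ (the paper's Lemma~\ref{lem H h linear growth}), bound the other by a discrete Gr\"onwall argument over the mixing window (the paper's Lemma~\ref{bound_wt_wt1}), then multiply and use $(1+\sqrt{L})^2 \le 2(1+L)$. Your inline derivation of (b) is exactly the content of Lemma~\ref{bound_wt_wt1}, and your use of Jensen to bound $\norm{h(w)}$ is a valid shortcut where the paper instead appeals to the Lipschitz assumption on $h$.

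There is one technical slip worth flagging. You write $\nabla L(w) = w - w_\text{ref}$, but this identity holds only when $\norm{\cdot}$ is the Euclidean norm. The paper's framework deliberately allows an arbitrary norm whose square is smooth --- in particular, the tabular analysis instantiates $\norm{\cdot}$ as the Moreau-envelope norm $\norm{\cdot}_m$, for which $\nabla L(q) = \nabla M(q-q_*) \neq q-q_*$. The correct step, which the paper uses, is to invoke $L$-smoothness (Lemma~\ref{L-smooth}) to get $\norm{\nabla L(w_t)-\nabla L(w_{t-\tau_{\alpha_t}})}_* \le L\,\norm{w_t-w_{t-\tau_{\alpha_t}}}$; this places the dual norm on the gradient difference rather than on the $H-h$ factor, but since you already pass through norm equivalence in step~(a), the final bound is unaffected. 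With that one-line fix, your argument matches the paper's.
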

    The proof is in Section \ref{proof:bound T1}.
\begin{lemma}
\label{lem:bound T2}
There exists a constant $C_\tref{lem:bound T2}$ such that
\begin{align}
T_2 \leq C_\tref{lem:bound T2}  \alpha_{t-\tau_{\alpha_t}, t-1} (L(w_t) + 1 ).
\end{align}
\end{lemma}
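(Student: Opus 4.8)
The plan is to bound $T_2$ by Hölder's inequality $\langle a,b\rangle \le \norm{a}_*\norm{b}$ and then control the two resulting factors separately, using the Lipschitz assumptions and a short‑window drift estimate. Write $\Delta_t \doteq w_t - w_{t-\tau_{\alpha_t}}$ and let $\alpha_{t-\tau_{\alpha_t},t-1} = \sum_{i=t-\tau_{\alpha_t}}^{t-1}\alpha_i$ denote the cumulative step size over the mixing window. First I would apply Hölder with the dual‑norm pairing to get
\begin{align}
T_2 \le \norm{\nabla L(w_{t-\tau_{\alpha_t}})}_* \qty( \norm{H(w_t,Y_{t+1}) - H(w_{t-\tau_{\alpha_t}},Y_{t+1})} + \norm{h(w_{t-\tau_{\alpha_t}}) - h(w_t)} ).
\end{align}
Assumption~\tref{asp:a3} bounds the $H$-difference by $C_{\tref{asp:a3}}\norm{\Delta_t}$; the $h$-difference is bounded by a constant multiple of $\norm{\Delta_t}$ using the global Lipschitz bound of Assumption~\tref{asp:a2}, or --- when we are operating under Assumption~\tref{asp:a2'} instead --- using its compact‑set Lipschitz bound together with the a.s. bound $\sup_t\norm{w_t}<U_{\tref{asp:a2'}}$, so in that case the whole argument is carried out pathwise on that event. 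Since $\nabla L(w)$ is the duality map of $w-w_\text{ref}$, we have $\norm{\nabla L(w_{t-\tau_{\alpha_t}})}_* = \norm{w_{t-\tau_{\alpha_t}}-w_\text{ref}} \le \norm{w_{t-\tau_{\alpha_t}}}+\norm{w_\text{ref}}$. Thus the proof reduces to establishing, for all $t$ past some threshold $\bar t$, that (i) $\norm{\Delta_t}\le C\,\alpha_{t-\tau_{\alpha_t},t-1}(\norm{w_t}+1)$ and (ii) $\norm{w_{t-\tau_{\alpha_t}}}\le C(\norm{w_t}+1)$.

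Next I would obtain (i) and (ii) together from a drift estimate over the window $\{t-\tau_{\alpha_t},\dots,t\}$. Telescoping \eqref{eq:Q_update_H_background} gives $\norm{w_{i+1}-w_i}\le\alpha_i\norm{H(w_i,Y_{i+1})}$, and Assumption~\tref{asp:a3} applied with reference point $0$ gives $\norm{H(w_i,Y_{i+1})}\le C_{\tref{asp:a3}}\norm{w_i}+\sup_{y}\norm{H(0,y)}\le C(\norm{w_i}+1)$, where the supremum is finite since $\fY$ is finite. Summing over $i$ in the window and setting $M\doteq\max_{t-\tau_{\alpha_t}\le i\le t}\norm{w_i}$ yields the discrete Grönwall inequality $M\le\norm{w_t}+C\,\alpha_{t-\tau_{\alpha_t},t-1}(M+1)$; for $t$ large enough that $\alpha_{t-\tau_{\alpha_t},t-1}\le\frac{1}{2C}$ this forces $M\le 2\norm{w_t}+1$, which is (ii). Feeding this back into $\norm{\Delta_t}=\norm{\sum_{i=t-\tau_{\alpha_t}}^{t-1}\alpha_i H(w_i,Y_{i+1})}\le C\,\alpha_{t-\tau_{\alpha_t},t-1}(M+1)$ gives (i). Combining the three bounds, $T_2\le C(\norm{w_t}+1)\cdot\alpha_{t-\tau_{\alpha_t},t-1}(\norm{w_t}+1)=C\,\alpha_{t-\tau_{\alpha_t},t-1}(\norm{w_t}+1)^2$, and since $w_\text{ref}$ is fixed, $(\norm{w_t}+1)^2\le C'(\frac12\norm{w_t-w_\text{ref}}^2+1)=C'(L(w_t)+1)$, which yields the claimed bound $T_2\le C_{\tref{lem:bound T2}}\,\alpha_{t-\tau_{\alpha_t},t-1}(L(w_t)+1)$.

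The existence of a suitable $\bar t$ --- equivalently, the fact that $\alpha_{t-\tau_{\alpha_t},t-1}\to 0$ --- rests on the estimate $\tau_\alpha=\fO(\ln(1/\alpha))$ coming from \eqref{eq:uniform_mixing} and \eqref{eq:tau_alpha}, together with the polynomially decaying step sizes of Assumption~\tref{assu lr}: the window has length only logarithmic in $t$ while each $\alpha_i$ is $\fO((t+t_0)^{-\epsilon_\alpha})$, so the cumulative step size over the window vanishes. I expect this window‑drift control --- showing that inside the mixing window the iterate norm stays within a constant multiple of $\norm{w_t}$ --- to be the main obstacle, since it is the one place where the single‑timescale nature of the problem (the noise transition function $P_{w_t}$ changing as fast as $w_t$) actually matters; everything else is bookkeeping with Hölder's inequality and the Lipschitz assumptions. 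This is exactly the mechanism that also controls $T_1$ in Lemma~\tref{lem:bound T1}, so I would isolate the window‑drift estimate as a shared auxiliary lemma and let the bounds on $T_1$ and $T_2$ run in parallel.
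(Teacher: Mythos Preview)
Your proposal is correct and matches the paper's proof essentially step for step: Hölder's inequality, Lipschitz control of the $H$- and $h$-differences via Assumptions~\tref{asp:a3} and \tref{asp:a2}/\tref{asp:a2'}, and a short-window drift estimate for $\norm{\Delta_t}$ and $\norm{w_{t-\tau_{\alpha_t}}}$ --- the paper factors the latter out as Lemma~\tref{bound_wt_wt1}, exactly as you suggest in your final paragraph. The only cosmetic difference is that you invoke the duality-map identity $\norm{\nabla L(w)}_* = \norm{w-w_\text{ref}}$ directly, whereas the paper obtains the (slightly weaker) bound $\norm{\nabla L(w)}_* \leq L\norm{w-w_\text{ref}}$ via smoothness and $\nabla L(w_\text{ref})=0$.
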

The proof is in Section \ref{proof:bound T2}.
\begin{lemma}
    \label{lem:bound T31}
    There exists a constant $C_\tref{lem:bound T31}$ such that
    \begin{align}
        \E\qty[T_{31}] \leq C_\tref{lem:bound T31} \alpha_{t} (\E[L(w_t)] + 1).
    \end{align}
\end{lemma}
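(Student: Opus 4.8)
The plan is to exploit the defining property of the auxiliary chain $\{\tilde Y_t\}$: conditioned on the history $\mathcal{F}_{t-\tau_{\alpha_t}}$ up to time $t-\tau_{\alpha_t}$, the weight $w_{t-\tau_{\alpha_t}}$ is fixed, and $\tilde Y_{t+1}$ is the state, after exactly $\tau_{\alpha_t}+1$ transitions of the \emph{time-homogeneous} chain with kernel $P_{w_{t-\tau_{\alpha_t}}}$, started from $Y_{t-\tau_{\alpha_t}}$. Hence the conditional law of $\tilde Y_{t+1}$ given $\mathcal{F}_{t-\tau_{\alpha_t}}$ is $P_{w_{t-\tau_{\alpha_t}}}^{\tau_{\alpha_t}+1}(Y_{t-\tau_{\alpha_t}}, \cdot)$, which by the uniform geometric mixing bound~\eqref{eq:uniform_mixing} and the very definition of $\tau_{\alpha_t}$ in~\eqref{eq:tau_alpha} is within total variation distance $C_0\tau^{\tau_{\alpha_t}} \le \alpha_t$ of the stationary distribution $d_{\fY, w_{t-\tau_{\alpha_t}}}$. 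First I would write
\begin{align}
\E\qty[T_{31} \mid \mathcal{F}_{t-\tau_{\alpha_t}}] = \big\langle \nabla L(w_{t-\tau_{\alpha_t}}),\ \E\qty[H(w_{t-\tau_{\alpha_t}}, \tilde Y_{t+1}) \mid \mathcal{F}_{t-\tau_{\alpha_t}}] - h(w_{t-\tau_{\alpha_t}}) \big\rangle,
\end{align}
and then bound the inner expectation minus $h(w_{t-\tau_{\alpha_t}})$ in norm by $\sup_{y}\norm{H(w_{t-\tau_{\alpha_t}}, y)}$ times the total variation distance $\alpha_t$, using Cauchy--Schwarz with the dual norm $\norm{\cdot}_*$ for the inner product.

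The next step is to control the two scalar factors. For $\norm{\nabla L(w_{t-\tau_{\alpha_t}})}_*$: since $L(w) = \tfrac12\norm{w-w_{\mathrm{ref}}}^2$, we have $\nabla L(w) = $ (a vector of norm $\le \norm{w - w_{\mathrm{ref}}}$ up to norm-equivalence constants), so $\norm{\nabla L(w_{t-\tau_{\alpha_t}})}_* \lesssim \sqrt{L(w_{t-\tau_{\alpha_t}})} + 1$. For $\sup_y \norm{H(w_{t-\tau_{\alpha_t}}, y)}$: by Assumption~\tref{asp:a3} (Lipschitzness of $H$ in $w$), $\norm{H(w,y)} \le \norm{H(0,y)} + C_{\tref{asp:a3}}\norm{w} \lesssim 1 + \norm{w_{t-\tau_{\alpha_t}}} \lesssim 1 + \sqrt{L(w_{t-\tau_{\alpha_t}})}$. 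Multiplying, $\E[T_{31}\mid\mathcal{F}_{t-\tau_{\alpha_t}}] \lesssim \alpha_t\,(L(w_{t-\tau_{\alpha_t}}) + 1)$. Taking total expectation gives a bound in terms of $\E[L(w_{t-\tau_{\alpha_t}})]$, not $\E[L(w_t)]$; to convert, I would invoke a standard ``slow-drift'' estimate — between times $t-\tau_{\alpha_t}$ and $t$ the iterate changes by at most $\sum_{i=t-\tau_{\alpha_t}}^{t-1}\alpha_i \norm{H(w_i, Y_{i+1})}$, and since $\tau_{\alpha_t} = \fO(\ln(1/\alpha_t)) = \fO(\ln(t+t_0))$ while $\alpha_i$ is summably small over this window, one gets $\E[L(w_{t-\tau_{\alpha_t}})] \le C\,(\E[L(w_t)] + 1)$ for $t$ large enough (this is exactly the kind of bound already used implicitly in Lemmas~\ref{lem:bound T1} and~\ref{lem:bound T2} via the $\alpha_{t-\tau_{\alpha_t}, t-1}$ factor). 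Absorbing all constants yields $\E[T_{31}] \le C_{\tref{lem:bound T31}}\,\alpha_t\,(\E[L(w_t)] + 1)$.

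The main obstacle I anticipate is the rigorous handling of $\tau_{\alpha_t}$ inside the conditioning and the slow-drift comparison: $\tau_{\alpha_t}$ is itself time-varying, and one must check that for $t \ge \bar t$ the mixing window $\tau_{\alpha_t}$ is small relative to the current time scale so that $\alpha_{t-\tau_{\alpha_t}} \asymp \alpha_t$ and the product $\tau_{\alpha_t}\cdot\alpha_{t-\tau_{\alpha_t}} = o(1)$; this requires the learning-rate Assumption~\tref{assu lr} and the logarithmic growth of $\tau_{\alpha_t}$. The other delicate point is that the auxiliary chain is only defined on the original probability space through a coupling, so I must be careful that $\E[H(w_{t-\tau_{\alpha_t}}, \tilde Y_{t+1})\mid \mathcal{F}_{t-\tau_{\alpha_t}}]$ is genuinely $P^{\tau_{\alpha_t}+1}_{w_{t-\tau_{\alpha_t}}}$ averaged against $H(w_{t-\tau_{\alpha_t}}, \cdot)$ and not contaminated by the later randomness used to drive $\{Y_t\}$; this is precisely why the construction freezes the kernel at time $t-\tau_{\alpha_t}$. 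Everything else is Cauchy--Schwarz, norm equivalence, and Assumptions~\tref{asp:a3} and~\tref{asp:a1}.
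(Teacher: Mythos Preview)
Your proposal is correct and follows essentially the same approach as the paper: condition on $\mathcal{F}_{t-\tau_{\alpha_t}}$, use the uniform mixing bound~\eqref{eq:uniform_mixing} together with the definition of $\tau_{\alpha_t}$ to get the factor $\alpha_t$, bound $\norm{\nabla L(w_{t-\tau_{\alpha_t}})}_*$ and $\sup_y\norm{H(w_{t-\tau_{\alpha_t}},y)}$ via smoothness and Assumption~\ref{asp:a3}, and then use the slow-drift estimate (Lemma~\ref{bound_wt_wt1}) to pass from $w_{t-\tau_{\alpha_t}}$ to $w_t$. The only cosmetic difference is that the paper performs the drift conversion pathwise (writing $\norm{w_{t-\tau_{\alpha_t}}}\le C(\norm{w_t-w_{\mathrm{ref}}}+1)$ via Lemma~\ref{bound_wt_wt1} before taking expectations) rather than first obtaining a bound in $\E[L(w_{t-\tau_{\alpha_t}})]$ and then converting, but this is the same ingredient applied at a different stage.
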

The proof is in Section \ref{proof:bound T31}.
\begin{lemma}
    \label{lem:bound T32}
    There exists a constant $C_\tref{lem:bound T32}$ such that
    \begin{align}
        \E\qty[T_{32}] \leq C_\tref{lem:bound T32}\alpha_{t-\tau_{\alpha_t}, t-1} \ln(t+t_0+1)(\E\qty[L(w_t)] + 1).
    \end{align}
\end{lemma}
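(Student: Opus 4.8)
The plan is to exploit the coupling construction: condition on the $\sigma$-algebra $\fF_{t-\tau_{\alpha_t}}$ generated by the history up to the split time $t-\tau_{\alpha_t}$. On this event $Y_{t-\tau_{\alpha_t}} = \tilde Y_{t-\tau_{\alpha_t}}$ and $w_{t-\tau_{\alpha_t}}$ is measurable, so $\E[T_{32}\mid\fF_{t-\tau_{\alpha_t}}]$ is a deterministic inner product against the gap between the conditional laws of $Y_{t+1}$ and $\tilde Y_{t+1}$ over the finite space $\fY$. Writing $p(y)\doteq\Pr(Y_{t+1}=y\mid\fF_{t-\tau_{\alpha_t}})$ and $\tilde p(y)\doteq\Pr(\tilde Y_{t+1}=y\mid\fF_{t-\tau_{\alpha_t}})$, Hölder's inequality gives
\begin{align}
  \E[T_{32}\mid\fF_{t-\tau_{\alpha_t}}] \le \norm{\nabla L(w_{t-\tau_{\alpha_t}})}_*\,\max_{y\in\fY}\norm{H(w_{t-\tau_{\alpha_t}},y)}\sum_{y\in\fY}\abs{p(y)-\tilde p(y)}.
\end{align}
From \eqref{eq:Lyapunov}, $\norm{\nabla L(w_{t-\tau_{\alpha_t}})}_* = \norm{w_{t-\tau_{\alpha_t}}-w_\text{ref}}_*\le C(\norm{w_{t-\tau_{\alpha_t}}}+1)$ by norm equivalence; and by Assumption~\tref{asp:a3} plus the finiteness of $\fY$ (so $\max_y\norm{H(w_\text{ref},y)}$ is a constant), $\max_y\norm{H(w_{t-\tau_{\alpha_t}},y)}\le C(\norm{w_{t-\tau_{\alpha_t}}}+1)$. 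Hence the prefactor is $\fO(L(w_{t-\tau_{\alpha_t}})+1)$, and the whole problem reduces to bounding the total-variation gap $\sum_y\abs{p(y)-\tilde p(y)}$.

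For the gap I would telescope one step at a time along the window $[t-\tau_{\alpha_t},t]$: since the two chains agree at the split time and the true chain uses $P_{w_k}$ at step $k$ whereas the auxiliary chain always uses $P_{w_{t-\tau_{\alpha_t}}}$,
\begin{align}
  \sum_{y}\abs{p(y)-\tilde p(y)} \le \sum_{k=t-\tau_{\alpha_t}}^{t-1}\E\big[\,\norm{P_{w_k}-P_{w_{t-\tau_{\alpha_t}}}}\,\big|\,\fF_{t-\tau_{\alpha_t}}\big].
\end{align}
In the linear setting I invoke the sharp Lipschitz bound of Assumption~\tref{asp:a2}, $\norm{P_{w_k}-P_{w_{t-\tau_{\alpha_t}}}}\le\frac{C_{\tref{asp:a2}}}{1+\norm{w_k}+\norm{w_{t-\tau_{\alpha_t}}}}\norm{w_k-w_{t-\tau_{\alpha_t}}}$ (in the tabular setting I use instead the global Lipschitz bound of Assumption~\tref{asp:a2'} together with its a.s.\ bound $\sup_t\norm{w_t}<U_\tref{asp:a2'}$, which makes the step below immediate). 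Over the short window, $\norm{w_k-w_{t-\tau_{\alpha_t}}}\le\sum_{j=t-\tau_{\alpha_t}}^{k-1}\alpha_j\norm{H(w_j,Y_{j+1})}\le C\sum_{j}\alpha_j(\norm{w_j}+1)$ by the linear growth of $H$ (Assumption~\tref{asp:a3} plus boundedness of $H(w_\text{ref},\cdot)$); a short-window Grönwall estimate — legitimate once $t_0$ is large, using $\tau_{\alpha_t}=\fO(\ln(t+t_0+1))$ so that $\tau_{\alpha_t}\alpha_{t-\tau_{\alpha_t},t-1}$ is small — then yields $\norm{w_j}\le C(\norm{w_{t-\tau_{\alpha_t}}}+1)$ for all $j$ in the window, hence $\norm{w_k-w_{t-\tau_{\alpha_t}}}\le C(\norm{w_{t-\tau_{\alpha_t}}}+1)\alpha_{t-\tau_{\alpha_t},t-1}$. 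The key cancellation is that the denominator $1+\norm{w_k}+\norm{w_{t-\tau_{\alpha_t}}}$ absorbs the factor $\norm{w_{t-\tau_{\alpha_t}}}+1$, leaving each summand at most $C\alpha_{t-\tau_{\alpha_t},t-1}$; summing the $\tau_{\alpha_t}$ terms gives $\sum_y\abs{p(y)-\tilde p(y)}\le C\tau_{\alpha_t}\alpha_{t-\tau_{\alpha_t},t-1}$.

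Collecting the pieces, the definition \eqref{eq:tau_alpha} of $\tau_{\alpha_t}$ and Assumption~\tref{assu lr} give $\tau_{\alpha_t}=\fO(\ln(1/\alpha_t))=\fO(\ln(t+t_0+1))$, so the total-variation gap is $\fO(\ln(t+t_0+1)\,\alpha_{t-\tau_{\alpha_t},t-1})$; multiplying by the prefactor $\fO(L(w_{t-\tau_{\alpha_t}})+1)$ and taking total expectation gives $\E[T_{32}]\le C\ln(t+t_0+1)\,\alpha_{t-\tau_{\alpha_t},t-1}(\E[L(w_{t-\tau_{\alpha_t}})]+1)$. The last step is to trade $w_{t-\tau_{\alpha_t}}$ for $w_t$ via the auxiliary estimate $\E[L(w_{t-\tau_{\alpha_t}})]\le C(\E[L(w_t)]+1)$, which says the Lyapunov value moves by at most a constant factor over a window of length $\tau_{\alpha_t}$ (again from the linear growth of $H$ and the smallness of $\tau_{\alpha_t}\alpha_{t-\tau_{\alpha_t},t-1}$); this yields the claim. \textbf{Main obstacle:} controlling the total-variation gap while the iterates are a priori unbounded — naively the prefactor is quadratic and $\norm{w_k-w_{t-\tau_{\alpha_t}}}$ is linear in $\norm{w_{t-\tau_{\alpha_t}}}$, so without the $1/(1+\norm{w_1}+\norm{w_2})$ weighting of Assumption~\tref{asp:a2} one would pick up an uncontrolled cubic power of $\norm{w_{t-\tau_{\alpha_t}}}$; making the short-window Grönwall argument uniform in $t$ (so the constants never depend on $t$) is the technical heart, and is exactly where ``$t_0$ sufficiently large'' enters. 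The tabular case is simpler because Assumption~\tref{asp:a2'} hands us the a.s.\ boundedness of the iterates directly.
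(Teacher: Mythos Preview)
Your proposal is correct and follows essentially the same route as the paper's proof (Section~\ref{proof:bound T32} together with the auxiliary Lemma~\ref{lem:diff_Y}): condition on $\fF_{t-\tau_{\alpha_t}}$, bound $T_{32}$ by $\norm{\nabla L(w_{t-\tau_{\alpha_t}})}_*\cdot\max_y\norm{H(w_{t-\tau_{\alpha_t}},y)}\cdot\text{(TV gap)}$, telescope the total-variation gap one transition at a time, and use the $1/(1+\norm{w_1}+\norm{w_2})$ weighting of Assumption~\tref{asp:a2} (resp.\ the a.s.\ bound of Assumption~\tref{asp:a2'}) together with the short-window Gr\"onwall estimate (Lemma~\ref{bound_wt_wt1}) to make each summand $\fO(\alpha_{t-\tau_{\alpha_t},t-1})$, then sum the $\tau_{\alpha_t}=\fO(\ln(t+t_0+1))$ terms. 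The only cosmetic difference is that the paper converts from $w_{t-\tau_{\alpha_t}}$ to $w_t$ inside the argument (via~\eqref{eq:nablaL} and Lemma~\ref{bound_wt_wt1}) rather than at the very end as you do.
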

The proof is in Section \ref{proof:bound T32}.
\begin{remark}
    We prove Lemma~\ref{lem:bound T32} considering Assumption~\tref{asp:a2} or \tref{asp:a2'} in two cases.
    Under Assumption \tref{asp:a2'}, our proof is similar to those in \citet{zou2019finite,zhang2022globaloptimalityfinitesample}.
    The technical novelty here lies in the proof under Assumption~\tref{asp:a2}.
    The proof under Assumption~\tref{asp:a2} involves bounding the term $\norm{P_{w_t} - P_{w_{t-\alpha_t}}}$.
    In \citet{zou2019finite}, a similar term is bounded by $\norm{w_t - w_{t-\alpha_t}}$ via the standard Lipschitz continuity of $P_w$ (cf. Assumption~\ref{asp:a2'}).
    \citet{zou2019finite} further rely on a projection operator to bound $\norm{w_t - w_{t-\alpha_t}}$ directly.
    However,~\eqref{eq:Q_update_H_background} does not have such a projection operator.
    In \citet{zhang2022globaloptimalityfinitesample},
    a similar term is bounded under the assumption that the transition matrix is controlled by another set of weights $\qty{\theta_t}$,
    which involve much slower than $\qty{w_t}$.
    Their bound is made possible essentially because of this two-timescale setup.
    However,~\eqref{eq:Q_update_H_background} only has a single timescale where the transition matrix evolves as fast as $\qty{w_t}$.
    We instead use a stronger form of Lipschitz continuity in Assumption~\ref{asp:a2}.
\end{remark}
Assembling the bounds in the above lemmas to \eqref{eq:noise_decomp} and further to \eqref{eq:key_inequality}, we complete the proof of Theorem~\tref{thm:sa}. 
The detailed steps are presented in Section~\ref{proof:sa}.
\end{proof}

In the following sections, 
we first map the general update~\eqref{eq:Q_update_H_background} to~\eqref{eq:linear q update} and~\eqref{eq:tabular q update} 
by defining 
$H(w,y)$, $h(w)$, $L(w)$, and the norm 
$\norm{\cdot}$ properly. 
We then bound the remaining term $\langle \nabla L(w_t), h(w_t)\rangle$ in Theorem~\ref{thm:sa} separately to complete the proof.

\subsection{Proof of Theorem~\tref{thm:stan}}
\label{sec:proof_thm1}
\begin{proof}
    We first rewrite~\eqref{eq:linear q update} in the form of~\eqref{eq:Q_update_H_background}.
    To this end,
    we define $Y_{t+1} \doteq (S_t, A_t, S_{t+1})$,
    which evolves in a finite space
    \begin{align}
        \fY \doteq \qty{(s \in \fS, a \in \fA, s' \in \fS) \mid p(s'|s, a) > 0}.
    \end{align} 
We further define
\begin{align}
\label{eq:H_function}
    &\textstyle H(w,y) \\ 
    \doteq&\textstyle (r(s,a) + \gamma \max_{a'} x(s',a')^\top w - x(s,a)^\top w)x(s,a).
\end{align}
where we have used $y \doteq (s, a, s')$.
We now proceed to verify the assumptions of Theorem~\ref{thm:sa}.
We identify the norm used in Theorem~\ref{thm:sa} as the $\ell_2$ norm $\norm{\cdot}_2$.
For Assumption~\ref{asp:a1},
we define $P_w$ as
\begin{align}
    &P_w[(s_0, a_0, s_0'), (s_1, a_1, s_1')] \\
    \doteq& \mu_w(a_1 | s_1) p(s_1'|s_1, a_1)\mathbb{I}_{s_0' = s_1},
\end{align}
where $\mathbb{I}$ is the indicator funciton.
Then it is easy to see that
\begin{align}
    \Pr(Y_{t+1} | Y_t) =& \mu_{w_t}(A_t | S_t)p(S_{t+1}|S_t, A_t) \\
    =& P_{w_t}[(S_{t-1}, A_{t-1}, S_t), (S_t, A_t, S_{t+1})] \\
    =& P_{w_t}[Y_t, Y_{t+1}].
\end{align}
Thanks to Assumption~\ref{assum:markov} and the fact $\epsilon > 0$ in~\eqref{eq:mu_linear},
it is easy to see that for any $P \in \bar \Lambda_P$, $P$ induces an irreducible and aperiodic chain in $\fY$.
Assumption~\ref{asp:a1} is then verified.
Assumption~\ref{asp:a3} trivially holds according to the definition of $H(w, y)$ in~\eqref{eq:H_function} since $\max$ is Lipschitz and $\fY$ is finite.

For Assumption~\ref{asp:a2},
the first half is trivially implied by the following lemma.
\begin{lemma}
\label{lem:P_Lipschitz}
   There exists a constant $C_\tref{lem:P_Lipschitz}$ such that 
    \begin{align}
        \textstyle \abs{\mu_{w_1}(a|s) - \mu_{w_2}(a|s)} \leq \frac{C_\tref{lem:P_Lipschitz}\norm{w_1-w_2}_2}{\norm{w_1}_2 + \norm{w_2}_2 +1} \,\, \forall w_1, w_2, s, a.
    \end{align}
\end{lemma}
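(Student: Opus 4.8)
The plan is to bound the difference $|\mu_{w_1}(a|s) - \mu_{w_2}(a|s)|$ by splitting it into two pieces, corresponding to the two ways $w$ enters the softmax expression in~\eqref{eq:mu_linear}: through the temperature $\kappa_w$ and through the logits $x(s,a)^\top w$. Since the $\epsilon/\na$ term is constant, it suffices to bound the Lipschitz constant of the map $w \mapsto \sigma_{s}(w)_a \doteq \frac{\exp(\kappa_w x(s,a)^\top w)}{\sum_b \exp(\kappa_w x(s,b)^\top w)}$. The softmax function is $1$-Lipschitz (in an appropriate norm) as a function of its logit vector, so the whole game reduces to controlling how much the logit vector $z(w) \doteq (\kappa_w x(s,b)^\top w)_{b \in \fA}$ changes: I would show $\norm{z(w_1) - z(w_2)}_\infty \leq \frac{C \norm{w_1 - w_2}_2}{1 + \norm{w_1}_2 + \norm{w_2}_2}$ for a suitable constant $C$ depending only on $\kappa_0$ and $\max_{s,a}\norm{x(s,a)}_2$, and then the claimed bound follows.

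For the logit bound, fix a state-action pair $(s,b)$ and write $g(w) \doteq \kappa_w x(s,b)^\top w$. The key observation is the explicit form of $\kappa_w$ in~\eqref{eq:temperature}: when $\norm{w}_2 \geq 1$ we have $\kappa_w x(s,b)^\top w = \kappa_0 \frac{x(s,b)^\top w}{\norm{w}_2}$, so $|g(w)| \leq \kappa_0 \norm{x(s,b)}_2$ is uniformly bounded, and when $\norm{w}_2 < 1$ we have $g(w) = \kappa_0 x(s,b)^\top w$, again bounded by $\kappa_0 \norm{x(s,b)}_2$. So $g$ is a bounded function; the point is that it is also Lipschitz with a constant that \emph{decays like $1/\norm{w}_2$} for large $w$. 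On the region $\norm{w}_2 < 1$, $g$ is linear with gradient $\kappa_0 x(s,b)$, so it is $\kappa_0\norm{x(s,b)}_2$-Lipschitz. On the region $\norm{w}_2 \geq 1$, $\nabla g(w) = \kappa_0\qty(\frac{x(s,b)}{\norm{w}_2} - \frac{(x(s,b)^\top w)\, w}{\norm{w}_2^3})$, whose norm is at most $\frac{2\kappa_0 \norm{x(s,b)}_2}{\norm{w}_2}$. A standard mean-value / path-integral argument along the segment from $w_2$ to $w_1$ (taking care of the case where the segment crosses the unit sphere, which only helps since the Lipschitz constant is continuous across the boundary) then yields $|g(w_1) - g(w_2)| \leq \frac{C' \norm{w_1 - w_2}_2}{1 + \max(\norm{w_1}_2, \norm{w_2}_2)} \leq \frac{2C'\norm{w_1-w_2}_2}{1 + \norm{w_1}_2 + \norm{w_2}_2}$, using that $1 + \max(\norm{w_1}_2,\norm{w_2}_2) \geq \tfrac12(1 + \norm{w_1}_2 + \norm{w_2}_2)$.

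The main obstacle is the case analysis around the threshold $\norm{w}_2 = 1$ in~\eqref{eq:temperature}: one must verify that the piecewise definition does not destroy Lipschitz continuity (it does not, since $\kappa_0/\norm{w}_2 = \kappa_0$ when $\norm{w}_2 = 1$, so $\kappa_w$ is continuous, and the one-sided Lipschitz bounds match up), and one must handle segments joining a point with $\norm{w_1}_2 < 1$ to a point with $\norm{w_2}_2 > 1$ by breaking the segment at the sphere crossing and applying the two regional bounds separately. The rest — passing from the $\ell_\infty$ bound on the logit vector to the softmax output via its $1$-Lipschitz property, and then to $\mu_w$ by multiplying by $(1-\epsilon) \leq 1$ — is routine. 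Collecting constants, one sets $C_\tref{lem:P_Lipschitz}$ in terms of $\kappa_0$ and $\max_{s,a}\norm{x(s,a)}_2$, completing the proof.
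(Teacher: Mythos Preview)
Your overall strategy --- reduce to bounding the logit increment $|g(w_1)-g(w_2)|$ with $g(w)=\kappa_w x(s,b)^\top w$, then pass through the $1$-Lipschitz softmax --- is exactly what the paper does. The gap is in the ``standard mean-value / path-integral'' step. The gradient bound $\norm{\nabla g(w)}_2\le \frac{2\kappa_0\norm{x(s,b)}_2}{\norm{w}_2}$ is correct on $\{\norm{w}_2\ge 1\}$, but integrating it along the straight segment does \emph{not} yield the claimed $\frac{C'\norm{w_1-w_2}_2}{1+\max(\norm{w_1}_2,\norm{w_2}_2)}$, because the segment may pass through points of much smaller norm than either endpoint. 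Concretely, take $w_1=(R,1)$, $w_2=(-R,1)$: the segment stays in $\{\norm{w}_2\ge 1\}$, yet $\int_0^1 \norm{\nabla g(w_2+t(w_1-w_2))}_2\,dt$ is of order $\frac{\ln R}{R}$, giving only $|g(w_1)-g(w_2)|\lesssim \ln R$, whereas you need a bound that is $O(1)$. Crossing the unit sphere does not ``only help'': it means the local Lipschitz constant is capped at $\kappa_0 C_x$, not that it is $\le C'/\max(\norm{w_1}_2,\norm{w_2}_2)$.

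There are two clean fixes. The paper's route is purely algebraic: in the case $\norm{w_1}_2,\norm{w_2}_2\ge 1$ it writes
\begin{align}
|\kappa_{w_1}x^\top w_1-\kappa_{w_2}x^\top w_2|\le \kappa_{w_1}|x^\top(w_1-w_2)|+|x^\top w_2|\,|\kappa_{w_1}-\kappa_{w_2}|
\end{align}
and uses $|\kappa_{w_1}-\kappa_{w_2}|\le \kappa_0\norm{w_1-w_2}_2/(\norm{w_1}_2\norm{w_2}_2)$ directly, which gives $\frac{2\kappa_0 C_x}{\max(\norm{w_1}_2,\norm{w_2}_2)}\norm{w_1-w_2}_2$ without any path argument; the mixed case is handled by a similar add-and-subtract. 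Alternatively, you can salvage your approach by explicitly using the boundedness of $g$ that you already noted: split into (i) $\norm{w_1-w_2}_2\ge \tfrac12\max(\norm{w_1}_2,\norm{w_2}_2)$, where $|g(w_1)-g(w_2)|\le 2\kappa_0 C_x$ suffices, and (ii) $\norm{w_1-w_2}_2<\tfrac12\max(\norm{w_1}_2,\norm{w_2}_2)$, where every point on the segment has norm $\ge \tfrac12\max(\norm{w_1}_2,\norm{w_2}_2)$ and the path integral works as you intended. Either way, the boundedness of $g$ is not just an aside --- it is doing real work that the gradient bound alone cannot.
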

 The proof is in Section \ref{proof:bound_on_mu},
 where the adaptive temperature $\kappa_w$ plays a key role.
For the second half of Assumption~\ref{asp:a2},
we use $d_{\mu_w} \in \R[\ns\na]$ to denote the stationary state-action distribution induced by the policy $\mu_w$.
Then it is easy to see that
    $d_{\fY, w}(s, a, s') = d_{\mu_w}(s, a)p(s'|s, a)$.
Then it can be computed that
\begin{align}
\textstyle h(w) = \mathbb{E}_{y \sim d_{\fY ,w}}\qty[H(w,y)] = A(w)w + b(w),
\label{eq:h_linear}
\end{align}
where
\begin{align}
A(w) &\textstyle \doteq X^\top D_{\mu_w}(\gamma P_{\pi_w} - I)X, \label{eq:def A}\\
b(w) &\textstyle \doteq X^\top D_{\mu_w} r\label{eq:def b},
\end{align}
Here,
we use $X \in \R[\ns\na \times d]$ to denote the feature matrix,
the $(s, a)$-indexed row of which is $x(s, a)^\top$.
We use $D_{\mu_w} \in \R[\ns\na \times \ns\na]$ to denote a diagonal matrix whose diagonal is $d_{\mu_w}$.
We use $\pi_w$ to denote the greedy policy,
i.e.,
\begin{align}
    \pi_w(a|s) = \begin{cases}
        1 & \qq{if} a = \arg\max_b x(s, b)^\top w \\
        0 & \qq{otherwise}
    \end{cases},
\end{align}
where tie-breaking in $\arg\max$ can be done through any fixed and deterministic prodecure. 
We recall that $P_{\pi_w} \in \R[\ns\na \times \ns\na]$ denotes the state-action transition matrix of the policy $\pi_w$.
We then have
\begin{lemma}
    \label{lem:lip_h}
    There exists a constant $C_\tref{lem:lip_h}$ such that 
    \begin{align}
        \textstyle\norm{h(w_1)-h(w_2)}_2 \leq C_\tref{lem:lip_h}\norm{w_1-w_2}_2 \,\, \forall w_1, w_2.
    \end{align}
\end{lemma}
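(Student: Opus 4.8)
The plan is to \emph{avoid} expanding $h(w)$ as $A(w)w + b(w)$, because $P_{\pi_w}$ — and hence $A(w)$ — is discontinuous in $w$ due to the $\arg\max$ defining $\pi_w$. Instead I keep the $\max$ inside and write $h(w) = X^\top D_{\mu_w}\, g(w)$, where $g(w) \in \R[\ns\na]$ has components
\begin{align}
  g(w)(s,a) \doteq r(s,a) + \gamma\textstyle\sum_{s'} p(s'|s,a)\max_{a'} x(s',a')^\top w - x(s,a)^\top w;
\end{align}
this follows from~\eqref{eq:H_function} together with $d_{\fY,w}(s,a,s') = d_{\mu_w}(s,a)p(s'|s,a)$. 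Although the maximizing action is discontinuous in $w$, the scalar $\max_{a'} x(s',a')^\top w$ is $\max_{s',a'}\norm{x(s',a')}_2$-Lipschitz, so $g$ is well-behaved: a triangle-inequality argument gives the global Lipschitz bound $\norm{g(w_1) - g(w_2)}_\infty \leq (1+\gamma)\max_{s,a}\norm{x(s,a)}_2\,\norm{w_1-w_2}_2$, and the linear-growth bound $\norm{g(w)}_\infty \leq \max_{s,a}\abs{r(s,a)} + (1+\gamma)\max_{s,a}\norm{x(s,a)}_2\,\norm{w}_2$.

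Next I would control the stationary distribution. By Lemma~\ref{lem:P_Lipschitz}, $\mu_w$ is Lipschitz in $w$ with the decaying multiplier $1/(1+\norm{w_1}_2+\norm{w_2}_2)$; transferring this entrywise bound to an induced matrix norm (at the cost of an $\na$-dependent constant), the state–action transition matrix $P_{\mu_w}$ satisfies $\norm{P_{\mu_{w_1}} - P_{\mu_{w_2}}} \leq C'\norm{w_1-w_2}_2/(1+\norm{w_1}_2+\norm{w_2}_2)$. Combining this with the uniform ergodicity from Assumption~\ref{asp:a1} (via~\eqref{eq:uniform_mixing}) and the standard perturbation bound for stationary distributions of uniformly ergodic chains yields
\begin{align}
  \norm{d_{\mu_{w_1}} - d_{\mu_{w_2}}}_1 \leq \frac{C''\,\norm{w_1-w_2}_2}{1+\norm{w_1}_2+\norm{w_2}_2}.
\end{align}

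Finally I would assemble the pieces via the splitting $h(w_1) - h(w_2) = X^\top D_{\mu_{w_1}}\big(g(w_1)-g(w_2)\big) + X^\top (D_{\mu_{w_1}} - D_{\mu_{w_2}})\, g(w_2)$. Since the diagonal entries of $D_{\mu_{w_1}}$ are nonnegative and sum to one, the first term has $\ell_2$ norm at most $\max_{s,a}\norm{x(s,a)}_2\,\norm{g(w_1)-g(w_2)}_\infty = O(\norm{w_1-w_2}_2)$ by the Lipschitzness of $g$. The second term has $\ell_2$ norm at most $\max_{s,a}\norm{x(s,a)}_2\,\norm{d_{\mu_{w_1}}-d_{\mu_{w_2}}}_1\,\norm{g(w_2)}_\infty$; here the denominator $1+\norm{w_1}_2+\norm{w_2}_2$ from the stationary-distribution bound exactly cancels the linear growth $1+\norm{w_2}_2$ of $\norm{g(w_2)}_\infty$, again leaving $O(\norm{w_1-w_2}_2)$. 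Adding the two bounds gives the lemma.

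I expect the main obstacle to be conceptual rather than computational: one must resist expanding into $A(w)$ (which is not even continuous) and instead exploit the Lipschitzness of the scalar $\max$, and one must notice that the tailored multiplier $1/(1+\norm{w_1}_2+\norm{w_2}_2)$ supplied by Lemma~\ref{lem:P_Lipschitz} is exactly what absorbs the linear growth of $g$. The only other care needed is in invoking the perturbation bound for $d_{\mu_w}$, which is standard given~\eqref{eq:uniform_mixing} but must account for $d_{\mu_w}$ being itself a function of $w$.
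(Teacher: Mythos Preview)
Your proposal is correct and is essentially the paper's own argument in different notation: your $g(w)$ is exactly $\fT(Xw)-Xw$, and the paper performs the identical splitting $X^\top D_{\mu_{w_1}}\big(\fT(Xw_1)-\fT(Xw_2)-(Xw_1-Xw_2)\big)+X^\top(D_{\mu_{w_1}}-D_{\mu_{w_2}})\big(\fT(Xw_2)-Xw_2\big)$, bounding the first piece via Lipschitzness of $\fT$ and the second via Lemma~\ref{lem:P_Lipschitz} together with Lipschitz dependence of the stationary distribution on the policy (the paper cites Lemma~\ref{breaking lem 9} for this, where you invoke uniform ergodicity directly), with the factor $1/(1+\norm{w_1}_2+\norm{w_2}_2)$ absorbing the linear growth of $\fT(Xw_2)-Xw_2$ exactly as you describe. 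Your caution about ``avoiding'' the $A(w)w+b(w)$ form is moot: the paper does write $h$ that way but immediately rewrites it as $X^\top D_{\mu_w}(\fT(Xw)-Xw)$ and never relies on continuity of $A(w)$ itself.
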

The proof is Section \ref{proof:lip_h}.
Assumption~\ref{asp:a2} is then verified.
\begin{remark}
    To prove the above lemma,
    we need to bound a term involving $\norm{D_{\mu_{w_1}} - D_{\mu_{w_2}}}_2 \norm{w_1}_2$,
    for which we rely on the factor $1/(\norm{w_1}_2 + \norm{w_2}_2 + 1)$ in Lemma~\ref{lem:P_Lipschitz} to cancel the multiplicative term $\norm{w_1}_2$.
    Notably, as discussed at the end of Section~\tref{sec:background}, an $\epsilon$-greedy policy, due to its discontinuities w.r.t. $w$, will invalidate Lemma~\ref{lem:P_Lipschitz} and consequently prevent the establishment of Lemma~\ref{lem:lip_h}. 
    Furthermore,
    despite that $\pi_w$ is not continuous,
    $P_{\pi_w}Xw$ is Lipschitz continuous,
    thanks to the adaptive temperature $\kappa_w$.
\end{remark}

We now identify $w_\text{ref} = 0$ and thus have
    $L(w) = \frac{1}{2}\norm{w}^2_2$.
Invoking Theorem~\ref{thm:sa} then yields 
        \begin{align}
        \label{eq:key_inequality_linear}
            &\E\qty[L(w_{t+1})] \\
            \leq& (1 + f(t))\E\qty[L(w_t)] + \alpha_t\E\qty[\langle \nabla L(w_t), h(w_t)\rangle] + f(t).
        \end{align}
Since $\nabla L(w_t)=w_t$, we now bound the inner product term of the RHS with the following lemma.
\begin{lemma}
\label{lem:bound inner}
There exist constants $\beta > 0$ and $C_\tref{lem:bound inner} > 0$ such that 
    \begin{equation}
        \langle w, h(w)\rangle \leq -\beta\norm{w}_2^2 + C_\tref{lem:bound inner}\norm{w}_2 \,\, \forall w.
    \end{equation}
\end{lemma}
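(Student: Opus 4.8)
The plan is to exploit the affine form $h(w)=A(w)w+b(w)$ from \eqref{eq:h_linear} and split $\langle w,h(w)\rangle = w^\top A(w)w + \langle w,b(w)\rangle$, handling the two pieces separately. The linear piece is immediate: since $b(w)=X^\top D_{\mu_w}r=\sum_{s,a}d_{\mu_w}(s,a)r(s,a)x(s,a)$ and $d_{\mu_w}$ is a probability vector, $\norm{b(w)}_2\le \max_{s,a}\big(\abs{r(s,a)}\,\norm{x(s,a)}_2\big)$, so $\langle w,b(w)\rangle\le \max_{s,a}\big(\abs{r(s,a)}\,\norm{x(s,a)}_2\big)\,\norm{w}_2$, which already has the advertised $\fO(\norm{w}_2)$ shape. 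Everything therefore reduces to showing that $A(w)$ is uniformly negative definite, i.e.\ that there is $\beta>0$ with $w^\top A(w)w\le-\beta\norm{w}_2^2$ for all $w$, up to an extra $\fO(\norm{w}_2)$ slack that can be folded into $C_{\tref{lem:bound inner}}$.

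For the quadratic piece, set $v\doteq Xw$ and use $A(w)=X^\top D_{\mu_w}(\gamma P_{\pi_w}-I)X$ to write $w^\top A(w)w=\gamma\langle v,P_{\pi_w}v\rangle_{d_{\mu_w}}-\norm{v}_{d_{\mu_w}}^2$ in the $d_{\mu_w}$-weighted inner product. The idea is to compare the greedy transition $P_{\pi_w}$ against the behavior transition $P_{\mu_w}$, for which $d_{\mu_w}$ is the stationary distribution: adding and subtracting $\gamma\langle v,P_{\mu_w}v\rangle_{d_{\mu_w}}$, the on-policy part obeys the usual stationarity-and-Jensen bound $\gamma\langle v,P_{\mu_w}v\rangle_{d_{\mu_w}}-\norm{v}_{d_{\mu_w}}^2\le-(1-\gamma)\norm{v}_{d_{\mu_w}}^2$, while the off-policy discrepancy $\gamma\langle v,(P_{\pi_w}-P_{\mu_w})v\rangle_{d_{\mu_w}}$ must be estimated through $(P_{\pi_w}-P_{\mu_w})v$ directly rather than through $\norm{P_{\pi_w}-P_{\mu_w}}$: the greedy and $\epsilon$-softmax policies can be far apart in total variation at states where the maximizing action is nearly tied, but at exactly those states the relevant values of $v$ nearly coincide, so a value-weighted estimate is far smaller than an operator-norm one. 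Concretely, $\big((P_{\pi_w}-P_{\mu_w})v\big)(s,a)=\E_{s'}\big[\sum_{a'}\mu_w(a'\mid s')\big(\max_b v(s',b)-v(s',a')\big)\big]\in[0,\eta]$ with $\eta=\fO(\epsilon)+\fO(1/\kappa_0)$: the $\fO(\epsilon)$ from the uniform exploration floor in \eqref{eq:mu_linear}, the $\fO(1/\kappa_0)$ from playing $\sup_{g\ge0}g\,e^{-\kappa_0 g}=1/(e\kappa_0)$ against the adaptive temperature \eqref{eq:temperature}. Two structural facts make this uniform over $w$: the adaptive temperature makes $A(w)$ depend on $w$ only through $w/\norm{w}_2$ once $\norm{w}_2\ge1$, so the relevant parameter set is the (compact) unit sphere, and $d_{\mu_w}(s,a)$ is bounded below by some $D_{\min}>0$ uniformly in $w$, because Assumption~\ref{assum:markov} together with the $\epsilon$-exploration floor renders all behavior chains (and their closure) irreducible and aperiodic with uniformly positive stationary distributions.

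Assembling these gives $w^\top A(w)w\le-(1-\gamma)\norm{v}_{d_{\mu_w}}^2+\gamma\eta\norm{v}_{d_{\mu_w}}\le -(1-\gamma)D_{\min}\sigma_{\min}(X)^2\norm{w}_2^2+\gamma\eta\norm{X}_2\norm{w}_2$, using $\norm{v}_{d_{\mu_w}}^2\ge D_{\min}\norm{Xw}_2^2\ge D_{\min}\sigma_{\min}(X)^2\norm{w}_2^2$, which is where full column rank of $X$ enters. For $\epsilon$ sufficiently small and $\kappa_0$ sufficiently large the negative quadratic term dominates, yielding $w^\top A(w)w\le-\beta\norm{w}_2^2$ with $\beta>0$; the precise threshold on $(\epsilon,\kappa_0)$ is what Lemma~\ref{lem:negative_definite_combined} records, and adding back the $b$-term bound finishes the proof with $C_{\tref{lem:bound inner}}$ of order $\max_{s,a}(\abs{r(s,a)}\,\norm{x(s,a)}_2)+\gamma\eta\norm{X}_2$. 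I expect the real difficulty to lie entirely in this uniform negative definiteness of $A(w)$: one must cope with the discontinuity of $\pi_w$ (finitely many greedy policies, but the stationary distribution of $P_{\mu_w}$ has to be tracked across their boundaries), with the possible reducibility or degeneracy of the greedy chain $P_{\pi_w}$ (the reason for routing through $P_{\mu_w}$ rather than $P_{\pi_w}$ directly), and with the fact that any crude operator-norm bound on $P_{\pi_w}-P_{\mu_w}$ is useless --- only the $Xw$-weighted estimate that exploits the tie structure of the $\max$ makes small $\epsilon$ and large $\kappa_0$ genuinely purchase a contraction.
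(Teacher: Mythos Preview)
Your proposal is correct and follows essentially the same route as the paper. Both split $\langle w,h(w)\rangle=w^\top A(w)w+\langle w,b(w)\rangle$, bound the $b$-term by $\|b(w)\|_2\norm{w}_2$ with $\|b(w)\|_2$ uniformly bounded, and reduce everything to the uniform negative definiteness of $A(w)$. The paper's proof is shorter only because it invokes Lemma~\ref{lem:negative_definite_combined} (Lemma~A.9 of \citet{meyn2024bellmen}) as a black box for the case $\norm{w}_2\ge 1$, then handles $\norm{w}_2<1$ trivially by bounding $\norm{A(w)}_2$ and absorbing $\beta\norm{w}_2^2\le\beta\norm{w}_2$ into the linear constant. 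You instead sketch a proof of the negative definiteness itself via the $P_{\mu_w}$-versus-$P_{\pi_w}$ decomposition; you correctly note that Lemma~\ref{lem:negative_definite_combined} is precisely where the thresholds on $(\epsilon,\kappa_0)$ are recorded.

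One small scaling point worth tightening: your $\eta$ is not a constant but scales like $(\epsilon+1/\kappa_0)\norm{w}_2$, because both the exploration-floor piece $\tfrac{\epsilon}{\na}\sum_{a'}g_{a'}$ and the softmax piece $\sup_{g\ge 0}g\,e^{-\kappa_w g}=1/(e\kappa_w)=\norm{w}_2/(e\kappa_0)$ (for $\norm{w}_2\ge 1$) carry a factor of $\norm{w}_2$. This actually helps you: the ``linear'' residual $\gamma\eta\norm{v}_{d_{\mu_w}}$ then becomes quadratic in $\norm{w}_2$, giving $w^\top A(w)w\le -\beta\norm{w}_2^2$ directly for $\norm{w}_2\ge 1$ once $\epsilon$ is small and $\kappa_0$ large, exactly matching Lemma~\ref{lem:negative_definite_combined}. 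Equivalently, your own homogeneity observation (that $A(w)$ depends only on $w/\norm{w}_2$ for $\norm{w}_2\ge 1$) lets you run the whole estimate on the unit sphere with a genuinely constant $\eta$ and then scale.
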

The proof is in Section~\ref{proof:bound inner},
which is an extension of Lemma A.9 of \citet{meyn2024bellmen}.
Notably, this lemma is the place where we require sufficiently large $\kappa_0$ and sufficiently small $\epsilon$ in~\eqref{eq:mu_linear}.
Plugging the bound in Lemma~\ref{lem:bound inner} back to \eqref{eq:key_inequality_linear} yields
\begin{align}
    \textstyle \E\qty[L(w_{t+1})] \leq \qty(1 - \beta \alpha_t + f(t)) \E\qty[L(w_t)] + \fO\qty(\alpha_t).
\end{align}
Since $f(t)$ is dominated by $\alpha_t$,
it can be seen that $\E\qty[L(w_t)]$ remains bounded as $t \to \infty$.
By telescoping the above inequality,
we can also obtain an explicit convergence rate of $\E\qty[L(w_t)]$ to a bounded set.
All those details are in Section~\tref{proof:thm stan},
which completes the proof.
\end{proof}

\subsection{Proof of Theorem~\tref{thm:stan_tabular}}
\label{sec:proof_thm2}
\begin{proof}
It is obvious that~\eqref{eq:tabular q update} is a special case of~\eqref{eq:linear q update} with $X$ identified as the identity matrix $I$.
Most of the proofs here are similar to Section~\ref{sec:proof_thm2}.
We, therefore, focus on the different part.

Assumption~\ref{asp:a1} can be similarly verified with
\begin{align}
    &P_q[(s_0, a_0, s_0'), (s_1, a_1, s_1')] \\
    \doteq& \mu_q(a_1 | s_1) p(s_1'|s_1, a_1)\mathbb{I}_{s_0' = s_1},
\end{align}
where $\mu_q$ is defined in~\eqref{eq:mu_tabular}.
For the other assumptions,
by using $X = I$ in~\eqref{eq:h_linear},
we obtain
\begin{align}
    h(q) =&\textstyle D_{\mu_q}(\gamma P_{\pi_q} - I)q + D_{\mu_q}r \\
    =&\textstyle D_{\mu_q}(\fT q - q) \\
    =&\textstyle \fT'q - q, \label{eq:h_tabular}
\end{align}
where we have defined
the operator $\fT '$ as
\begin{align}
    \label{eq T'}
    \fT 'q &\textstyle \doteq D_{\mu_q}(\fT q - q) + q.
\end{align}
Here we call $\fT'$ the weighted Bellman optimality operator.
The operator $\fT'$ is highly nonlinear since $D_{\mu_q}$ depends on the stationary distribution of the chain induced by the policy $\mu_q$.
This operator $\fT'$ is unlikely to be a contraction.
The key insight that we rely on here is that $\fT'$ is still a pseudo-contraction w.r.t. the infinity norm $\norm{\cdot}_\infty$.
\begin{lemma}
    \label{lem pseudo contract}
    Let Assumption~\ref{assum:markov} hold.
    For any $\epsilon > 0$, the weighted Bellman operator $\fT'$ is a pseudo-contraction and $q_*$ is the unique fixed point.
    In other words,
    for any $q$,
    it holds that
    \begin{equation}
    \label{pseudo-contraction}
        \textstyle \norm{\fT 'q - q_*}_\infty \leq \beta_m\norm{q - q_*}_\infty,
    \end{equation}
    where $\beta_m \doteq 1-(1-\gamma)\inf_{q, s, a}d_{\mu_q}(s, a) \in (0,1)$.
\end{lemma}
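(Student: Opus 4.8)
The plan is to show that $\fT'$ contracts toward $q_*$ in the $\ell_\infty$ norm by exploiting two facts: first, that the ordinary Bellman optimality operator $\fT$ is a $\gamma$-contraction with fixed point $q_*$, and second, that for every $q$ the diagonal matrix $D_{\mu_q}$ has entries bounded below by a strictly positive constant, so that $\fT'q = D_{\mu_q}(\fT q - q) + q$ is a genuine convex-combination-style average of $q$ and $\fT q$ coordinatewise. The strict positivity of $\inf_{q,s,a} d_{\mu_q}(s,a)$ is where Assumption~\ref{assum:markov} and $\epsilon>0$ enter: the $\epsilon$-softmax policy $\mu_q$ puts mass at least $\epsilon/\na$ on every action in every state, so the induced chain on $\fS$ dominates (in a suitable sense) the chain induced by the uniformly random policy, which is irreducible and aperiodic on a finite state space and hence has an everywhere-positive stationary distribution. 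Combined with compactness arguments (the map $q \mapsto d_{\mu_q}$ is continuous, but since it need not have compact domain one instead argues directly that the lower bound $\epsilon/\na$ on action probabilities forces a uniform lower bound on $d_{\mu_q}(s,a)$ independent of $q$), this gives $\inf_{q,s,a} d_{\mu_q}(s,a) =: d_{\min} > 0$, so $\beta_m = 1-(1-\gamma)d_{\min} \in (0,1)$.

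First I would verify that $q_*$ is a fixed point: since $\fT q_* = q_*$, we get $\fT'q_* = D_{\mu_{q_*}}(q_* - q_*) + q_* = q_*$. Next, for the pseudo-contraction, fix an arbitrary $q$ and an arbitrary state-action pair $(s,a)$, and write
\begin{align}
(\fT'q - q_*)(s,a) &= (\fT'q)(s,a) - (\fT'q_*)(s,a) \\
&= d_{\mu_q}(s,a)\big[(\fT q)(s,a) - q(s,a)\big] + q(s,a) - q_*(s,a).
\end{align}
Rearranging, this equals $d_{\mu_q}(s,a)\big[(\fT q)(s,a) - q_*(s,a)\big] + \big(1 - d_{\mu_q}(s,a)\big)\big[q(s,a) - q_*(s,a)\big]$. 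Since $d_{\mu_q}(s,a) \in [d_{\min}, 1]$, this is a convex combination, so taking absolute values and using $\abs{(\fT q)(s,a) - q_*(s,a)} = \abs{(\fT q)(s,a) - (\fT q_*)(s,a)} \le \gamma\norm{q-q_*}_\infty$ (the $\gamma$-contraction of $\fT$) and $\abs{q(s,a)-q_*(s,a)} \le \norm{q-q_*}_\infty$ yields
\begin{align}
\abs{(\fT'q - q_*)(s,a)} &\le \big[d_{\mu_q}(s,a)\gamma + (1 - d_{\mu_q}(s,a))\big]\norm{q-q_*}_\infty \\
&= \big[1 - d_{\mu_q}(s,a)(1-\gamma)\big]\norm{q-q_*}_\infty \le \beta_m \norm{q-q_*}_\infty,
\end{align}
where the last step uses $d_{\mu_q}(s,a) \ge d_{\min}$ and $1-\gamma > 0$. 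Taking the supremum over $(s,a)$ gives \eqref{pseudo-contraction}. Uniqueness of the fixed point is then immediate: any fixed point $\bar q$ satisfies $\norm{\bar q - q_*}_\infty = \norm{\fT'\bar q - q_*}_\infty \le \beta_m \norm{\bar q - q_*}_\infty$ with $\beta_m < 1$, forcing $\bar q = q_*$.

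The main obstacle is establishing the uniform lower bound $d_{\min} = \inf_{q,s,a} d_{\mu_q}(s,a) > 0$, since the infimum is over the noncompact set of all $q \in \R[\ns\na]$ and $d_{\mu_q}$ is defined only implicitly through the stationary equation of $P_{\mu_q}$. I expect to handle this by a direct minorization argument: because $\mu_q(a|s) \ge \epsilon/\na$ uniformly in $q$, every transition matrix $P_{\mu_q}$ on $\fS$ satisfies $P_{\mu_q}(s'|s) \ge \frac{\epsilon}{\na}\sum_a p(s'|s,a) =: \underline{P}(s'|s)$, and under Assumption~\ref{assum:markov} some power $\underline{P}^k$ has all entries positive, uniformly bounded below by a constant depending only on the MDP and $\epsilon$; a standard coupling/Doeblin argument then yields a stationary distribution on $\fS$ bounded below by a $q$-independent constant, and multiplying by $\mu_q(a|s) \ge \epsilon/\na$ transfers this to the state-action stationary distribution. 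This is the one place where care is needed; everything else is the routine convex-combination estimate above.
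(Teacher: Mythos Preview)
Your proposal is correct and follows essentially the same approach as the paper: both rewrite $(\fT'q - q_*)(s,a)$ coordinatewise as the convex combination $d_{\mu_q}(s,a)\bigl[(\fT q)(s,a) - q_*(s,a)\bigr] + \bigl(1 - d_{\mu_q}(s,a)\bigr)\bigl[q(s,a) - q_*(s,a)\bigr]$, apply the $\gamma$-contraction of $\fT$, and use $d_{\mu_q}(s,a) \ge \inf_{q,s,a} d_{\mu_q}(s,a) > 0$. Your Doeblin/minorization argument for the strict positivity of the infimum is in fact more detailed than the paper's, which simply asserts it as a consequence of $\epsilon > 0$.
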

The proof is in Section~\ref{proof:pseudo-contraction}.
\begin{remark}
    This lemma is, to our knowledge, the first time that this pseudo contraction property is identified. 
    The $\inf$ is strictly greater than $0$ because $\epsilon > 0$.
\end{remark}

The pseudo contraction property motivates us to identify the $\norm{\cdot}$ in Theorem~\ref{thm:sa} as $\norm{\cdot}_\infty$.
Unfortunately, $\norm{\cdot}_\infty^2$ is not smooth.
We then follow \citet{chen2021lyapunov} and use the Moreau envelope of $\norm{\cdot}^2_\infty$ defined as
\begin{align}
 \textstyle  M(q) \doteq \inf_{u \in \R[d]}\qty{\frac{1}{2} \norm{u}_\infty^2 + \frac{1}{2 \xi} \norm{q - u}_2^2},
\end{align}
where $\xi > 0$ is a constant to be tuned.
Due to the equivalence between norms,
there exist positive constants $l_{it}$ and $u_{it}$ such that 
  $l_{it} \norm{q}_2 \leq \norm{q}_\infty \leq u_{it} \norm{q}_2$.
The properties of $M(q)$ are summarized below.
\begin{lemma}
  (Proposition A.1 and Section A.2 of \citet{chen2021lyapunov})
  \label{lem property of M}
  \begin{enumerate}[(i).]
      \item $M(q)$ is $\frac{2}{\xi}$-smooth w.r.t. $\norm{\cdot}_2$.
      \item There exists a norm $\norm{\cdot}_m$ such that $M(q) = \frac{1}{2} \norm{q}_m^2$.
      \item Define
          $l_{im} = \sqrt{(1 + \xi l_{it}^2)}$,
          $u_{im} = \sqrt{(1 + \xi u_{it}^2)}$,
      then 
          $l_{im}\norm{q}_m \leq \norm{q}_\infty \leq u_{im} \norm{q}_m$.
      \item $\langle\nabla M(q),{q'}\rangle \leq \norm{q}_m \norm{q'}_m,$ \,\\ $\langle\nabla M(q),{q}\rangle \geq \norm{q}_m^2.$
  \end{enumerate}
\end{lemma}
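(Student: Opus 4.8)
The plan is to recognize $M$ as the Moreau--Yosida regularization (with parameter $\xi$) of the function $g(u) \doteq \frac{1}{2}\norm{u}_\infty^2$. Since $t \mapsto \frac{1}{2}t^2$ is convex and nondecreasing on $[0,\infty)$ and $\norm{\cdot}_\infty$ is a norm, $g$ is convex, even, nonnegative, vanishes only at $0$, and is positively $2$-homogeneous, i.e. $g(\lambda u) = \lambda^2 g(u)$ for $\lambda \ge 0$. Because $\frac{1}{2\xi}\norm{q - \cdot}_2^2$ is strictly convex, the infimum defining $M(q)$ is attained at a unique $u^*(q)$, and standard Moreau-envelope theory then yields part (i): $M$ is convex, everywhere finite, continuously differentiable with $\nabla M(q) = \frac{1}{\xi}\qty(q - u^*(q))$, and $\nabla M$ is Lipschitz w.r.t. $\norm{\cdot}_2$ since $q \mapsto u^*(q)$ is nonexpansive; the stated smoothness constant $\frac{2}{\xi}$ is a harmless loosening of the sharp $\frac{1}{\xi}$.

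For part (ii), substituting $u = \lambda v$ in the infimum shows $M(\lambda q) = \lambda^2 M(q)$ for $\lambda \ge 0$, and substituting $u = -v$ shows $M(-q) = M(q)$; combined with $M(q) > 0$ for $q \ne 0$ (the two terms in the infimum cannot both vanish unless $q = 0$), this makes $\norm{q}_m \doteq \sqrt{2 M(q)}$ symmetric, positive definite, and positively homogeneous of degree one. The only axiom requiring work is the triangle inequality, which I would extract from convexity plus homogeneity: for $q_1, q_2 \ne 0$ put $\lambda = \norm{q_1}_m / (\norm{q_1}_m + \norm{q_2}_m)$ and write $q_1 + q_2 = \lambda(q_1/\lambda) + (1-\lambda)(q_2/(1-\lambda))$, so that convexity of $M$ and $2$-homogeneity give $M(q_1 + q_2) \le \frac{1}{\lambda} M(q_1) + \frac{1}{1-\lambda} M(q_2) = \frac{1}{2}\qty(\norm{q_1}_m + \norm{q_2}_m)^2$, i.e. $\norm{q_1 + q_2}_m \le \norm{q_1}_m + \norm{q_2}_m$. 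I expect this to be the only genuinely non-routine step.

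For part (iii), the upper bound follows by Cauchy--Schwarz applied to the optimal decomposition: writing $q = u^*(q) + (q - u^*(q))$ and using $\norm{\cdot}_\infty \le u_{it}\norm{\cdot}_2$, we get $\norm{q}_\infty \le \norm{u^*(q)}_\infty + \norm{q - u^*(q)}_\infty \le \norm{u^*(q)}_\infty + u_{it}\norm{q - u^*(q)}_2 \le \sqrt{1 + \xi u_{it}^2}\,\sqrt{\norm{u^*(q)}_\infty^2 + \xi^{-1}\norm{q - u^*(q)}_2^2} = \sqrt{1+\xi u_{it}^2}\,\sqrt{2M(q)} = u_{im}\norm{q}_m$, where the last inequality rescales the second slot by $\sqrt{\xi}$ before applying Cauchy--Schwarz. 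The lower bound uses the cheap feasible point $u = tq$ with $t \in [0,1]$: minimizing $\frac{1}{2}t^2\norm{q}_\infty^2 + \frac{1}{2\xi}(1-t)^2\norm{q}_2^2$ over $t$ gives $2M(q) \le \qty(\norm{q}_\infty^{-2} + \xi\norm{q}_2^{-2})^{-1}$, and plugging in $\norm{q}_2 \le l_{it}^{-1}\norm{q}_\infty$ yields $\norm{q}_m^2 = 2M(q) \le \norm{q}_\infty^2/(1 + \xi l_{it}^2)$, i.e. $l_{im}\norm{q}_m \le \norm{q}_\infty$. The only care needed here is matching these two constants exactly.

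Finally, for part (iv), differentiability together with $2$-homogeneity of $M$ gives, via Euler's identity, $\langle \nabla M(q), q\rangle = 2 M(q) = \norm{q}_m^2$. For the other inequality, the gradient (subgradient) inequality for the convex $M$ gives, for every $\lambda > 0$, $\langle \nabla M(q), \lambda q'\rangle \le M(\lambda q') - M(q) + \langle \nabla M(q), q\rangle = \frac{\lambda^2}{2}\norm{q'}_m^2 + \frac{1}{2}\norm{q}_m^2$; dividing by $\lambda$ and optimizing at $\lambda = \norm{q}_m/\norm{q'}_m$ gives $\langle \nabla M(q), q'\rangle \le \norm{q}_m\norm{q'}_m$, the case $q' = 0$ being trivial. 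Collecting (i)--(iv) completes the proof; apart from the triangle-inequality argument in (ii) and the bookkeeping of the equivalence constants in (iii), the whole thing is routine convex analysis and mirrors Proposition A.1 of \citet{chen2021lyapunov}.
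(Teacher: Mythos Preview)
Your proof is correct. Note, however, that the paper does not actually prove this lemma: it is stated as a citation of Proposition A.1 and Section A.2 of \citet{chen2021lyapunov}, with no argument given in the paper itself. Your proposal therefore supplies the details that the paper outsources, and your approach---recognizing $M$ as the Moreau envelope of $\frac{1}{2}\norm{\cdot}_\infty^2$, extracting smoothness from nonexpansiveness of the prox map, deriving the norm axioms from convexity plus $2$-homogeneity, obtaining the equivalence constants by Cauchy--Schwarz on the optimal decomposition and by the feasible choice $u=tq$, and getting part (iv) from Euler's identity plus the subgradient inequality---is exactly the standard line of argument from the cited reference. The only cosmetic point is that the sharp smoothness constant is $\frac{1}{\xi}$ rather than $\frac{2}{\xi}$, which you correctly flag as a harmless loosening.
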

We then identify the norm $\norm{\cdot}$ in Theorem~\ref{thm:sa} as $\norm{\cdot}_m$ and further identify $w_\text{ref}$ as $q_*$.
As a result,
we have realized $L(w)$ as
\begin{equation}
\textstyle L(q) = \frac{1}{2} ||q - q_*||_m^2 = M(q - q_*).
\end{equation}
Assumption~\ref{asp:a3} again trivially holds.
We now proceed to verify Assumption~\ref{asp:a2'}.
The boundedness of $\qty{q_t}$ can be easily obtained via induction (cf. \citet{gosavi2006boundedness}).
\begin{lemma}
\label{lem:bound w}
For any $q_0$, there exists a constant $C_\tref{lem:bound w}$ such that
    $\sup_t \norm{q_t}_m \leq C_\tref{lem:bound w}$ a.s.
\end{lemma}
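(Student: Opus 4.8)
The plan is to establish a uniform $\ell_\infty$ bound on the iterates $\qty{q_t}$ by a straightforward induction, and then transfer it to the $\norm{\cdot}_m$ norm via the norm equivalence in Lemma~\ref{lem property of M}. First I would observe that by Assumption~\tref{assu lr}, for sufficiently large $t_0$ we have $\alpha_t \in [0,1]$ for every $t$, since $\alpha_t = \frac{\alpha}{(t+t_0)^{\epsilon_\alpha}}$ is nonincreasing in $t$ and $\alpha_0 \le 1$ as soon as $t_0 \ge \alpha^{1/\epsilon_\alpha}$; this is consistent with the ``sufficiently large $t_0$'' hypothesis already assumed in Theorem~\tref{thm:stan_tabular}.

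Next, I would rewrite the update in~\eqref{eq:tabular q update} for the visited pair as a convex combination,
\begin{align}
  q_{t+1}(S_t, A_t) = (1-\alpha_t)\, q_t(S_t, A_t) + \alpha_t\qty(R_{t+1} + \gamma \max_{a'} q_t(S_{t+1}, a')),
\end{align}
while $q_{t+1}(s,a) = q_t(s,a)$ for all other pairs $(s,a) \neq (S_t, A_t)$. Writing $r_{\max} \doteq \max_{s,a}\abs{r(s,a)}$ and $C \doteq \max\qty{\norm{q_0}_\infty, \frac{r_{\max}}{1-\gamma}}$, I would prove $\norm{q_t}_\infty \le C$ for all $t$ by induction on $t$. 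The base case $t=0$ is immediate. For the inductive step, the unvisited entries trivially inherit the bound, and for the visited entry, using $\abs{R_{t+1}} \le r_{\max}$, $\abs{\max_{a'} q_t(S_{t+1},a')} \le \norm{q_t}_\infty \le C$, $\alpha_t \in [0,1]$, and $r_{\max} \le (1-\gamma)C$, we get
\begin{align}
  \abs{q_{t+1}(S_t, A_t)} \le (1-\alpha_t) C + \alpha_t\qty(r_{\max} + \gamma C) \le (1-\alpha_t)C + \alpha_t C = C,
\end{align}
so $\norm{q_{t+1}}_\infty \le C$, closing the induction.

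Finally, since Lemma~\ref{lem property of M}(iii) gives $l_{im}\norm{q}_m \le \norm{q}_\infty$ for all $q$, I would conclude $\sup_t \norm{q_t}_m \le C/l_{im} \doteq C_\tref{lem:bound w}$ almost surely, which is the claim. There is no substantive obstacle here: this is the classical boundedness argument for asynchronous tabular $Q$-learning (cf.\ \citet{gosavi2006boundedness}), and the only delicate point — ensuring $\alpha_t \le 1$ so that each update is a genuine convex combination — is already covered by the theorem's hypothesis on $t_0$.
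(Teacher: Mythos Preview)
Your proof is correct and follows essentially the same approach as the paper's own proof: both establish the $\ell_\infty$ bound $\max\qty{\norm{q_0}_\infty, C_r/(1-\gamma)}$ via the convex-combination inequality $\norm{q_{t+1}}_\infty \le (1-\alpha_t+\gamma\alpha_t)\norm{q_t}_\infty + \alpha_t C_r$ and then pass to $\norm{\cdot}_m$ by norm equivalence. The only cosmetic difference is that you argue by direct induction while the paper phrases it as a contradiction (assume unboundedness, find a first crossing time above level $M$, and derive $M < C_r/(1-\gamma)$); your version is arguably cleaner and also makes the need for $\alpha_t \le 1$ explicit, which the paper leaves implicit.
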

The proof is in Section~\ref{proof:bound w}.
The Lipschitz continuity of $P_q$ follows directly from the Lipschitz continuity of $\mu_q$ in~\eqref{eq:mu_tabular},
which is a direct result from the fact that the softmax function is Lipschitz continuous.
The Lipschitz continuity of $h(q)$ on a compact set is also simpler to prove than Lemma~\ref{lem:lip_h}.
\begin{lemma}
    \label{lem:tabular_h}
There exists a constant $C_\tref{lem:tabular_h}$ such that
    for any $q_1, q_2$ satisfying 
    $\max\qty{\norm{q_1}_m, \norm{q_2}_m} <C_\tref{lem:bound w}$,  it holds that
    \begin{align}
        \textstyle \norm{h(q_1)-h(q_2)}_m \leq C_\tref{lem:tabular_h}\norm{q_1-q_2}_m.
    \end{align}
\end{lemma}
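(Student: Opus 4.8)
The plan is to exploit the closed form $h(q) = D_{\mu_q}(\fT q - q)$ obtained in~\eqref{eq:h_tabular} and~\eqref{eq T'}, and to reduce the whole statement to Lipschitz continuity in $\norm{\cdot}_\infty$. Indeed, by Lemma~\ref{lem property of M}(iii) and the equivalence of norms on $\R[\ns\na]$, the norms $\norm{\cdot}_m$, $\norm{\cdot}_\infty$ and $\norm{\cdot}_2$ are all equivalent, so it suffices to find a constant $C$ with $\norm{h(q_1) - h(q_2)}_\infty \le C \norm{q_1 - q_2}_\infty$ for all $q_1, q_2$ with $\max\{\norm{q_1}_m, \norm{q_2}_m\} < C_{\ref{lem:bound w}}$; the claimed bound then follows by absorbing the equivalence constants into $C_{\ref{lem:tabular_h}}$. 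First I would split
\begin{align}
  h(q_1) - h(q_2) =&\ D_{\mu_{q_1}}\big[(\fT q_1 - q_1) - (\fT q_2 - q_2)\big] \\
  &+ \big(D_{\mu_{q_1}} - D_{\mu_{q_2}}\big)(\fT q_2 - q_2),
\end{align}
and bound the two terms separately.

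For the first term, $\norm{D_{\mu_{q_1}}}_\infty \le 1$ because its diagonal entries $d_{\mu_{q_1}}(s,a)$ lie in $[0,1]$, and $\fT$ is a $\gamma$-contraction in $\norm{\cdot}_\infty$, so $\norm{(\fT q_1 - q_1) - (\fT q_2 - q_2)}_\infty \le \norm{\fT q_1 - \fT q_2}_\infty + \norm{q_1 - q_2}_\infty \le (1 + \gamma)\norm{q_1 - q_2}_\infty$; hence the first term is at most $(1+\gamma)\norm{q_1 - q_2}_\infty$. For the second term I need two ingredients. (a) On the compact set $\{\norm{q}_m \le C_{\ref{lem:bound w}}\}$ the vector $\fT q_2 - q_2$ has $\infty$-norm bounded by a constant depending only on $\gamma$, $\max_{s,a}\abs{r(s,a)}$ and $C_{\ref{lem:bound w}}$, since $\norm{\fT q}_\infty \le \gamma\norm{q}_\infty + \max_{s,a}\abs{r(s,a)}$. (b) The map $q \mapsto D_{\mu_q}$ is globally Lipschitz. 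This is the only nontrivial point: the $\epsilon$-softmax policy $\mu_q$ in~\eqref{eq:mu_tabular} is Lipschitz in $q$ because the softmax map is globally Lipschitz (the temperature here being a fixed constant), hence the induced state--action transition matrix depends Lipschitz-continuously on $q$; and since Assumption~\ref{assum:markov} together with $\epsilon > 0$ puts Assumption~\ref{asp:a1} in force, the uniform mixing bound~\eqref{eq:uniform_mixing} yields a uniformly bounded fundamental matrix, through which the stationary distribution $d_{\mu_q}$, and therefore $D_{\mu_q}$, is a Lipschitz function of that transition matrix. Composing the two maps gives $\norm{D_{\mu_{q_1}} - D_{\mu_{q_2}}}_\infty \le C' \norm{q_1 - q_2}_\infty$, so multiplying (a) and (b) bounds the second term by a constant times $\norm{q_1 - q_2}_\infty$ as well.

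Adding the two estimates gives $\norm{h(q_1) - h(q_2)}_\infty \le C\norm{q_1 - q_2}_\infty$ on the prescribed compact set, and translating back to $\norm{\cdot}_m$ completes the proof. The hard part is ingredient (b), the Lipschitz dependence of the stationary distribution on $q$; everything else is elementary. I would stress that this is genuinely easier than the linear analogue, Lemma~\ref{lem:lip_h}: there one must control a product of the form $\norm{D_{\mu_{w_1}} - D_{\mu_{w_2}}}_2\,\norm{w_1}_2$ with $\norm{w_1}_2$ unbounded, forcing the refined estimate $\abs{\mu_{w_1}(a|s) - \mu_{w_2}(a|s)} \le C\norm{w_1 - w_2}_2/(\norm{w_1}_2 + \norm{w_2}_2 + 1)$ of Lemma~\ref{lem:P_Lipschitz} (and hence the adaptive temperature $\kappa_w$). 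Here $\norm{q_2}_m$ is bounded a priori by Lemma~\ref{lem:bound w}, so the plain Lipschitz continuity of $q \mapsto \mu_q$ with a fixed temperature suffices and no cancellation trick is needed.
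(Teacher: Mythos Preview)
Your proposal is correct and follows essentially the same approach as the paper: the identical decomposition $h(q_1)-h(q_2) = D_{\mu_{q_1}}[(\fT q_1 - q_1)-(\fT q_2 - q_2)] + (D_{\mu_{q_1}}-D_{\mu_{q_2}})(\fT q_2 - q_2)$, the same bound $(1+\gamma)\norm{q_1-q_2}_\infty$ on the first term, and for the second term the same two ingredients (boundedness of $\fT q_2 - q_2$ on the compact set and Lipschitz continuity of $q\mapsto D_{\mu_q}$). The only cosmetic difference is that where you sketch the fundamental-matrix argument for ingredient (b), the paper simply invokes Lemma~\ref{breaking lem 9}, which packages exactly that perturbation result for stationary distributions.
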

The proof is in Section \ref{proof:tabular_h}.
Assumption~\ref{asp:a2'} is now verified.
Invoking Theorem~\ref{thm:sa} then yields
\begin{align}
\label{eq:key_inequality_tabular}
&\textstyle \E\qty[L(q_{t+1})] \\
\leq&\textstyle (1+f(t))\E\qty[L(q_t)] + \alpha_t\E \qty[\langle \nabla L(q_t), h(q_t)\rangle] + f(t).
\end{align}
The pseudo-contraction property of $\fT'$ allows us to bound the inner product in the RHS as below.
\begin{lemma}
\label{lem:bound inner tabular}
There exists a positive constant $C_\tref{lem:bound inner tabular}$ such that
\begin{align}
    \langle \nabla L(q_t), h(q_t) \rangle \leq -C_\tref{lem:bound inner tabular} L(q_t) \,\, \forall t.
\end{align}
\end{lemma}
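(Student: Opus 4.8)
The plan is to combine the pseudo-contraction of the weighted Bellman optimality operator $\fT'$ (Lemma~\ref{lem pseudo contract}) with the two gradient inequalities for the Moreau envelope in Lemma~\ref{lem property of M}(iv). The starting observation is that $q_*$ is the fixed point of $\fT'$, so $h(q_*)=\fT'q_*-q_*=0$; recalling $h(q)=\fT'q-q$ and $\nabla L(q_t)=\nabla M(q_t-q_*)$, I would write $h(q_t)=(\fT'q_t-q_*)-(q_t-q_*)$ and split the inner product accordingly:
\begin{align}
  \langle \nabla L(q_t), h(q_t) \rangle
  ={}& \langle \nabla M(q_t - q_*), \fT'q_t - q_* \rangle \\
  & - \langle \nabla M(q_t - q_*), q_t - q_* \rangle.
\end{align}

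For the subtracted term, Lemma~\ref{lem property of M}(iv) gives $\langle \nabla M(q_t-q_*), q_t-q_*\rangle \geq \norm{q_t-q_*}_m^2$. For the first term, the other half of Lemma~\ref{lem property of M}(iv) gives $\langle \nabla M(q_t-q_*), \fT'q_t-q_*\rangle \leq \norm{q_t-q_*}_m \norm{\fT'q_t-q_*}_m$, after which I would route through the $\ell_\infty$ norm, where the pseudo-contraction lives, using the norm-equivalence constants of Lemma~\ref{lem property of M}(iii):
\begin{align}
  \norm{\fT'q_t - q_*}_m
  &\leq \tfrac{1}{l_{im}} \norm{\fT'q_t - q_*}_\infty \\
  &\leq \tfrac{\beta_m}{l_{im}} \norm{q_t - q_*}_\infty \\
  &\leq \tfrac{\beta_m u_{im}}{l_{im}} \norm{q_t - q_*}_m.
\end{align}
Adding these estimates yields $\langle \nabla L(q_t),h(q_t)\rangle \leq -(1-\beta_m u_{im}/l_{im})\norm{q_t-q_*}_m^2 = -2(1-\beta_m u_{im}/l_{im})\,L(q_t)$, so one takes $C_\tref{lem:bound inner tabular} = 2(1-\beta_m u_{im}/l_{im})$. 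Note nothing in the argument uses that $q_t$ is an iterate, so the bound in fact holds at every $q$.

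The only genuine obstacle is making $C_\tref{lem:bound inner tabular}$ strictly positive, i.e., ensuring $\beta_m u_{im}/l_{im}<1$. The contraction factor $\beta_m = 1-(1-\gamma)\inf_{q,s,a}d_{\mu_q}(s,a)$ is a fixed number in $(0,1)$, strictly bounded away from $1$ precisely because $\epsilon>0$ forces $\inf_{q,s,a}d_{\mu_q}(s,a)>0$; hence it suffices to make the Moreau distortion ratio $u_{im}/l_{im}=\sqrt{(1+\xi u_{it}^2)/(1+\xi l_{it}^2)}$ close enough to $1$. Since this ratio decreases to $1$ as $\xi\downarrow 0$, choosing the envelope parameter $\xi$ small enough (depending only on $\beta_m$, $l_{it}$, $u_{it}$) gives $\beta_m u_{im}/l_{im}<1$. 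I would finally remark that this same $\xi$ controls the smoothness constant $L=2/\xi$ of Lemma~\ref{lem property of M}(i) and hence the $f(t)$ of Theorem~\ref{thm:sa}, but since $f(t)=\fO(\ln^2(t+t_0+1)/(t+t_0)^{2\epsilon_\alpha})$ is dominated by $\alpha_t$ for any fixed $L$, shrinking $\xi$ is harmless and introduces no circularity.
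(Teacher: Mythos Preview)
Your proposal is correct and follows essentially the same argument as the paper: split $h(q_t)=(\fT'q_t-q_*)-(q_t-q_*)$, apply the two inequalities in Lemma~\ref{lem property of M}(iv), pass to $\norm{\cdot}_\infty$ via the constants $l_{im},u_{im}$, invoke the pseudo-contraction of Lemma~\ref{lem pseudo contract}, and finally shrink $\xi$ so that $\beta_m u_{im}/l_{im}<1$. Your constant $2(1-\beta_m u_{im}/l_{im})$ differs from the paper's $1-\beta_m u_{im}/l_{im}$ only by the bookkeeping factor of $2$ from $L(q)=\tfrac12\norm{q-q_*}_m^2$; the added remark about the harmless effect of $\xi$ on the smoothness constant is a nice touch not made explicit in the paper.
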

The proof is in Section~\ref{proof:bound inner tabular}.
Plugging the bound in Lemma~\ref{lem:bound inner tabular} back to \eqref{eq:key_inequality_tabular},
we get
\begin{align}
    \E\qty[L(q_{t+1})] \leq&\textstyle (1 - C_\tref{lem:bound inner tabular} \alpha_t +f(t))\E\qty[L(q_t)]  + f(t).
\end{align}
Since $f(t)$ is dominated by $\alpha_t$,
telescoping the above inequality then generates the desired convergence rate.
The detailed steps are in Section~\ref{proof:thm stan_tabular},
which completes the proof.
\end{proof}

\section{Experiments}
\label{sec:exp}
We test unmodified linear $Q$-learning (without target networks, experience replay, weight projection, or regularization) on Baird's counterexample, a challenging benchmark for temporal difference methods, especially $Q$-learning (See Chapter~11 of \citet{sutton2018reinforcement}). For clearer experimental results, we use a constant learning rate $\alpha = 0.1$, we also select $\kappa_0 = 100$ and $\epsilon = 0.1$.

In Baird's counterexample, we have seven states, two actions, zero reward $r(s,a) = 0$ for all state-action pairs, and discount factor $\gamma = 0.99$. The environment uses a specific feature representation and all transitions lead to state 7 with probability 1, regardless of action. 

\begin{figure}[htbp]
    \centering
    \includegraphics[width=0.45\textwidth]{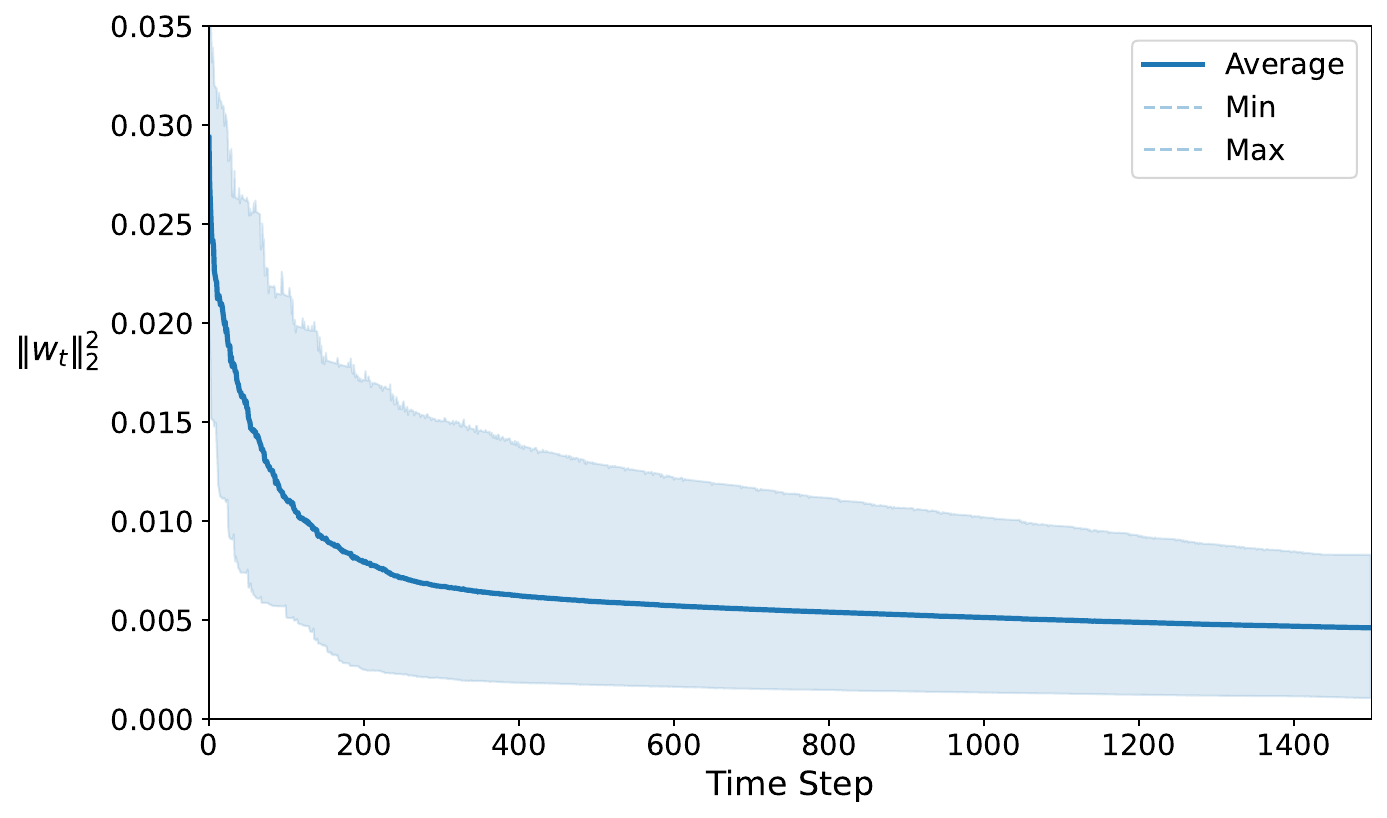}
    \caption{Convergence of~\eqref{eq:linear q update} with $\gamma = 0.99, \alpha = 0.1$. 
    The graph shows the evolution of $\|w_t\|_2^2$ over time steps, demonstrating stable 
    convergence behavior. The blue line represents the average of the squared $L^2$ norm of weights over 10 independent runs, and the shaded area indicates the range between minimum and maximum values.}
    \label{fig:convergence_no_mdf}
\end{figure}

Figure~\ref{fig:convergence_no_mdf} shows the evolution of $\norm{w_t}^2$ over $1500$ iterations. We observed that the weights of unmodified linear $Q$-learning remain stably bounded throughout training, supporting our theoretical findings. Appendix~\ref{sec:exp_compare} provides comparisons with versions incorporating other modifications.

\section{Conclusion}
\label{conclusion}
This paper establishes novel $L^2$ convergence rates for both linear and tabular $Q$-learning.
A key novelty of the result is that we allow the behavior policy to depend on the current action value estimation,
without making any algorithmic modification or strong assumptions.
Technically,
such a behavior policy is hard to analyze because it brings in time-inhomogeneous Markovian noise,
for which we provide Theorem~\ref{thm:sa} as a general tool.
A possible future work is to characterize $Q$-learning with such a behavior policy from other aspects, e.g., 
almost sure convergence rates,
high probability concentration,
and $L^p$ convergence rates,
following recent works like \citet{chen2025concentration,qian2024almost}.

\section*{Impact Statement}
This paper presents work whose goal is to advance the field of Machine Learning. There are many potential societal consequences of our work, none which we feel must be specifically highlighted here.

\section*{Acknowledgments and Disclosure of Funding}
This work is supported in part by the US National Science Foundation under grants III-2128019 and SLES-2331904. 

\bibliography{bibliography}
\bibliographystyle{icmL2025}

\newpage
\appendix
\onecolumn
\section{Auxiliary Lemmas and Notations}
\begin{lemma}[Definition 5.1 and Lemma 5.7 of \citet{beck2017first}]
\label{L-smooth}
    The following statements about a differentiable function $f(x)$ are equivalent:
    \begin{enumerate}[(i)]
    \item $f(x)$ is L-smooth w.r.t. a norm $\norm{\cdot}$.
    \item $\norm{\nabla f(x) - \nabla f(y)}_* \leq L\norm{x - y}$.
    \item $\abs{f(y) - f(x) - \langle\nabla f(x), y - x\rangle} \leq \frac{L}{2}\norm{x - y}^2$.
    \end{enumerate}
\end{lemma}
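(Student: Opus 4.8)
The plan is to prove the lemma as the classical equivalence between $L$-smoothness and the quadratic (descent-lemma) estimate, closing the chain of implications. Following \citet{beck2017first}, ``$f$ is $L$-smooth w.r.t.\ $\norm{\cdot}$'' in item~(i) means precisely that $\nabla f$ is $L$-Lipschitz when the gradient is measured in the dual norm, so items (i) and (ii) are the same statement and the only substantive content is the equivalence between (ii) and (iii). I would therefore establish (ii)~$\Rightarrow$~(iii) in full --- the direction that does real work --- and handle (iii)~$\Rightarrow$~(ii) by the standard converse argument.

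For (ii)~$\Rightarrow$~(iii), the plan is to fix $x, y \in \R[d]$ and write the first-order Taylor remainder of $f$ at $x$ as an integral of the gradient along the segment joining $x$ to $y$:
\begin{align}
f(y) - f(x) - \langle \nabla f(x), y - x \rangle = \int_0^1 \langle \nabla f(x + t(y - x)) - \nabla f(x),\; y - x \rangle \, dt,
\end{align}
which is valid since $\nabla f$ is continuous under~(ii). Pulling the absolute value inside the integral, bounding the integrand by the generalized Cauchy--Schwarz inequality $\langle a, b\rangle \le \norm{a}_* \norm{b}$ for the dual pair $(\norm{\cdot}, \norm{\cdot}_*)$, and then invoking~(ii) at the points $x + t(y-x)$ and $x$, one obtains
\begin{align}
\abs{f(y) - f(x) - \langle \nabla f(x), y - x \rangle} \le \int_0^1 \norm{\nabla f(x + t(y-x)) - \nabla f(x)}_* \norm{y - x} \, dt \le \int_0^1 L t \norm{y - x}^2 \, dt = \frac{L}{2}\norm{y - x}^2,
\end{align}
which is precisely~(iii). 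The only care required here is to keep the primal and dual norms paired correctly throughout.

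The reverse implication (iii)~$\Rightarrow$~(ii) is the main obstacle, since it asks us to recover the full dual-norm Lipschitz continuity of $\nabla f$ from the two-sided quadratic estimate alone. The route I would take is to mollify $f$ to reduce to the twice-differentiable case --- the estimate in~(iii) is inherited, with the same constant, under convolution with a smooth kernel --- then observe that for $C^2$ functions this estimate controls $\nabla^2 f$ in the operator norm induced by $\norm{\cdot}$, and finally pass to the limit; because this step is entirely classical, in the final write-up I would present only the (ii)~$\Rightarrow$~(iii) computation above in detail and cite Definition~5.1 and Lemma~5.7 of \citet{beck2017first} for the converse. Together with the definitional identification of~(i) and~(ii), this closes the equivalence.
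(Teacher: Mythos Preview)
The paper does not prove this lemma; it is stated as an auxiliary result and attributed to Definition~5.1 and Lemma~5.7 of \citet{beck2017first}, with no argument given. So there is no ``paper's own proof'' to compare against, and your write-up --- proving (ii)~$\Rightarrow$~(iii) via the integral remainder and deferring the converse to the cited reference --- is exactly the right level of detail for a cited auxiliary fact.

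One genuine issue with your framing, though: you assert that (i) and (ii) are definitionally identical because Beck defines $L$-smoothness as Lipschitz continuity of the gradient. But this paper does \emph{not} adopt Beck's definition. In Section~\ref{sec:background}, equation~\eqref{eq:smoothness}, the paper defines ``$f$ is $L$-smooth'' to mean the \emph{one-sided} upper bound $f(w') \le f(w) + \langle \nabla f(w), w'-w\rangle + \tfrac{L}{2}\norm{w'-w}_s^2$, which is strictly weaker than (iii) and is \emph{not} equivalent to (ii) for general differentiable $f$. (A concave $C^1$ function with non-Lipschitz gradient, e.g.\ $f(x) = -|x|^{3/2}$ on $\mathbb{R}$, satisfies \eqref{eq:smoothness} with $L=0$ but fails (ii).) So under the paper's conventions, (i) and (ii) are not the same statement, and the implication (i)~$\Rightarrow$~(ii) is the nontrivial one --- in fact it is false without an additional hypothesis such as convexity. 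This does not cause trouble downstream, since the lemma is only invoked for $L(w) = \tfrac{1}{2}\norm{w - w_\text{ref}}^2$, which is convex; but your proof plan should either note the convexity assumption or recognize that with the paper's definition the chain (i)~$\Leftrightarrow$~(ii) requires its own argument rather than being dismissed as definitional.
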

\begin{lemma}
  [Discrete Gronwall Inequality, Lemma 8 in Section 11.2 of \citet{borkar2009stochastic}]
\label{lem discrete_gronwall}
    For non-negative real sequences $\qty{x_n, n \geq 0}$ and $\qty{a_n, n \geq 0}$ and scalar  $L \geq 0$,  it holds
    \begin{align}
        \textstyle x_{n+1} \leq C + L\sum_{i=0}^n a_ix_i \quad \forall n
    \implies
        \textstyle x_{n+1} \leq (C+x_0)\exp({L\sum_{i=0}^n a_i}) \quad \forall n.
    \end{align}
\end{lemma}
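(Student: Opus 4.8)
The plan is the standard telescoping-plus-induction argument for discrete Gronwall inequalities, with one small bookkeeping twist to absorb the initial term. First I would introduce the auxiliary sequence
\begin{align}
  v_n \doteq C + x_0 + L\sum_{i=0}^{n-1} a_i x_i, \qquad n \geq 0,
\end{align}
with the convention that the empty sum at $n = 0$ is zero, so that $v_0 = C + x_0$. Assuming (as this form of the lemma tacitly does) that $C \geq 0$, and using that every $x_i$, $a_i$ and $L$ is non-negative, one checks directly that $x_n \leq v_n$ for all $n$: at $n = 0$ this reads $x_0 \leq C + x_0$, and at $n \geq 1$ it is precisely the hypothesis at index $n - 1$ together with $x_0 \geq 0$. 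In particular $x_{n+1} \leq v_{n+1}$.

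Next I would extract a one-step multiplicative recursion for $v_n$. From the definition, $v_{n+1} - v_n = L a_n x_n \leq L a_n v_n$, where the last step uses $x_n \leq v_n$ and $L a_n \geq 0$; hence $v_{n+1} \leq (1 + L a_n)\, v_n$. A routine induction on $n$ then gives
\begin{align}
  v_{n+1} \leq v_0 \prod_{i=0}^{n} (1 + L a_i) = (C + x_0) \prod_{i=0}^{n} (1 + L a_i).
\end{align}
Finally I would convert the product into an exponential via the elementary inequality $1 + t \leq e^{t}$, applied with $t = L a_i \geq 0$, which yields $\prod_{i=0}^{n} (1 + L a_i) \leq \exp\qty(L \sum_{i=0}^{n} a_i)$. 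Chaining this with $x_{n+1} \leq v_{n+1}$ gives the claimed bound $x_{n+1} \leq (C + x_0)\exp\qty(L \sum_{i=0}^{n} a_i)$ for all $n$.

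I do not expect any genuine obstacle, since the whole argument is elementary and this is essentially a textbook lemma. The only point that needs care — and the reason the conclusion carries the additive $x_0$ rather than just $C$ — is the index bookkeeping: the hypothesis controls $x_{n+1}$ in terms of $x_0, \ldots, x_n$ but imposes no constraint on $x_0$ itself, which is exactly why the auxiliary sequence must be seeded at $C + x_0$ instead of at $C$. If one preferred not to assume $C \geq 0$, the same argument still goes through after replacing $C$ by $\max\{C, 0\}$ on the right-hand side.
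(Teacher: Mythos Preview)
Your proof is correct and is the standard telescoping argument for discrete Gronwall. The paper itself does not prove this lemma at all --- it simply quotes it from \citet{borkar2009stochastic} as an auxiliary tool --- so there is no alternative approach to compare against; your careful handling of the index bookkeeping (seeding $v_0$ at $C + x_0$ rather than $C$) and the remark on the tacit nonnegativity of $C$ are both appropriate.
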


\begin{lemma}[Lemma 9 of \citet{zhang2021breaking}]
\label{breaking lem 9}
Let $\mathcal{U}$ be a set of policies and $\Lambda_P \doteq \qty{P_\mu \in \R[\ns\na \times \ns\na] \mid  \mu \in \mathcal{U}}$ be the set of induced state action transition matrices.
Let $\bar \Lambda_P$ be the closure of $\Lambda_P$.
Assume for each $P \in \bar \Lambda_P$,
the chain on $\fS \times \fA$ induced by $P$ is irreducible and aperiodic.
Let $d_{P_\mu}$ be the stationary distribution of the chain induced by $P_\mu$.
Then $d_{P_\mu}$ is Lipschitz continuous in $\mu$ on $\mathcal{U}$.
\end{lemma}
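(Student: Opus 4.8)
The plan is to reduce the claim to the classical perturbation bound for the stationary distribution of a finite ergodic Markov chain, and then to use compactness of $\bar\Lambda_P$ to make the bound uniform. Throughout, write $\mathbf 1$ for the all-ones vector on $\fS\times\fA$, and for a transition matrix $P$ whose induced chain is irreducible and aperiodic let $Z_P \doteq (I - P + \mathbf 1 d_{P}^\top)^{-1}$ denote the associated fundamental matrix, which is well defined precisely under this ergodicity hypothesis. First I would record two continuity facts on the set of ergodic transition matrices. (i) $d_P$ is the unique solution of the linear system obtained from $d^\top(I - P) = 0$ by replacing one scalar equation with the normalization $d^\top \mathbf 1 = 1$; for ergodic $P$ the coefficient matrix of this system is invertible (the left null space of $I - P$ is one dimensional and spanned by $d_P$, which is not orthogonal to $\mathbf 1$), so $d_P$ is a rational, hence continuous, function of the entries of $P$. (ii) Consequently $P\mapsto I - P + \mathbf 1 d_P^\top$ is continuous and everywhere invertible on this set, so $P\mapsto Z_P$ is continuous there as well.

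Next I would derive the perturbation identity. For ergodic $P_1,P_2$ with stationary distributions $d_{P_1},d_{P_2}$, using $d_{P_i}^\top P_i = d_{P_i}^\top$ one computes $(d_{P_1}-d_{P_2})^\top(I - P_2) = d_{P_1}^\top(P_1 - P_2)$; since $(d_{P_1}-d_{P_2})^\top\mathbf 1 = 0$, adding $(d_{P_1}-d_{P_2})^\top \mathbf 1 d_{P_2}^\top = 0$ and applying $Z_{P_2}$ yields
\begin{align}
(d_{P_1}-d_{P_2})^\top = d_{P_1}^\top(P_1 - P_2)\,Z_{P_2}.
\end{align}
Taking norms and using $\norm{d_{P_1}}_1 = 1$ gives $\norm{d_{P_1}-d_{P_2}} \le C\,\norm{Z_{P_2}}\,\norm{P_1 - P_2}$ with $C$ depending only on norm equivalence. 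Now $\Lambda_P$ is a bounded subset of $\R[\ns\na\times\ns\na]$, so $\bar\Lambda_P$ is compact; by hypothesis every $P\in\bar\Lambda_P$ is ergodic, so by (ii) the continuous map $P\mapsto Z_P$ attains a finite supremum $\kappa \doteq \sup_{P\in\bar\Lambda_P}\norm{Z_P}$, whence $\norm{d_{P_1}-d_{P_2}} \le C\kappa\,\norm{P_1 - P_2}$ for all $P_1,P_2\in\Lambda_P$. Finally, $\mu\mapsto P_\mu$ is linear in $\mu$ (each entry equals $p(s'|s,a)\,\mu(a'|s')$) and hence Lipschitz under any standard metric on policies; composing the two Lipschitz maps shows $\mu\mapsto d_{P_\mu}$ is Lipschitz on $\mathcal U$, which is the claim.

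I expect the one genuinely load-bearing step to be the uniform bound $\kappa < \infty$: this is exactly where the assumption that \emph{every} element of the closure $\bar\Lambda_P$ (not merely of $\Lambda_P$) induces an ergodic chain is indispensable, since it prevents $\norm{Z_P}$ — equivalently the mixing time — from diverging as $P$ approaches the boundary of $\Lambda_P$. An equivalent, more self-contained route to the same bound uses the family's uniform geometric mixing (cf.\ \eqref{eq:uniform_mixing}): then $Z_P = I + \sum_{n\ge 1}(P^n - \mathbf 1 d_P^\top)$ converges with norm bounded uniformly over $\bar\Lambda_P$, and the perturbation identity above may be replaced by a telescoping coupling estimate for $d_{P_1}-d_{P_2}$. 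Everything else is routine linear algebra together with bookkeeping of norm-equivalence constants.
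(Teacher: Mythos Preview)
Your argument is correct: the perturbation identity $(d_{P_1}-d_{P_2})^\top = d_{P_1}^\top(P_1-P_2)Z_{P_2}$ via the fundamental matrix, combined with compactness of $\bar\Lambda_P$ to uniformly bound $\norm{Z_P}$, is a standard and complete route to the claim, and you have correctly identified that the closure hypothesis is exactly what prevents the sensitivity constant from blowing up near the boundary.

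As for comparison, the paper does not supply its own proof of this statement at all: the lemma is listed among the auxiliary results in the appendix and attributed directly to \citet{zhang2021breaking}, with no argument reproduced. Your write-up therefore goes beyond what the present paper contains. The approach you take is essentially the one used in that reference (and is also the classical Schweitzer-type perturbation bound); your alternative route via the uniform mixing bound \eqref{eq:uniform_mixing} and the series representation of $Z_P$ is a nice remark and would integrate cleanly with the rest of the paper, since that uniform mixing is already invoked elsewhere.
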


Recall the definition of $A(w)$ in \eqref{eq:def A}. We have
\begin{lemma}[Lemma A.9 of \citet{meyn2024bellmen}] 
\label{lem:negative_definite_combined}
    There exists
    a positive constant $\bar \kappa_0$ such that
    for $\kappa_0>\Bar{\kappa_0}$ and $\epsilon<(1-\gamma)^2/\qty[(1-\gamma)^2+\gamma^2]$, 
    there exists a positive constant $\beta > 0$ such that $w^{\top} A(w) w \leq -\beta \norm{w}_2^2$ 
    holds for all $\norm{w}_2\geq 1$. To be more specific, \begin{align*}
\beta = \left[ (1 - \gamma) - \epsilon \gamma \sqrt{\epsilon^{-1} + (1 - \epsilon)^{-1}} \right] \lambda_{\min}(X^\top D_{\mu_w} X) - \gamma (1 - \epsilon) \frac{\log (|\mathcal{A}|)}{\kappa_0} \sqrt{\lambda_{\max}(X^\top D_{\mu_w} X)}
\end{align*}
where $\lambda_{\min}(\cdot)$ and $\lambda_{\max}(\cdot)$ denote the minimal and maximal eigenvalue of the matrix, respectively.
\end{lemma}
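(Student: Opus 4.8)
The plan is to rewrite $w^\top A(w)w$ as a $d_{\mu_w}$-weighted quadratic form on the state--action space, to strip off the ``behaviour'' part by a non-expansiveness argument, and to control the remaining ``greedy minus behaviour'' part using the adaptive temperature $\kappa_w$. Write $v\doteq Xw\in\R[\ns\na]$ and $\norm{u}_{d}^2\doteq u^\top D_d u$, with $D_d$ the diagonal matrix carrying a distribution $d$. From the definition of $A(w)$ in \eqref{eq:def A} one has $w^\top A(w)w=\gamma\,v^\top D_{\mu_w}P_{\pi_w}v-\norm{v}_{d_{\mu_w}}^2$. The first step I would carry out is to observe that $P_{\mu_w}$ is a non-expansion on $L^2(d_{\mu_w})$: Jensen's inequality together with the stationarity identity $\sum_{y}d_{\mu_w}(y)P_{\mu_w}(y,y')=d_{\mu_w}(y')$ yields $\norm{P_{\mu_w}v}_{d_{\mu_w}}\le\norm{v}_{d_{\mu_w}}$, hence $v^\top D_{\mu_w}P_{\mu_w}v\le\norm{v}_{d_{\mu_w}}^2$ by Cauchy--Schwarz. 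Adding and subtracting $\gamma P_{\mu_w}v$ then gives
\begin{align}
  w^\top A(w)w\le-(1-\gamma)\norm{v}_{d_{\mu_w}}^2+\gamma\,v^\top D_{\mu_w}(P_{\pi_w}-P_{\mu_w})v.
\end{align}

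The main step is to bound the discrepancy term. Using the $\epsilon$-softmax structure $\mu_w=\epsilon\,\mathrm{Unif}+(1-\epsilon)\tilde\mu_w$, where $\tilde\mu_w$ is the softmax policy with temperature $\kappa_w$, I would split $P_{\pi_w}-P_{\mu_w}=\epsilon(P_{\pi_w}-P_{\mathrm{Unif}})+(1-\epsilon)(P_{\pi_w}-P_{\tilde\mu_w})$. For the softmax term, the scalar $\max_{a'}x(s',a')^\top w-\sum_{a'}\tilde\mu_w(a'\mid s')x(s',a')^\top w$ is nonnegative and at most $\log\na/\kappa_w$ by the log-sum-exp inequality; on $\norm{w}_2\ge1$ this equals $\norm{w}_2\log\na/\kappa_0$, so after Cauchy--Schwarz this term contributes at most $\gamma(1-\epsilon)\frac{\log\na}{\kappa_0}\sqrt{\lambda_{\max}(X^\top D_{\mu_w}X)}\,\norm{w}_2^2$. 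For the uniform term I would use a $\chi^2$-divergence estimate — relying on $\mu_w(a\mid s)\ge\epsilon/\na$ for every action and $\tilde\mu_w(a_*\mid s)\ge1/\na$ for the greedy action $a_*$ — to get $\norm{(P_{\pi_w}-P_{\mathrm{Unif}})v}_{d_{\mu_w}}\le\sqrt{\epsilon^{-1}+(1-\epsilon)^{-1}}\,\norm{v}_{d_{\mu_w}}$, a contribution of at most $\epsilon\gamma\sqrt{\epsilon^{-1}+(1-\epsilon)^{-1}}\,\norm{v}_{d_{\mu_w}}^2$. Collecting the three pieces,
\begin{align}
  w^\top A(w)w\le&-\big[(1-\gamma)-\epsilon\gamma\sqrt{\epsilon^{-1}+(1-\epsilon)^{-1}}\big]\norm{v}_{d_{\mu_w}}^2\\
  &+\gamma(1-\epsilon)\frac{\log\na}{\kappa_0}\sqrt{\lambda_{\max}(X^\top D_{\mu_w}X)}\norm{w}_2^2.
\end{align}

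To finish, note that $\epsilon>0$ and Assumption~\tref{assum:markov} make the chain induced by $\mu_w$ uniformly ergodic, so $\inf_{w,s,a}d_{\mu_w}(s,a)>0$; with $X$ of full column rank this gives $\norm{v}_{d_{\mu_w}}^2\ge\lambda_{\min}(X^\top D_{\mu_w}X)\norm{w}_2^2$. Since $\epsilon\gamma\sqrt{\epsilon^{-1}+(1-\epsilon)^{-1}}=\gamma\sqrt{\epsilon/(1-\epsilon)}$, the bracket above is nonnegative precisely when $\epsilon<(1-\gamma)^2/[(1-\gamma)^2+\gamma^2]$, and in that case I may replace $\norm{v}_{d_{\mu_w}}^2$ by $\lambda_{\min}(X^\top D_{\mu_w}X)\norm{w}_2^2$, obtaining $w^\top A(w)w\le-\beta\norm{w}_2^2$ with the stated $\beta$; taking $\kappa_0$ above the threshold $\bar\kappa_0$ that makes $\big[(1-\gamma)-\gamma\sqrt{\epsilon/(1-\epsilon)}\big]\lambda_{\min}$ exceed $\gamma(1-\epsilon)\frac{\log\na}{\kappa_0}\sqrt{\lambda_{\max}}$ then renders $\beta>0$ for all $\norm{w}_2\ge1$.

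The hard part, I expect, is the discrepancy estimate in the second step. Because $\kappa_w=\kappa_0/\norm{w}_2$ stays bounded instead of diverging, $\tilde\mu_w$ is a genuinely non-greedy policy and there is no contraction to exploit; what makes the argument go through is precisely that the per-state log-sum-exp gap is of order $1/\kappa_w=\norm{w}_2/\kappa_0$, i.e.\ linear in $\norm{w}_2$ — exactly the order needed for it to be absorbed into the quadratic leading term, one power of $\norm{w}_2$ being supplied by $v=Xw$. Squeezing the $\epsilon$-uniform term down to the clean constant $\sqrt{\epsilon^{-1}+(1-\epsilon)^{-1}}$, and confirming the $w$-uniform positivity of $\lambda_{\min}(X^\top D_{\mu_w}X)$ (which uses the $\epsilon$-driven uniform ergodicity and a full-rank feature matrix), are the remaining technical points.
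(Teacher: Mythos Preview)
The paper does not give its own proof of this lemma: it is stated as an auxiliary result and attributed directly to Lemma~A.9 of \citet{meyn2024bellmen}. Your plan reproduces the structure of Meyn's argument faithfully --- write $w^\top A(w)w$ as a $D_{\mu_w}$-weighted quadratic in $v=Xw$, peel off $\gamma P_{\mu_w}$ via the $L^2(d_{\mu_w})$ non-expansion of the stationary transition kernel to get the $(1-\gamma)\norm{v}_{d_{\mu_w}}^2$ drift, then split the residual $P_{\pi_w}-P_{\mu_w}$ into a softmax-vs-greedy piece (controlled by the log-sum-exp gap $\log\na/\kappa_w=\norm{w}_2\log\na/\kappa_0$) and an $\epsilon$-uniform piece (controlled by a likelihood-ratio/$\chi^2$ bound using $\mu_w(a\mid s)\ge\epsilon/\na$ and $\tilde\mu_w(a_*\mid s)\ge1/\na$). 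So in substance you are aligned with the cited source.

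The only place where your sketch is thin is the exact constant in the uniform piece. Your note that $\epsilon\gamma\sqrt{\epsilon^{-1}+(1-\epsilon)^{-1}}=\gamma\sqrt{\epsilon/(1-\epsilon)}$ is the right observation; the cleanest way to land on that constant is not the triangle inequality $\norm{P_{\pi_w}v}+\norm{P_{\text{Unif}}v}$ but a single change-of-measure step bounding $\sum_{s'}d_S(s')\big(\sum_{a'}(\pi_w-\text{Unif})(a'\mid s')v(s',a')\big)^2$ directly in terms of $\norm{v}_{d_{\mu_w}}^2$ via the ratios $\pi_w/\mu_w$ and $\text{Unif}/\mu_w$, which is where the $(1-\epsilon)^{-1}$ and $\epsilon^{-1}$ contributions appear additively under the square root. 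Filling in that computation is the one piece of actual work left in your outline; everything else (the non-expansion, the log-sum-exp bound, the eigenvalue sandwich, and the $\epsilon$-threshold for positivity) is already correct as written.
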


We define shorthand
\begin{align}
   \alpha_{i,j} \doteq \sum_{t=i}^j \alpha_{t}, \quad C_x \doteq \max_{s, a} \norm{x(s, a)}_2, \quad C_r \doteq \norm{r}_{\infty}, \quad C_\text{ref} \doteq \norm{w_\text{ref}}.
\end{align}
We use $\fF_{t} = \sigma(w_0, Y_1, ..., Y_{t})$ to denote the filtration representing the history up to time step $t$.
Recall the definition of $\tau_\alpha$ in \eqref{eq:tau_alpha}.
We have
\begin{lemma}[Lemma 11 of \citet{zhang2022globaloptimalityfinitesample}]
\label{bound_tau_alpha}
For sufficiently large $t_0$, it holds that
    \begin{equation}
        \tau_{\alpha_t} = \fO\qty( \log (t+t_0)), \quad \alpha_{t-\tau_{\alpha_t}, t-1} = \fO\qty(\frac{\log (t+t_0)}{(t+t_0)^{\epsilon_\alpha}}).
    \end{equation}
\end{lemma}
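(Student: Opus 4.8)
\textbf{Proof proposal for Lemma~\ref{bound_tau_alpha}.}
The plan is to unwind the definition of $\tau_{\alpha_t}$ and then estimate the partial sum $\alpha_{t-\tau_{\alpha_t}, t-1}$ using the explicit form of the learning rate from Assumption~\tref{assu lr}. Recall from \eqref{eq:tau_alpha} that $\tau_{\alpha_t} = \min\{n \geq 0 \mid C_0 \tau^n \leq \alpha_t\}$, where $\tau \in (0,1)$ and $C_0 > 0$ are the mixing constants from \eqref{eq:uniform_mixing}. First I would solve the inequality $C_0 \tau^n \leq \alpha_t$ for $n$: taking logarithms (and noting $\ln \tau < 0$) gives $n \geq \frac{\ln(C_0 / \alpha_t)}{\ln(1/\tau)}$, so $\tau_{\alpha_t} = \ceil{\frac{\ln(C_0/\alpha_t)}{\ln(1/\tau)}}$. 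Substituting $\alpha_t = \frac{\alpha}{(t+t_0)^{\epsilon_\alpha}}$ yields $\ln(C_0/\alpha_t) = \ln(C_0/\alpha) + \epsilon_\alpha \ln(t + t_0)$, which is $\fO(\ln(t+t_0))$; this establishes the first claim $\tau_{\alpha_t} = \fO(\ln(t+t_0))$.

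For the second claim, I would bound $\alpha_{t - \tau_{\alpha_t}, t-1} = \sum_{i = t - \tau_{\alpha_t}}^{t-1} \alpha_i$. Since the number of summands is $\tau_{\alpha_t} = \fO(\ln(t+t_0))$ and the learning rate $\alpha_i = \frac{\alpha}{(i+t_0)^{\epsilon_\alpha}}$ is decreasing in $i$, each summand is at most $\alpha_{t - \tau_{\alpha_t}} = \frac{\alpha}{(t - \tau_{\alpha_t} + t_0)^{\epsilon_\alpha}}$. Hence
\begin{align}
  \alpha_{t - \tau_{\alpha_t}, t - 1} \leq \tau_{\alpha_t} \cdot \frac{\alpha}{(t - \tau_{\alpha_t} + t_0)^{\epsilon_\alpha}} = \fO\qty(\frac{\ln(t+t_0)}{(t - \tau_{\alpha_t} + t_0)^{\epsilon_\alpha}}).
\end{align}
The remaining task is to show $(t - \tau_{\alpha_t} + t_0)^{\epsilon_\alpha}$ is of the same order as $(t + t_0)^{\epsilon_\alpha}$, i.e., that subtracting the logarithmic quantity $\tau_{\alpha_t}$ from $t + t_0$ does not change the polynomial order. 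This is where choosing $t_0$ sufficiently large enters: for $t_0$ large enough (relative to the constants $C_0, \tau, \alpha, \epsilon_\alpha$), one has $\tau_{\alpha_t} \leq \frac{1}{2}(t + t_0)$ for all $t \geq 0$ — since $\tau_{\alpha_t}$ grows only logarithmically while $t + t_0 \geq t_0$ grows at least linearly — so $t - \tau_{\alpha_t} + t_0 \geq \frac{1}{2}(t + t_0)$, giving $(t - \tau_{\alpha_t} + t_0)^{\epsilon_\alpha} \geq 2^{-\epsilon_\alpha}(t + t_0)^{\epsilon_\alpha}$. Plugging this in completes the bound $\alpha_{t - \tau_{\alpha_t}, t - 1} = \fO\qty(\frac{\ln(t+t_0)}{(t+t_0)^{\epsilon_\alpha}})$.

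The main obstacle, though entirely routine, is the careful bookkeeping to make the ``sufficiently large $t_0$'' quantifier precise: one must verify that a single threshold on $t_0$ simultaneously guarantees (a) $\tau_{\alpha_t}$ is well-defined and finite (automatic since $\tau < 1$), and (b) the domination $\tau_{\alpha_t} \leq \frac{1}{2}(t + t_0)$ holds uniformly in $t$. The latter reduces to checking the inequality at $t = 0$ after monotonicity considerations, since the logarithm-versus-linear gap only widens as $t$ increases; this is a one-line calculus argument once the explicit expression for $\tau_{\alpha_t}$ from the first part is in hand. No deeper idea is needed — the lemma is a quantitative restatement of ``polynomial step sizes have logarithmic mixing time.''
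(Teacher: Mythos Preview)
Your proposal is correct and is the standard argument for this kind of statement. The paper itself does not supply a proof of Lemma~\ref{bound_tau_alpha}; it is stated as a direct citation of Lemma~11 of \citet{zhang2022globaloptimalityfinitesample} and used as a black box, so there is no in-paper proof to compare against. Your derivation --- inverting the geometric mixing condition to get $\tau_{\alpha_t} = \fO(\ln(t+t_0))$, then bounding the partial sum by $\tau_{\alpha_t} \cdot \alpha_{t-\tau_{\alpha_t}}$ and using $\tau_{\alpha_t} \leq \frac{1}{2}(t+t_0)$ for large $t_0$ --- is exactly how such a lemma is proved, and nothing is missing.
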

Lemma~\ref{bound_tau_alpha} ensures that there exists some $\bar t > 0$ (depending on $t_0$) such that for all $t \geq \bar t$,
it holds that $t \geq \tau_{\alpha_t}$.
Throughout the appendix, 
we always assume $t_0$ is sufficiently large and $t \geq \bar t$.
We will refine (i.e., increase) $\bar t$ along the proof when necessary.

\section{Proofs in Section~\ref{proof:thm_sa}}
\begin{lemma}
    \label{lem H h linear growth}
    There exists a constant $C_\tref{lem H h linear growth}$ such that for any $w, y, t$,
    \begin{align}
        \norm{H(w, y)} \leq& C_\tref{lem H h linear growth} (\norm{w} + 1), \\ 
        \norm{h(w_t)} \leq& C_\tref{lem H h linear growth} (\norm{w_t} + 1).
    \end{align}
\end{lemma}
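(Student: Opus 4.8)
The plan is to prove the two bounds separately but by essentially the same mechanism: both $H(w,y)$ and $h(w)$ are affine in $w$ up to a factor that depends on the state--action pair, and since $\fY$ is finite everything is controlled by the absolute constants $C_x = \max_{s,a}\norm{x(s,a)}_2$, $C_r = \norm{r}_\infty$, and $\gamma$. So the first step is to bound $\norm{H(w,y)}_2$ directly from its definition in~\eqref{eq:H_function}. Writing $y = (s,a,s')$, one has
\begin{align}
    \norm{H(w,y)}_2 \leq \abs{r(s,a) + \gamma \max_{a'} x(s',a')^\top w - x(s,a)^\top w}\,\norm{x(s,a)}_2.
\end{align}
Bounding the scalar factor by $C_r + \gamma C_x \norm{w}_2 + C_x \norm{w}_2$ via Cauchy--Schwarz (and $\abs{\max_{a'} x(s',a')^\top w} \le \max_{a'}\norm{x(s',a')}_2 \norm{w}_2 \le C_x\norm{w}_2$), and multiplying by $\norm{x(s,a)}_2 \le C_x$, gives $\norm{H(w,y)}_2 \le C_x C_r + (1+\gamma) C_x^2 \norm{w}_2$, which is of the claimed form with $C_\tref{lem H h linear growth}$ any constant at least $\max\{C_x C_r,\,(1+\gamma)C_x^2\}$. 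Since the norm $\norm{\cdot}$ in Theorem~\ref{thm:sa} is identified with $\norm{\cdot}_2$ for~\eqref{eq:linear q update} (and with the equivalent norm $\norm{\cdot}_m$ for~\eqref{eq:tabular q update}, which only changes the constant), this is enough.

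Next I would handle $h(w_t)$. Using the expression $h(w) = A(w)w + b(w)$ from~\eqref{eq:h_linear}, with $A(w) = X^\top D_{\mu_w}(\gamma P_{\pi_w} - I)X$ and $b(w) = X^\top D_{\mu_w} r$, I bound $\norm{A(w)}_2 \le \norm{X}_2^2 \,\norm{D_{\mu_w}}_2\,(\gamma\norm{P_{\pi_w}}_2 + 1)$. Here $\norm{D_{\mu_w}}_2 \le 1$ since $d_{\mu_w}$ is a probability distribution, and $\norm{P_{\pi_w}}_2$ is bounded by a constant depending only on $\na$ (a stochastic matrix has bounded spectral/operator norm after accounting for the row/column dimension), and $\norm{X}_2^2 \le \ns\na C_x^2$. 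Likewise $\norm{b(w)}_2 \le \norm{X}_2 \norm{r}_2 \le \sqrt{\ns\na}\,C_x \cdot \sqrt{\ns\na}\,C_r$. Hence $\norm{h(w)}_2 \le \norm{A(w)}_2\norm{w}_2 + \norm{b(w)}_2 \le C(\norm{w}_2 + 1)$ for an appropriate absolute constant $C$, and taking $C_\tref{lem H h linear growth}$ to be the max of this constant and the one from the first part completes the argument. Alternatively, and perhaps more cleanly, one can avoid the matrix algebra entirely: $h(w) = \E_{y\sim d_{\fY,w}}[H(w,y)]$ is an average of vectors each of which is bounded by $C_\tref{lem H h linear growth}(\norm{w}_2+1)$ by the first part, so Jensen's inequality gives $\norm{h(w)}_2 \le \E_{y\sim d_{\fY,w}}\norm{H(w,y)}_2 \le C_\tref{lem H h linear growth}(\norm{w}_2+1)$ immediately, with the \emph{same} constant. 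I would present this second route as the main proof since it is a one-liner given the first bound.

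I do not anticipate a genuine obstacle here — this is a routine linear-growth estimate. The only points requiring a little care are: (i) making sure the scalar TD-error factor is bounded correctly, in particular that $\abs{\max_{a'} x(s',a')^\top w}$ is handled by pulling the norm inside the max over the finite action set; (ii) noting that since all norms on $\R[d]$ are equivalent, the statement holds for the abstract norm $\norm{\cdot}$ used in Theorem~\ref{thm:sa} (whether it is $\norm{\cdot}_2$ in the linear case or $\norm{\cdot}_m$ in the tabular case) after adjusting the constant; and (iii) observing that the tabular case is literally the special case $X = I$, so no separate argument is needed. The finiteness of $\fY$ is what makes all the $\max$'s over $y$ attainable and hence the constant $C_\tref{lem H h linear growth}$ well-defined.
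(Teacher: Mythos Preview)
Your argument is correct but takes a different route from the paper. The lemma lives in the proof of the \emph{abstract} stochastic approximation result (Theorem~\ref{thm:sa}), and the paper proves it using only the Lipschitz assumptions~\ref{asp:a3} and~\ref{asp:a2}/\ref{asp:a2'}: from $\norm{H(w,y)-H(0,y)}\le C_{\tref{asp:a3}}\norm{w}$ the triangle inequality gives $\norm{H(w,y)}\le C_{\tref{asp:a3}}\norm{w}+\max_y\norm{H(0,y)}$, and the same ``Lipschitz implies linear growth'' trick is applied to $h$ via the Lipschitz constants in~\ref{asp:a2} or~\ref{asp:a2'}. Your proof instead unpacks the concrete definition~\eqref{eq:H_function} and the matrix form~\eqref{eq:h_linear}, which establishes the bound only for the specific linear/tabular instantiations rather than at the level of generality at which Theorem~\ref{thm:sa} is stated. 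The paper's approach buys modularity: Theorem~\ref{thm:sa} stands as a self-contained result under its own assumptions. Your approach buys explicit constants in terms of $C_x$, $C_r$, $\gamma$. Your Jensen-inequality route for $h$ is in fact cleaner than the paper's and works abstractly too (since $h(w)=\E_{y\sim d_{\fY,w}}[H(w,y)]$ holds in general), so if you had combined that with the abstract Lipschitz argument for $H$ you would have matched the paper's generality with a shorter proof of the second bound.
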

\begin{proof}
    Recall Assumption~\ref{asp:a3}, we have
    \begin{align}
        \norm{H(w,y)-H(0,y)} \leq C_\tref{asp:a3} \norm{w} \quad \forall w,y.
    \end{align}
    According to the triangle inequality, we can obtain
    \begin{align}
        \norm{H(w, y)} \leq \norm{H(w, y) - H(0, y)} + \norm{H(0, y)}.
    \end{align}
    Therefore, we can further obtain
    \begin{align}
        \norm{H(w, y)} \leq C_\tref{asp:a3} \norm{w} + \norm{H(0, y)}.
    \end{align}
    Thus, choosing $C_{\tref{lem H h linear growth},1} \doteq \max_y\qty{C_\tref{asp:a3}, H(0,y)}$ completes the proof for bounding $H(w,y)$.  
    Similar result can be obtained for $h(w)$ under Assumption~\ref{asp:a2} by choosing $C_{\tref{lem H h linear growth},2} \doteq \max\qty{C_\tref{asp:a2}, h(0)}$
    and under Assumption~\ref{asp:a2'} by choosing $C_{\tref{lem H h linear growth},2} \doteq \max\qty{C_\tref{asp:a2'}, h(0)}$.
    Therefore, we can choose $C_\tref{lem H h linear growth} \doteq \max\qty{C_{\tref{lem H h linear growth},1}, C_{\tref{lem H h linear growth},2}}$.
\end{proof}

\begin{lemma}
    \label{bound_wt_wt1}
    For sufficiently large $t_0$,
    there exists a constant $C_\tref{bound_wt_wt1}$ such that the following statement holds.
    For any $t \geq \bar t$ and any $i \in [t - \tau_{\alpha_t}, t]$,
    it holds that
        \begin{align}
            & \norm{w_i - w_{t-\tau_{\alpha_t}}} \leq C_\tref{bound_wt_wt1}\alpha_{t-\tau_{\alpha_t},i-1}(\norm{w_i} + 1) \label{eq:1}, \\
            & \norm{w_i - w_{t-\tau_{\alpha_t}}} \leq C_\tref{bound_wt_wt1}\alpha_{t-\tau_{\alpha_t},i-1}(\norm{w_i-w_\text{ref}} + 1)\label{eq:2}, \\
            & \norm{w_{t-\tau_{\alpha_t}}} \leq C_\tref{bound_wt_wt1}(\norm{w_t -w_\text{ref}} + 1)\label{eq:3}.
        \end{align}
    \end{lemma}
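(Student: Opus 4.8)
The plan is to prove all three inequalities in Lemma~\ref{bound_wt_wt1} by unrolling the update rule~\eqref{eq:Q_update_H_background} over the short time window $[t-\tau_{\alpha_t}, t]$ and controlling the accumulated increments using the linear growth bound from Lemma~\ref{lem H h linear growth} together with the smallness of the step-size sums guaranteed by Lemma~\ref{bound_tau_alpha}. First I would write, for any $i \in [t-\tau_{\alpha_t}, t]$,
\begin{align}
w_i - w_{t-\tau_{\alpha_t}} = \sum_{j=t-\tau_{\alpha_t}}^{i-1} \alpha_j H(w_j, Y_{j+1}),
\end{align}
so that by the triangle inequality and the bound $\norm{H(w_j, Y_{j+1})} \leq C_\tref{lem H h linear growth}(\norm{w_j}+1)$,
\begin{align}
\norm{w_i - w_{t-\tau_{\alpha_t}}} \leq C_\tref{lem H h linear growth} \sum_{j=t-\tau_{\alpha_t}}^{i-1} \alpha_j (\norm{w_j}+1).
\end{align}

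The next step is to replace $\norm{w_j}$ on the right-hand side by $\norm{w_i}$ (up to constants). The natural tool is a discrete Gronwall-type argument: writing $\norm{w_j} \leq \norm{w_i} + \norm{w_j - w_i} \leq \norm{w_i} + \norm{w_j - w_{t-\tau_{\alpha_t}}} + \norm{w_i - w_{t-\tau_{\alpha_t}}}$, one obtains a recursive inequality for the quantity $x_j \doteq \norm{w_j - w_{t-\tau_{\alpha_t}}}$ of the form $x_j \leq (\text{const})\sum_{k} \alpha_k (x_k + \norm{w_i} + 1)$, and applying Lemma~\ref{lem discrete_gronwall} with $C = \fO(\alpha_{t-\tau_{\alpha_t}, i-1})(\norm{w_i}+1)$ and using $\exp(\fO(\alpha_{t-\tau_{\alpha_t}, i-1})) = \fO(1)$ (valid for sufficiently large $t_0$, since $\alpha_{t-\tau_{\alpha_t}, t-1} = \fO(\log(t+t_0)/(t+t_0)^{\epsilon_\alpha}) \to 0$ by Lemma~\ref{bound_tau_alpha}) yields~\eqref{eq:1}. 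Inequality~\eqref{eq:2} follows immediately from~\eqref{eq:1} by the triangle inequality $\norm{w_i} \leq \norm{w_i - w_\text{ref}} + C_\text{ref}$, absorbing $C_\text{ref}$ into the $+1$ term and adjusting the constant.

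For~\eqref{eq:3}, I would take $i = t$ in~\eqref{eq:2} to get $\norm{w_t - w_{t-\tau_{\alpha_t}}} \leq C_\tref{bound_wt_wt1}\alpha_{t-\tau_{\alpha_t},t-1}(\norm{w_t - w_\text{ref}}+1) \leq C_\tref{bound_wt_wt1}(\norm{w_t - w_\text{ref}}+1)$ since $\alpha_{t-\tau_{\alpha_t},t-1} = \fO(1)$, and then $\norm{w_{t-\tau_{\alpha_t}}} \leq \norm{w_t - w_{t-\tau_{\alpha_t}}} + \norm{w_t} \leq \norm{w_t - w_{t-\tau_{\alpha_t}}} + \norm{w_t - w_\text{ref}} + C_\text{ref}$, which combines to the claimed bound after renaming the constant.

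The main obstacle I anticipate is the circularity in~\eqref{eq:1}: the right-hand side involves $\norm{w_i}$ at the same index $i$ whose displacement is being bounded, so one cannot simply sum the naive bound — the Gronwall step is essential and must be set up carefully so that the quantity being Gronwall-ed (the displacement $x_j$, not $\norm{w_j}$ itself) appears on both sides in the right form, and so that the multiplicative factor $\exp(\fO(\alpha_{t-\tau_{\alpha_t},t-1}))$ is genuinely $\fO(1)$, which is exactly where "sufficiently large $t_0$" and Lemma~\ref{bound_tau_alpha} enter. Everything else is routine bookkeeping with triangle inequalities and relabeling constants.
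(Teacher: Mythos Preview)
Your proposal is correct and follows essentially the same approach as the paper: telescope the updates, use the linear growth bound on $H$, apply discrete Gronwall, and exploit the smallness of $\alpha_{t-\tau_{\alpha_t},t-1}$ from Lemma~\ref{bound_tau_alpha}. The only organizational difference is that the paper centers the Gronwall step on $\norm{w_{t-\tau_{\alpha_t}}}$ rather than $\norm{w_i}$: it writes $\norm{w_k} \leq \norm{w_{t-\tau_{\alpha_t}}} + x_k$, Gronwalls to get $x_i \leq C\alpha_{t-\tau_{\alpha_t},i-1}(\norm{w_{t-\tau_{\alpha_t}}}+1)$, and only afterward substitutes $\norm{w_{t-\tau_{\alpha_t}}} \leq \norm{w_i} + x_i$ and solves the self-referential inequality. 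This is slightly cleaner than your route because your triangle inequality $\norm{w_j} \leq \norm{w_i} + x_j + x_i$ actually produces a recursion $x_j \leq C\sum_k\alpha_k(x_k + x_i + \norm{w_i}+1)$ (note the extra $x_i$, which you omitted); treating $x_i$ as part of the Gronwall ``constant'' still works but forces the same self-referential solve at the end anyway, so nothing is gained by centering on $w_i$ from the start. Your arguments for~\eqref{eq:2} and~\eqref{eq:3} match the paper's exactly.
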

    \begin{proof}
        In this proof, to simplify notations,
        we define shorthand $t_1 \doteq t - \tau_{\alpha_t}$.
        Given Lemma~\ref{bound_tau_alpha},
        we can select a sufficiently large $t_0$ such that for any $t \geq \bar t$,
        \begin{align}
\exp(C_\tref{lem H h linear growth} \alpha_{t- \tau_{\alpha_t} t - 1}) <& 3, \\
C_\tref{lem H h linear growth} \alpha_{t- \tau_{\alpha_t} t - 1} <& \frac{1}{6}.
        \end{align}
        We then bound $\norm{w_i - w_{t_1}}$ as 
        \begin{align}
            \norm{w_i - w_{t_1}} &= \sum_{k=t_1}^{i-1}\norm{\alpha_k H(w_k, Y_{k+1})}\\
            &\leq \sum_{k=t_1}^{i-1}\alpha_k C_\tref{lem H h linear growth}(\norm{w_k-w_{t_1}}+\norm{w_{t_1}}+1)\\
            &\leq \sum_{k=t_1}^{i-1}\alpha_k C_\tref{lem H h linear growth}(\norm{w_{t_1}}+1) + \sum_{k=t_1}^{i-1}\alpha_k C_\tref{lem H h linear growth}(\norm{w_k-w_{t_1}})\\
            &\leq C_{\tref{lem H h linear growth}}\alpha_{t_1,i-1} (\norm{w_{t_1}} + 1)\exp(C_\tref{lem H h linear growth}\alpha_{t_1, t - 1}). \qq{(Lemma~\ref{lem discrete_gronwall})}
        \end{align}
        We then have
        \begin{align}
            \norm{w_i - w_{t_1}} 
            &\leq C_{\tref{lem H h linear growth}}\alpha_{t_1,i-1} (\norm{w_i - w_{t_1}} + \norm{w_i} + 1)\exp(C_\tref{lem H h linear growth}\alpha_{t_1, t - 1}) \\
            &\leq \frac{1}{2}\norm{w_{i}-w_{t_1}} + C_{\tref{bound_wt_wt1},2}\alpha_{t_1,i-1}(\norm{w_{i}} + 1),
        \end{align}
        where we have defined $C_{\tref{bound_wt_wt1},2} \doteq 3 C_\tref{lem H h linear growth}$.
        Thus, we have
        \begin{equation}
            \norm{w_{i} - w_{t_1}} \leq 2C_{\tref{bound_wt_wt1},2}\alpha_{t_1,i-1}(\norm{w_{i}} + 1),
        \end{equation}
        which gives a proof of \eqref{eq:1}. 
        Furthermore, we obtain \eqref{eq:2} as
        \begin{align}
            \norm{w_i - w_{t_1}}
            &\leq 2C_{\tref{bound_wt_wt1},2}\alpha_{t_1,i-1}(\norm{w_{i}} + 1)\\
            &\leq 2C_{\tref{bound_wt_wt1},2}\alpha_{t_1,i-1}(\norm{w_{i}-w_{\text{ref}}}+\norm{w_{\text{ref}}} + 1)\\
            &\leq C_{\tref{bound_wt_wt1},3}\alpha_{t_1,i-1}(\norm{w_{i}-w_{\text{ref}}}+1).
        \end{align}
        For \eqref{eq:3}, taking $i = t$ in the above inequality yields
        \begin{align}
            \norm{w_{t_1}} - \norm{w_{t}}\leq \norm{w_{t} - w_{t_1}} \leq C_{\tref{bound_wt_wt1},3}\alpha_{t_1,t-1}(\norm{w_{t}-w_{\text{ref}}}+1).
        \end{align}
        That is
        \begin{align}
            \norm{w_{t_1}} &\leq C_{\tref{bound_wt_wt1},3}\alpha_{t_1,t-1}(\norm{w_{t}-w_{\text{ref}}}+1) + \norm{w_{t}-w_{\text{ref}}} + \norm{w_\text{ref}}\\
            &\leq C_{\tref{bound_wt_wt1},4}(\norm{w_{t}-w_{\text{ref}}}+1) + \norm{w_{t}-w_{\text{ref}}} + \norm{w_\text{ref}},
        \end{align}
        which completes the proof.
    \end{proof}
\subsection{Proof of Lemma~\ref{lem:bound T1}}
\label{proof:bound T1}
\begin{proof}
    According to \eqref{eq:noise_decomp},
    \begin{align}
        T_{1} =& \langle \nabla L(w_t) - \nabla L(w_{t-\tau_{\alpha_t} }), H(w_t, Y_{t+1}) - h(w_t) \rangle \\
        \leq &\norm{\nabla L(w_t) - \nabla L(w_{t-\tau_{\alpha_t} } )}_* \cdot\norm{H(w_t, Y_{t+1}) - h(w_t)}.
    \end{align}
    For the first term,
    \begin{align}
        &\norm{\nabla L(w_t ) - \nabla L(w_{t-\tau_{\alpha_t} })}_* \\
        &\leq L \norm{w_t - w_{t-\tau_{\alpha_t} } } \quad \text{(By \eqref{eq:Lyapunov} and Lemma~\ref{L-smooth})}  \\
        &\leq LC_\tref{bound_wt_wt1}\alpha_{t-\tau_{\alpha_t} ,t-1} (\norm{w_t - w_\text{ref}} + 1) \quad \text{(Lemma~\ref{bound_wt_wt1})}.
    \end{align}
    For the second term,
    \begin{align}
        \norm{H(w_t, Y_{t+1}) - h(w_t)} \leq C_\tref{lem H h linear growth}(\norm{w_t}+1) + C_\tref{lem H h linear growth}(\norm{w_t}+1) \leq 2C_\tref{lem H h linear growth} (\norm{w_t-w_\text{ref}} + C_\text{ref} + 1).
    \end{align}
    Combining the two inequalities yields
    \begin{align}
        &\langle \nabla L(w_t) - \nabla L(w_{t-\tau_{\alpha_t}} ), H(w_t, Y_{t+1}) - h(w_t)\rangle \\
        &\leq LC_\tref{bound_wt_wt1}\alpha_{t-\tau_{\alpha_t} ,t-1} (\norm{w_t - w_\text{ref}} + C_\text{ref} + 1)\cdot 2C_\tref{lem H h linear growth} (\norm{w_t-w_\text{ref}} + C_\text{ref} +1) \\
        &\leq C_{\tref{lem:bound T1},1}\alpha_{t-\tau_{\alpha_t} ,t-1}(\norm{w_t-w_\text{ref}}+1)^2\\
        &\leq 2C_{\tref{lem:bound T1},1}\alpha_{t-\tau_{\alpha_t} ,t-1}(\norm{w_t-w_\text{ref}}^2+1).
    \end{align}
    Choosing $C_\tref{lem:bound T1} \doteq 4C_{\tref{lem:bound T1},1}$ then completes the proof.
\end{proof}    

\subsection{Proof of Lemma~\ref{lem:bound T2}}
\label{proof:bound T2}
\begin{proof}
    According to \eqref{eq:noise_decomp},
\begin{align}
    T_2 = &\langle \nabla L(w_{t-\tau_{\alpha_t} }), H(w_t, Y_{t+1}) - H(w_{t-\tau_{\alpha_t} }, Y_{t+1}) + h(w_{t-\tau_{\alpha_t} }) - h(w_t) \rangle\\
    \leq &\norm{\nabla L(w_{t-\tau_{\alpha_t} })}_* \cdot\norm{H(w_t, Y_{t+1}) - H(w_{t-\tau_{\alpha_t} }, Y_{t+1}) + h(w_{t-\tau_{\alpha_t} }) - h(w_t)} .
\end{align}
For the first term, it is trivial to see $\nabla L(w_\text{ref}) = 0$.
We can then obtain
\begin{align}
    &\norm{\nabla L(w_{t-\tau_{\alpha_t} })}_*\\
    = &\norm{\nabla L(w_{t-\tau_{\alpha_t}}) - \nabla L(w_\text{ref})}_*\\
    \leq &L\norm{w_{t-\tau_{\alpha_t} } - w_\text{ref}} \quad \text{(Lemma~\ref{L-smooth})} \\
    \leq &L(\norm{w_{t-\tau_{\alpha_t} } - w_t} + \norm{w_t - w_\text{ref}}) 
\end{align}
Recall for $\bar t$ sufficiently large, we have $C_{\tref{bound_wt_wt1}} \alpha_{t-\tau_{\alpha_t},t-1}<1$.
Applying Lemma~\ref{bound_wt_wt1} then yields
\begin{align}
    \norm{\nabla L(w_{t-\tau_{\alpha_t} })}_* \leq &L(\norm{w_{t-\tau_{\alpha_t} } - w_t} + \norm{w_t - w_\text{ref}})\\
    \leq &L(\norm{w_t - w_\text{ref}} + 1 + \norm{w_t - w_\text{ref}}) \\
    \leq & L(2\norm{w_t - w_\text{ref}} + 1).\label{eq:nablaL}
\end{align}
For the second term, 
\begin{align}
    &\norm{H(w_t, Y_{t+1}) - H(w_{t-\tau_{\alpha_t} }, Y_{t+1}) + h(w_{t-\tau_{\alpha_t} }) - h(w_t)} \\
    \leq & (C_\tref{asp:a3} + \max\qty{C_\tref{asp:a2}, C_\tref{asp:a2'}})\norm{w_{t-\tau_{\alpha_t} } - w_t} \\
    \leq & C_\tref{bound_wt_wt1}(C_\tref{asp:a3} + \max\qty{C_\tref{asp:a2}, C_\tref{asp:a2'}})\alpha_{t-\tau_{\alpha_t} ,t-1}(\norm{w_t} + 1) \quad \text{(Lemma~\ref{bound_wt_wt1})} \\
    \leq & C_\tref{bound_wt_wt1}(C_\tref{asp:a3} + \max\qty{C_\tref{asp:a2}, C_\tref{asp:a2'}})\alpha_{t-\tau_{\alpha_t} ,t-1}(\norm{w_t-w_\text{ref}} +C_\text{ref} + 1).
\end{align}
Combining the two inequalities together yields
\begin{align}
    &\langle\nabla L(w_{t-\tau_{\alpha_t} } ), H(w_t, Y_{t+1}) - H(w_{t-\tau_{\alpha_t} }, Y_{t+1}) + h(w_{t-\tau_{\alpha_t} }) - h(w_t) \rangle \\
    \leq &L(2\norm{w_t - w_\text{ref}} + 1) \cdot C_\tref{bound_wt_wt1}(C_\tref{asp:a3} + \max\qty{C_\tref{asp:a2}, C_\tref{asp:a2'}})\alpha_{t-\tau_{\alpha_t} ,t-1}(\norm{w_t-w_\text{ref}} +C_\text{ref} + 1)\\
    \leq &C_{\tref{lem:bound T2},1}\alpha_{t-\tau_{\alpha_t} ,t-1}(\norm{w_t-w_\text{ref}}+1)^2.
\end{align}
Choosing $C_\tref{lem:bound T2}\doteq 4 C_{\tref{lem:bound T2},1}$ then completes the proof.
\end{proof}

\subsection{Proof of Lemma~\ref{lem:bound T31}}
\label{proof:bound T31}
\begin{proof}
    For the first component of $T_3$ in \eqref{eq:T3_decomp}, we have
    \begin{align}
        \E\qty[T_{31}\big| \fF_{t-\tau_{\alpha_t}}] &= \E\qty[\left\langle \nabla L(w_{t-\tau_{\alpha_t}}), H(w_{t-\tau_{\alpha_t}}, \tilde{Y}_{t+1}) - h(w_{t-\tau_{\alpha_t}}) \right\rangle \big| \fF_{t-\tau_{\alpha_t}}]\\
        &\leq \left\langle \nabla L(w_{t-\tau_{\alpha_t}}), \E\qty[H(w_{t-\tau_{\alpha_t}}, \tilde{Y}_{t+1}) - h(w_{t-\tau_{\alpha_t}}) \big| \fF_{t-\tau_{\alpha_t}}] \right\rangle\\
        &\leq \norm{\nabla L(w_{t-\tau_{\alpha_t}})}_*\cdot\norm{\E\qty[H(w_{t-\tau_{\alpha_t}}, \tilde{Y}_{t+1}) - h(w_{t-\tau_{\alpha_t}})| \fF_{t-\tau_{\alpha_t}}]}.
    \end{align}
    The first term is bounded in \eqref{eq:nablaL}.
    For the second term, we have 
    \begin{align}
        &\norm{\E\qty[H(w_{t-\tau_{\alpha_t}}, \tilde{Y}_{t+1}) - h(w_{t-\tau_{\alpha_t}})\big| \fF_{t-\tau_{\alpha_t}}]}\\
        =&\norm{\sum_{y}H(w_{t-\tau_{\alpha_t}},y)P(\tilde{Y}_{t+1}=y | \fF_{t-\tau_{\alpha_t}})
        -\sum_{y}H(w_{t-\tau_{\alpha_t}},y)d_{\fY, w_{t-\tau_{\alpha_t}}}(y)}\\
        =&\norm{\sum_{y}H(w_{t-\tau_{\alpha_t}},y)(P(\tilde{Y}_{t+1}=y | \fF_{t-\tau_{\alpha_t}})-d_{\fY, w_{t-\tau_{\alpha_t}}}(y))}\\
        \leq&\sum_{y}\norm{H(w_{t-\tau_{\alpha_t}},y)}\cdot\abs{P(\tilde{Y}_{t+1}=y | \fF_{t-\tau_{\alpha_t}})-d_{\fY, w_{t-\tau_{\alpha_t}}}(y)}\\
        \leq&\max_y\norm{H(w_{t-\tau_{\alpha_t}},y)} \sum_{y}\abs{P(\tilde{Y}_{t+1}=y | \fF_{t-\tau_{\alpha_t}})-d_{\fY, w_{t-\tau_{\alpha_t}}}(y)}\\
        \leq& \alpha_t C_\tref{lem H h linear growth} (\norm{w_{t-\tau_{\alpha_t}}} + 1) \quad\text{(By \eqref{eq:uniform_mixing}, \eqref{eq:tau_alpha} and Lemma~\tref{lem H h linear growth})}\\
        \leq& \alpha_t C_\tref{lem H h linear growth}(\norm{w_{t-\tau_{\alpha_t}}-w_t}+\norm{w_t-w_\text{ref}}+C_\text{ref}+1).
    \end{align}
    Combining the two bounds yields
    \begin{align}
        &\E\qty[\langle \nabla L(w_{t-\tau_{\alpha_t}}), H(w_{t-\tau_{\alpha_t}}, \tilde{Y}_{t+1}) - h(w_{t-\tau_{\alpha_t}}) \rangle | \fF_{t-\tau_{\alpha_t}}] \\
    \leq & L C_\tref{lem H h linear growth}\alpha_t(2\norm{w_t - w_\text{ref}} + C_\text{ref} + C_\tref{bound_wt_wt1})(C_\tref{bound_wt_wt1}\alpha_{t-\tau_{\alpha_t} ,t-1}(\norm{w_t} + 1)+\norm{w_t-w_\text{ref}}+ C_\text{ref}+1)\quad\text{(Lemma~\tref{bound_wt_wt1})}\\
    \leq& L C_\tref{lem H h linear growth}C_\tref{bound_wt_wt1}\alpha_t\qty(\norm{w_t - w_\text{ref}} + C_\text{ref} + C_\tref{bound_wt_wt1} + C_{\tref{lem:bound T31},1})^2\\
    \leq& LC_\tref{lem H h linear growth}C_\tref{bound_wt_wt1}\alpha_t(\norm{w_t - w_\text{ref}}^2 + C_{\tref{lem:bound T31},2}).
    \end{align}
    Choosing $C_\tref{lem:bound T31} \doteq LC_\tref{lem H h linear growth}C_\tref{bound_wt_wt1}C_{\tref{lem:bound T31},2}$ then completes the proof.
\end{proof}

\subsection{Proof of Lemma~\ref{lem:bound T32}}
\label{proof:bound T32}
We start with an auxiliary result.
\begin{lemma}
    \label{lem:diff_Y}
     Then there exist a constant $C_\tref{lem:diff_Y}$ such that
        \begin{align}
            &\textstyle\E\qty[\norm{H(w_{t-\tau_{\alpha_t}},Y_{t+1}) - H(w_{t-\tau_{\alpha_t}},\tilde{Y}_{t+1}) } \big| \fF_{t-\tau_{\alpha_t}}] \textstyle\leq C_\tref{lem:diff_Y}\alpha_{t-\tau_{\alpha_t},t-1}\ln(t+t_0+1) (\norm{w_t-w_\text{ref}} + 1).
        \end{align}
\end{lemma}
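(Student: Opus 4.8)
The plan is to bound the expected discrepancy between the two chains $\{Y_t\}$ and $\{\tilde Y_t\}$ by controlling, step by step from $t-\tau_{\alpha_t}$ to $t$, how much their transition kernels can differ. Recall that both chains coincide up to time $t-\tau_{\alpha_t}$, after which $\tilde Y$ evolves under the frozen kernel $P_{w_{t-\tau_{\alpha_t}}}$ while $Y$ evolves under $P_{w_{t-\tau_{\alpha_t}}}, P_{w_{t-\tau_{\alpha_t}+1}}, \dots$. First I would couple the two chains so that they differ only when a ``disagreement'' event occurs. A standard way: on each step $k \in [t-\tau_{\alpha_t}, t]$, given $\tilde Y_k = Y_k$, the probability that the next states differ is at most $\frac{1}{2}\norm{P_{w_k} - P_{w_{t-\tau_{\alpha_t}}}}_{\text{TV-type}}$, and once they differ we simply upper bound $\norm{H(w_{t-\tau_{\alpha_t}}, Y_{t+1}) - H(w_{t-\tau_{\alpha_t}}, \tilde Y_{t+1})}$ by $2\max_y \norm{H(w_{t-\tau_{\alpha_t}}, y)} \leq 2C_\tref{lem H h linear growth}(\norm{w_{t-\tau_{\alpha_t}}}+1)$ using Lemma~\ref{lem H h linear growth}, since $\fY$ is finite.

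Next I would bound the per-step kernel difference. Under Assumption~\tref{asp:a2}, $\norm{P_{w_k} - P_{w_{t-\tau_{\alpha_t}}}} \leq \frac{C_\tref{asp:a2}}{1+\norm{w_k}+\norm{w_{t-\tau_{\alpha_t}}}}\norm{w_k - w_{t-\tau_{\alpha_t}}}$; by Lemma~\ref{bound_wt_wt1} (eq.~\eqref{eq:1}) this is at most $\frac{C_\tref{asp:a2} C_\tref{bound_wt_wt1}\alpha_{t-\tau_{\alpha_t}, k-1}(\norm{w_k}+1)}{1+\norm{w_k}+\norm{w_{t-\tau_{\alpha_t}}}} \leq C' \alpha_{t-\tau_{\alpha_t}, k-1}$ — the crucial point is that the denominator cancels the $\norm{w_k}+1$ factor, leaving a bound with no weight dependence at all. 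Under Assumption~\tref{asp:a2'}, one instead uses the plain Lipschitz bound $\norm{P_{w_k} - P_{w_{t-\tau_{\alpha_t}}}} \leq C_\tref{asp:a2'}\norm{w_k - w_{t-\tau_{\alpha_t}}} \leq C_\tref{asp:a2'} C_\tref{bound_wt_wt1}\alpha_{t-\tau_{\alpha_t},k-1}(\norm{w_k}+1)$, and here the boundedness $\sup_t \norm{w_t} < U_\tref{asp:a2'}$ a.s. caps the $\norm{w_k}+1$ factor. Either way, summing the disagreement probabilities over $k \in [t-\tau_{\alpha_t}, t]$ and using $\alpha_{t-\tau_{\alpha_t}, k-1} \leq \alpha_{t-\tau_{\alpha_t}, t-1}$ together with $\tau_{\alpha_t} = \fO(\ln(t+t_0))$ (Lemma~\ref{bound_tau_alpha}) yields a total disagreement probability of order $\alpha_{t-\tau_{\alpha_t}, t-1}\ln(t+t_0+1)$. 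Multiplying by the magnitude bound $2C_\tref{lem H h linear growth}(\norm{w_{t-\tau_{\alpha_t}}}+1)$ and converting $\norm{w_{t-\tau_{\alpha_t}}}$ to $\norm{w_t - w_\text{ref}}+1$ via eq.~\eqref{eq:3} of Lemma~\ref{bound_wt_wt1} gives the claimed bound $C_\tref{lem:diff_Y}\alpha_{t-\tau_{\alpha_t}, t-1}\ln(t+t_0+1)(\norm{w_t - w_\text{ref}}+1)$.

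A subtle point is that the magnitude factor $\norm{w_{t-\tau_{\alpha_t}}}+1$ and the disagreement probability are not independent, so when taking $\E[\cdot \mid \fF_{t-\tau_{\alpha_t}}]$ I should be careful: conditioned on $\fF_{t-\tau_{\alpha_t}}$, the quantity $w_{t-\tau_{\alpha_t}}$ is measurable, so $\norm{w_{t-\tau_{\alpha_t}}}$ comes out of the conditional expectation as a constant, and only the disagreement probabilities (which depend on the future $w_k$'s) remain random inside. Under Assumption~\tref{asp:a2'} this is immediate because the per-step bounds $C'\alpha_{t-\tau_{\alpha_t},k-1}$ are deterministic. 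Under Assumption~\tref{asp:a2} it is also fine because, after cancellation, the per-step disagreement bound is again deterministic. Then eq.~\eqref{eq:3} is applied \emph{outside} the conditional expectation (or noting $w_{t-\tau_{\alpha_t}}$ is $\fF_{t-\tau_{\alpha_t}}$-measurable, inside as well).

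The main obstacle, and the genuinely novel part highlighted in the remark after Lemma~\ref{lem:bound T32}, is handling the kernel-difference term without a projection: one cannot afford a bound of the form $\text{const}\cdot\norm{w_k - w_{t-\tau_{\alpha_t}}}$ because $\norm{w_k - w_{t-\tau_{\alpha_t}}}$ itself scales like $\alpha_{t-\tau_{\alpha_t},k-1}(\norm{w_k}+1)$, which would leave an uncontrolled $\norm{w_k}$ (the future iterate) multiplying everything — and without a projection there is no a priori cap on $\norm{w_k}$ in the linear case. This is precisely why Assumption~\tref{asp:a2} uses the stronger Lipschitz condition with the $1/(1+\norm{w_1}+\norm{w_2})$ prefactor: it provides exactly the cancellation needed so that the final bound depends only on $\norm{w_t - w_\text{ref}}$, which feeds cleanly into the recursion of Theorem~\ref{thm:sa}. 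Organizing the coupling argument so the cancellation is transparent, and keeping track of which quantities are $\fF_{t-\tau_{\alpha_t}}$-measurable versus random, is the crux of the proof.
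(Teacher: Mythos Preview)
Your proposal is correct and follows essentially the same approach as the paper: the paper phrases the argument as a recursion on the total-variation distance between the marginals of $Y_j$ and $\tilde Y_j$ (adding a per-step increment bounded via the kernel Lipschitz condition, with the same cancellation under Assumption~\tref{asp:a2} and the same a.s.\ cap under Assumption~\tref{asp:a2'}), whereas you phrase it as a coupling/union-bound on disagreement events---these are equivalent formulations of the same bound. Your handling of the $\fF_{t-\tau_{\alpha_t}}$-measurability issue, the cancellation that removes the $\norm{w_k}$ dependence, the summation via $\tau_{\alpha_t}=\fO(\ln(t+t_0))$, and the final conversion through eq.~\eqref{eq:3} all match the paper's proof.
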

\begin{proof}
    In this proof, all $\Pr$ are probabilities implicitly conditioned on $\fF_{t-\tau_{\alpha_t}}$.
    For simplicity, we define $P_t \doteq P_{w_t}$.
    \begin{align}
        \Pr(Y_t = y') &= \sum_y \Pr(Y_t = y', Y_{t-1} = y)
        = \sum_y P_t(y,y') \Pr(Y_{t-1} = y), \\
        \Pr(\tilde{Y}_t = y') &= \sum_y \Pr(\tilde{Y}_{t-1} = y) P_{t-\tau_{\alpha_t}}(y,y').
       \end{align}
    Consequently,
    \begin{align}
        &\sum_{y'} \abs{\Pr(Y_t = y') - \Pr(\tilde{Y}_t = y')} \leq \sum_{y,y'} \abs{\Pr(Y_{t-1} = y)P_t(y,y') - \Pr(\tilde{Y}_{t-1} = y)P_{t-\tau_{\alpha_t}}(y,y')}.
    \end{align}
    We now consider two cases.

    \textbf{Case 1. Under Assumption~\ref{asp:a2}}
    \begin{align}
        &\abs{\Pr(Y_{t-1} = y)P_t(y,y') - \Pr(\tilde{Y}_{t-1} = y)P_{t-\tau_{\alpha_t}}(y,y')} \\
        \leq &\abs{\Pr(Y_{t-1} = y)P_t(y,y') - P(\tilde{Y}_{t-1} = y)P_t(y,y')}+ \abs{P(\tilde{Y}_{t-1} = y)P_t(y,y') - P(\tilde{Y}_{t-1} = y)P_{t-\tau_{\alpha_t}}(y,y')} \\
        \leq &\abs{\Pr(Y_{t-1} = y) - \Pr(\tilde{Y}_{t-1} = y)} P_t(y,y') + C_\tref{asp:a2} \frac{\norm{w_t - w_{t-\tau_{\alpha_t}}}}{\norm{w_t}+\norm{w_{t-\tau_{\alpha_t}}}+1}  \Pr(\tilde{Y}_{t-1} = y) \\
        \leq & \abs{\Pr(Y_{t-1} = y) - \Pr(\tilde{Y}_{t-1} = y)} P_t(y,y') + C_\tref{asp:a2}C_\tref{bound_wt_wt1}\frac{\alpha_{t-\tau_{\alpha_t},t-1}(\norm{w_{t}} + 1)}{\norm{w_t}+\norm{w_{t-\tau_{\alpha_t}}}+1} \Pr(\tilde{Y}_{t-1} = y) \quad\text{(Lemma~\tref{bound_wt_wt1})}\\
        \leq & \abs{\Pr(Y_{t-1} = y) - \Pr(\tilde{Y}_{t-1} = y)} P_t(y,y') + C_{\tref{lem:diff_Y},1}\alpha_{t-\tau_{\alpha_t},t-1},
    \end{align}
    where $C_{\tref{lem:diff_Y},1}\doteq C_\tref{asp:a2} C_\tref{bound_wt_wt1}$.

    \textbf{Case 2. Under Assumption~\ref{asp:a2'}}
    \begin{align}
        &\abs{\Pr(Y_{t-1} = y)P_t(y,y') - \Pr(\tilde{Y}_{t-1} = y)P_{t-\tau_{\alpha_t}}(y,y')} \\
        \leq &\abs{\Pr(Y_{t-1} = y)P_t(y,y') - \Pr(\tilde{Y}_{t-1} = y)P_t(y,y')}+ \abs{\Pr(\tilde{Y}_{t-1} = y)P_t(y,y') - \Pr(\tilde{Y}_{t-1} = y)P_{t-\tau_{\alpha_t}}(y,y')} \\
        \leq &\abs{\Pr(Y_{t-1} = y) - \Pr(\tilde{Y}_{t-1} = y)} P_t(y,y') + C_\tref{asp:a2'} \norm{w_t - w_{t-\tau_{\alpha_t}}} \Pr(\tilde{Y}_{t-1} = y)\\
        \leq & \abs{\Pr(Y_{t-1} = y) - \Pr(\tilde{Y}_{t-1} = y)} P_t(y,y') + C_\tref{asp:a2'}C_\tref{bound_wt_wt1} \alpha_{t-\tau_{\alpha_t},t-1}(\norm{w_{t}} + 1) \Pr(\tilde{Y}_{t-1} = y) \quad\text{(Lemma~\tref{bound_wt_wt1})}\\
        \leq & \abs{\Pr(Y_{t-1} = y) - \Pr(\tilde{Y}_{t-1} = y)} P_t(y,y') + C_\tref{asp:a2'}C_\tref{bound_wt_wt1} \alpha_{t-\tau_{\alpha_t},t-1}(U_\tref{asp:a2'} + 1) \Pr(\tilde{Y}_{t-1} = y) \\
        \leq & \abs{\Pr(Y_{t-1} = y) - \Pr(\tilde{Y}_{t-1} = y)} P_t(y,y') + C_{\tref{lem:diff_Y},2}\alpha_{t-\tau_{\alpha_t},t-1},
    \end{align}
    where $C_{\tref{lem:diff_Y},2}\doteq C_\tref{asp:a2'}C_\tref{bound_wt_wt1} (U_\tref{asp:a2'} + 1) $.
    
    Thus, denote $C_{\tref{lem:diff_Y},3}\doteq\max\qty{C_{\tref{lem:diff_Y},1}, C_{\tref{lem:diff_Y},2}}$, we have
    \begin{equation*}
        \sum_{y'} \abs{\Pr(Y_t = y') - \Pr(\tilde{Y}_t = y')} \leq \sum_y \abs{\Pr(Y_{t-1} = y) - \Pr(\tilde{Y}_{t-1} = y)} + C_{\tref{lem:diff_Y},3}\abs{\fY}\alpha_{t-\tau_{\alpha_t},t-1}.
    \end{equation*}
    Applying the above inequality recursively yields
    \begin{align*}
        \sum_{y'} \abs{\Pr(Y_t = y') - \Pr(\tilde{Y}_t = y')} \leq C_{\tref{lem:diff_Y},3}\abs{\fY} \sum_{j=t-\tau_{\alpha_t}}^{t-1}\alpha_{t-\tau_{\alpha_t},j}
    \end{align*}
    For the summation term, we have
    \begin{align*}
        \frac{\sum_{j=t-\tau_{\alpha_t}}^{t-1}\alpha_{t-\tau_{\alpha_t},j}}{\alpha_{t-\tau_{\alpha_t},t-1}} 
        &\leq \frac{\tau_{\alpha_t} \tau_{\alpha_t} \alpha_{t-\tau_{\alpha_t}}}{\tau_{\alpha_t} \alpha_t} 
        = \frac{\tau_{\alpha_t} \alpha_{t-\tau_{\alpha_t}}}{\alpha_t} = \mathcal{O} \left(\frac{\ln (t+t_0) \cdot \alpha_{t-\tau_{\alpha_t}}}{\alpha_t}\right) = \mathcal{O} \left(\ln (t+t_0) \right).
    \end{align*}
    Then there exists a constant $C_{\tref{lem:diff_Y}, 4}$ 
    such that
    \begin{equation}
        \sum_{j=t-\tau_{\alpha_t}}^{t-1}\alpha_{t-\tau_{\alpha_t},j} \leq \alpha_{t-\tau_{\alpha_t},t-1}C_{\tref{lem:diff_Y}, 4}\ln (t+t_0),
    \end{equation}
    Consequently,
    \begin{align}
        &\E\qty[\norm{H(w_{t-\tau_{\alpha_t}}, Y_{t+1}) - H(w_{t-\tau_{\alpha_t}}, \tilde{Y}_{t+1})}| \fF_{t-\tau_{\alpha_t}}] \\
        = &\norm{\sum_{y'} H(w_{t-\tau_{\alpha_t}}, y') \Pr(Y_{t+1} = y') - \sum_{y'}H(w_{t-\tau_{\alpha_t}}, y') \Pr(\tilde{Y}_{t+1} = y') } \\
        = &\norm{\sum_{y'} H(w_{t-\tau_{\alpha_t}}, y') \Pr(Y_{t+1} = y') - H(w_{t-\tau_{\alpha_t}}, y') \Pr(\tilde{Y}_{t+1} = y')} \\
        \leq &C_\tref{lem H h linear growth}(\norm{w_{t-\tau_{\alpha_t}}} +1)\cdot \sum_{y'}\abs{ \Pr(Y_{t+1} = y') - \Pr(\tilde{Y}_{t+1} = y')}\\
        \leq &C_\tref{lem H h linear growth}\alpha_{t-\tau_{\alpha_t},t-1}(C_{\tref{bound_wt_wt1}}(\norm{w_t-w_\text{ref}} +1)+1) C_{\tref{lem:diff_Y},3}\abs{\fY} C_{\tref{lem:diff_Y}, 4}\ln(t+t_0+1).
    \end{align} 
    Choosing $C_\tref{lem:diff_Y} \doteq C_\tref{lem H h linear growth}(C_{\tref{bound_wt_wt1}}+1) C_{\tref{lem:diff_Y},3} C_{\tref{lem:diff_Y}, 4}\abs{\fY} $ then completes the proof. 
\end{proof}

We are now ready to present the proof of Lemma~\ref{lem:bound T32}.
\begin{proof}
    For the second component of $T_3$ in \eqref{eq:T3_decomp}, we have
    \begin{align}
        \E\qty[T_{32}\big| \fF_{t-\tau_{\alpha_t}}] &= \E\qty[\left\langle \nabla L(w_{t-\tau_{\alpha_t}}), H(w_{t-\tau_{\alpha_t}}, Y_{t+1}) - H(w_{t-\tau_{\alpha_t}},\tilde{Y}_{t+1}) \right\rangle \big| \fF_{t-\tau_{\alpha_t}}]\\
        &\leq \left\langle \nabla L(w_{t-\tau_{\alpha_t}}), \E\qty[H(w_{t-\tau_{\alpha_t}}, Y_{t+1}) - H(w_{t-\tau_{\alpha_t}},\tilde{Y}_{t+1}) \big| \fF_{t-\tau_{\alpha_t}}] \right\rangle\\
        &\leq \norm{\nabla L(w_{t-\tau_{\alpha_t}})}_*\cdot\norm{\E\qty[H(w_{t-\tau_{\alpha_t}}, Y_{t+1}) - H(w_{t-\tau_{\alpha_t}},\tilde{Y}_{t+1})\big| \fF_{t-\tau_{\alpha_t}}]}.
    \end{align}
    
    From Lemma~\ref{lem:diff_Y}, we have
        \begin{align}
            \E\qty[\norm{H(w_{t-\tau_{\alpha_t}},Y_{t+1}) - H(w_{t-\tau_{\alpha_t}},\tilde{Y}_{t+1}) }\big|\fF_{t-\tau_{\alpha_t}}] \leq C_\tref{lem:diff_Y}\alpha_{t-\tau_{\alpha_t}, t-1}\ln(t+t_0+1) (\norm{w_t-w_\text{ref}} + 1).
        \end{align}
    Thus, there exists a constant $C_{\tref{lem:bound T32}}$ such that
    \begin{align}
    T_{32} &\leq \E\qty[\norm{\nabla L(w_{t-\tau_{\alpha_t}})}_* \norm{\E\qty[H(w_{t-\tau_{\alpha_t}}, Y_{t+1}) - H(w_{t-\tau_{\alpha_t}},\tilde{Y}_{t+1})\big| \fF_{t-\tau_{\alpha_t}}]}] \\
    &\leq L(2\norm{w_t - w_\text{ref}} + C_\text{ref} + C_\tref{bound_wt_wt1}) C_\tref{lem:diff_Y}\alpha_{t-\tau_{\alpha_t}, t-1}\ln(t+t_0+1) (\norm{w_t-w_\text{ref}} + 1) \quad\text{(By \eqref{eq:nablaL})}\\
    &\leq 2LC_\tref{lem:diff_Y}\alpha_{t-\tau_{\alpha_t}, t-1}\ln(t + t_0+1) (\norm{w_t-w_\text{ref}} +  C_\text{ref} + C_\tref{bound_wt_wt1})^2\\
    &\leq C_{\tref{lem:bound T32}}\alpha_{t-\tau_{\alpha_t}, t-1}\ln(t+t_0+1)(\norm{w_t-w_\text{ref}}^2 +1).
    \end{align}
    This completes the proof.
\end{proof}

\subsection{Proof of Theorem \ref{thm:sa}}
\label{proof:sa}
Recall the decomposition \eqref{eq:noise_decomp}, combining the bounds for $T_1$, $T_2$, $T_{31}$ and $T_{32}$ from the above lemmas, 
we have: 
    \begin{align}
        & \E\qty[\langle \nabla L(w_t), H(w_t, Y_{t+1})-h(w_t)\rangle |\fF_{t-\tau_{\alpha_t}} ] \\
        \leq& C_\tref{lem:bound T1} \alpha_{t-\tau_{\alpha_t}, t-1} (\E\qty[L(w_t)|\fF_{t-\tau_{\alpha_t}}] + 1)+
        C_\tref{lem:bound T2}  \alpha_{t-\tau_{\alpha_t}, t-1} (\E\qty[L(w_t)|\fF_{t-\tau_{\alpha_t}}] + 1 ) + C_\tref{lem:bound T31} \alpha_{t} (\E\qty[L(w_t)|\fF_{t-\tau_{\alpha_t}}] + 1) \\
        \quad&+ C_\tref{lem:bound T32}\alpha_{t-\tau_{\alpha_t} ,t-1}\ln(t+t_0+1)(\E \qty[L(w_t)|\fF_{t-\tau_{\alpha_t}}] +1)\\
        \leq & D_{\tref{thm:sa}}\alpha_{t-\tau_{\alpha_t} ,t-1}\ln(t+t_0+1)(\E\qty[L(w_t)|\fF_{t-\tau_{\alpha_t}}] +1).
    \end{align}
    where $D_{\tref{thm:sa}} \doteq C_\tref{lem:bound T1}+C_\tref{lem:bound T2}+C_\tref{lem:bound T31}+C_\tref{lem:bound T32}$ is constant.
Recall \eqref{eq:key_inequality} and combine all the results, we have 
\begin{align}
& \E\qty[L(w_{t+1}) | \fF_{t-\tau_{\alpha_t}}] \\
\leq& \E\qty[L(w_t)|\fF_{t-\tau_{\alpha_t}}] + \alpha_t \E\qty[\langle \nabla L(w_t), h(w_t) \rangle |\fF_{t-\tau_{\alpha_t}}] \\
& + \alpha_t \E\qty[\langle \nabla L(w_t), H(w_t, Y_{t+1}) - h(w_t) \rangle | \fF_{t-\tau_{\alpha_t}}] + \frac{L\alpha_t^2}{2} \E\qty[\norm{H(w_t, Y_{t+1})}^2 | \fF_{t-\tau_{\alpha_t}}]\\
\leq& \E\qty[L(w_t)|\fF_{t-\tau_{\alpha_t}}] + \alpha_t \E\qty[\langle \nabla L(w_t), h(w_t) \rangle |\fF_{t-\tau_{\alpha_t}}] \\
& + \alpha_t D_{\tref{thm:sa}}\alpha_{t-\tau_{\alpha_t} ,t-1}\ln(t+t_0+1)(\E\qty[L(w_t)|\fF_{t-\tau_{\alpha_t}}] +1) +  \frac{L\alpha_t^2}{2} C_\tref{lem H h linear growth}^2(1+\norm{w_t-w_\text{ref}})^2\\
\leq& \E\qty[L(w_t)|\fF_{t-\tau_{\alpha_t}}] + \alpha_t \E\qty[\langle \nabla L(w_t), h(w_t) \rangle |\fF_{t-\tau_{\alpha_t}}] \\
& + \alpha_t D_{\tref{thm:sa}}\alpha_{t-\tau_{\alpha_t} ,t-1}\ln(t+t_0+1)(\E\qty[L(w_t)|\fF_{t-\tau_{\alpha_t}}] +1) +2L\alpha_t^2 C_\tref{lem H h linear growth}^2  (\E\qty[L(w_t)|\fF_{t-\tau_{\alpha_t}}]+1).
\end{align}
Denoting $f(t)\doteq D_{\tref{thm:sa}}\alpha_t \alpha_{t-\tau_{\alpha_t} ,t-1}\ln(t+t_0+1) + \frac{2L\alpha_t^2C_\tref{asp:a3}^2}{l_s^2}  = \fO\qty(\frac{\ln^2(t+t_0+1)}{(t+t_0)^{2\epsilon_\alpha}})$ and taking the total expectation then completes the proof of Theorem~\tref{thm:sa}.

\section{Proofs in Section~\ref{sec:proof_thm1}}
\subsection{Proof of Lemma~\tref{lem:P_Lipschitz}}
\label{proof:bound_on_mu}
\begin{proof}
    Since softmax is Lipschitz continuous,
    we only need to bound $\abs{\kappa_{w_1} x(s,a)^\top w_1 - \kappa_{w_2} x(s,a)^\top w_2}$.

    \textbf{Case 1: $\norm{w_1}_2 < 1$ and $\norm{w_2}_2 < 1$.} In this case, $\kappa_{w_1} = \kappa_{w_2} = \kappa_0$. 
    Define $ C_x\doteq \sup_{s,a}\norm{x(s,a)}_2$. 
    Then we have 
    \begin{align}
        &\abs{\kappa_{w_1} x(s,a)^\top w_1 - \kappa_{w_2} x(s,a)^\top w_2} \\
        \leq&\kappa_0C_x\norm{w_1-w_2}_2\label{case1}\\
        \leq& \frac{3\kappa_0 C_x}{1+\norm{w_1}_2+\norm{w_2}_2}\norm{w_1-w_2}_2.
    \end{align}

    \textbf{Case 2: $\norm{w_1}_2 \geq 1$ and $\norm{w_2}_2 \geq 1$.} 
    Without loss of generality, let $\norm{w_1} \geq \norm{w_2}$.
    In this case, $\kappa_{w_1} = \frac{\kappa_0}{\norm{w_1}_2}, \kappa_{w_2} = \frac{\kappa_0}{\norm{w_2}_2}$. 
    Then we have
    \begin{align}
        \abs{\kappa_{w_1} - \kappa_{w_2}} = \kappa_0 \abs{\frac{\norm{w_2}_2 - \norm{w_1}_2}{\norm{w_1}_2 \norm{w_2}_2}} \leq \kappa_0 \frac{\norm{w_1 - w_2}_2}{\norm{w_1}_2 \norm{w_2}_2}.
    \end{align}
    Therefore, 
    \begin{align}
        &\abs{\kappa_{w_1} x(s,a)^\top w_1 - \kappa_{w_2} x(s,a)^\top w_2} \\
        \leq& \norm{ \kappa_{w_1}x(s,a)^\top(w_1-w_2)}_2 + \abs{x(s,a)^\top w_2}\abs{\kappa_{w_1}-\kappa_{w_2}} \\
        \leq& \frac{\kappa_0}{\norm{w_1}_2}C_x\norm{w_1-w_2}_2+C_x\norm{w_2}_2\kappa_0 \frac{\norm{w_1 - w_2}_2}{\norm{w_1}_2 \norm{w_2}_2}\\
        \leq& \frac{6\kappa_0 C_x}{3\norm{w_1}_2}\norm{w_1-w_2}_2\label{case2}\\
        \leq& \frac{6\kappa_0 C_x}{1+\norm{w_1}_2+\norm{w_2}_2}\norm{w_1-w_2}_2.
    \end{align}
    
    \textbf{Case 3: $\norm{w_1}_2 < 1$ and $\norm{w_2}_2 \geq 1$, and vice versa.}
    In this case, $\kappa_{w_1} = \kappa_0$, and $\kappa_{w_2} = \frac{\kappa_0}{\norm{w_2}_2} \leq \kappa_0$. We can obtain that
    \begin{equation}
        \abs{\kappa_{w_1} - \kappa_{w_2}} = \kappa_0 \qty(1 - \frac{1}{\norm{w_2}_2}).
    \end{equation}
    Similarly, we have
    \begin{align}
        &\abs{\kappa_{w_1} x(s,a)^\top w_1 - \kappa_{w_2} x(s,a)^\top w_2} \\
        \leq& \norm{\kappa_{w_2}x(s,a)^\top(w_1-w_2)}_2 + \abs{x(s,a)^\top w_1}\abs{\kappa_{w_1}-\kappa_{w_2}} \\
        \leq& \frac{\kappa_0 C_x}{\norm{w_2}}\norm{w_1-w_2}_2+C_x\norm{w_1}_2\kappa_0 \qty(1-\frac{1}{\norm{w_2}_2})\\
        \leq& \frac{\kappa_0 C_x}{\norm{w_2}}\norm{w_1-w_2}_2+C_x\kappa_0 \qty(1-\frac{1}{\norm{w_2}_2})\\
        \leq& \frac{\kappa_0 C_x}{\norm{w_2}}\norm{w_1-w_2}_2+\frac{C_x\kappa_0}{\norm{w_2}_2} (\norm{w_2}_2 - 1)\\
        \stackrel{(*)}{\leq}& \frac{6\kappa_0 C_x}{3\norm{w_2}_2}\norm{w_1-w_2}_2\\
        \leq& \frac{6\kappa_0 C_x}{1+\norm{w_1}_2+\norm{w_2}_2}\norm{w_1-w_2}_2,
    \end{align}
    where $(*)$ is obtained because $\norm{w_1}_2<1$, and according to the triangle inequality $\norm{w_1 - w_2}_2 \geq \norm{w_2}_2 - \norm{w_1}_2 \geq \norm{w_2}_2 - 1$ since $\norm{w_2}_2 \geq 1$.
    This completes the proof.
    \end{proof}

\subsection{Proof of Lemma~\tref{lem:lip_h}}
    \label{proof:lip_h}
    \begin{proof}
    According to the definition in \eqref{eq:def A} and \eqref{eq:def b},
    we can apply the triangle inequality to get 
    \begin{align}
        &\norm{h(w_1) - h(w_2)}_2 \\
        =& \norm{A(w_1)w_1 + b(w_1) - A(w_2)w_2 - b(w_2)}_2\\
        \leq&\norm{X^\top D_{\mu_{w_1}}(\fT (Xw_1) - Xw_1) - X^\top D_{\mu_{w_2}}(\fT (Xw_2) - Xw_2)}_2 \\
         \leq& \norm{X^\top D_{\mu_{w_1}} (\fT(Xw_1) - \fT(Xw_2) - (Xw_1 - Xw_2))}_2 + \norm{X^\top (D_{\mu_{w_1}} - D_{\mu_{w_2}}) (\fT(Xw_2) - Xw_2)}_2.
    \end{align}
    The first term in the RHS can be bounded by $\norm{w_1 - w_2}$ easily because $\norm{D_{\mu_{w_1}}}_2 \leq 1$ and $\fT$ is a contraction (and thus Lipschitz continuous w.r.t. any norm).
    For the second term in the RHS, 
    according to Lemma~\tref{breaking lem 9}, $D_{\mu_w}$ is Lipschitz continuous on $\mu_w$, that is
    \begin{equation}
        \label{eq:lip_d}
        \norm{D_{\mu_{w_1}} - D_{\mu_{w_2}}}_2 \leq C_{\tref{lem:lip_h},1} \norm{\mu_{w_1} - \mu_{w_2}}_2.
    \end{equation}
    Here we interpret $\mu_w$ as a vector in $\R[\ns\na]$.
    It is easy to see that 
    \begin{align}
        \norm{\fT(Xw_2) - Xw_2} \leq C_{\tref{lem:lip_h},2} + C_{\tref{lem:lip_h},2} \norm{w_2}_2  
    \end{align}
    for some $C_{\tref{lem:lip_h},2}$.
    Then Lemma~\ref{lem:P_Lipschitz} implies that for some $C_{\tref{lem:lip_h},3}$, we have
    \begin{align}
        &\norm{X^\top (D_{\mu_{w_1}} - D_{\mu_{w_1}}) (\fT(Xw_2) - Xw_2)}_2 \\
        \leq &\norm{X}_2 C_{\tref{lem:lip_h},1} \norm{\mu_{w_1} - \mu_{w_2}}_2 (C_{\tref{lem:lip_h},2} + C_{\tref{lem:lip_h},2} \norm{w_2}_2) \\
        \leq& C_{\tref{lem:lip_h},3} \frac{1 + \norm{w_2}_2}{1 + \norm{w_1}_2 + \norm{w_2}_2} \norm{w_1 - w_2}_2  \\
        \leq& C_{\tref{lem:lip_h},3} \norm{w_1 - w_2}_2,
    \end{align}
    which completes the proof.
\end{proof}

\subsection{Proof of Lemma~\ref{lem:bound inner}}
\begin{proof}
    \label{proof:bound inner}
    From Lemma~\tref{lem:negative_definite_combined}, we know that if $\norm{w}_2 \geq 1$, there exist positive constants $\beta$, which satisfy
    \begin{equation}
        w^\top A(w)w \leq -\beta\norm{w}_2^2.
    \end{equation}
    Therefore, for $\norm{w}_2 \geq 1$, recall $b(w)=X^\top D_{\mu_w} r$, we have a constant $C_{\tref{lem:bound inner},1} \doteq \ns^2\na C_xC_r$ that ensures
    \begin{equation}
        \langle w, A(w)w + b(w)\rangle \leq -\beta\norm{w}_2^2 + C_{\tref{lem:bound inner},1}\norm{w}_2.
    \end{equation}
    For $\norm{w}_2 \leq 1$, recall $\norm{A}_2$ can be bounded by another constant $C_{\tref{lem:bound inner},2} = (\ns\na C_x)^2 (\gamma + 1)$, thus
    \begin{equation}
        \langle w, A(w)w + b(w)\rangle \leq (C_{\tref{lem:bound inner},1} + C_{\tref{lem:bound inner},2})\norm{w}_2 \leq -\beta \norm{w}_2^2 + (C_{\tref{lem:bound inner},1} + C_{\tref{lem:bound inner},2} + \beta)\norm{w}_2.
    \end{equation}
    Thus, for $C_\tref{lem:bound inner}\doteq C_{\tref{lem:bound inner},1} + C_{\tref{lem:bound inner},2} + \beta= \ns^2\na C_xC_r + (\ns\na C_x)^2 (\gamma + 1) + \beta$, it always holds that
    \begin{equation}
    \label{eq:bound inner}
        \langle w_t, A(w_t)w_t + b(w_t)\rangle \leq -\beta\norm{w_t}_2^2 + C_\tref{lem:bound inner}\norm{w_t}_2.
    \end{equation}
This completes the proof.
\end{proof}

\subsection{Proof of Theorem~\tref{thm:stan}}
\label{proof:thm stan}
\begin{proof}
    Combining \eqref{eq:key_inequality_linear} and Lemma~\ref{lem:bound inner}, we have
    \begin{align}
        &\E\qty[L(w_{t+1})] \\
        \leq& \E\qty[L(w_t)] + \alpha_t\E\qty[\langle \nabla L(w_t), h(w_t)\rangle] + f(t)(1+\E\qty[L(w_t)])\\
        \leq& \E\qty[L(w_t)] + \alpha_t(-\beta\E\qty[\norm{w_t}_2^2] + C_\tref{lem:bound inner}\E\qty[\norm{w_t}_2]) + f(t)(1+\E\qty[L(w_t)])\\
        =& \qty(1-2\beta\alpha_t+f(t))\E\qty[L(w_t)] + \alpha_tC_\tref{lem:bound inner}\E\qty[\norm{w_t}_2] + f(t)\\
        \leq& \qty(1-2\beta\alpha_t+f(t))\E\qty[L(w_t)] + \E\qty[\alpha_t \beta \cdot \frac{1}{2}\norm{w_t}_2^2 + \frac{2\alpha_t}{\beta}C_\tref{lem:bound inner}^2] + f(t) \qq{(using $x + y \geq \sqrt{xy}$)}\\
        \leq& \qty(1-\beta\alpha_t+f(t))\E\qty[L(w_t)] + D_{\tref{thm:stan},1}\alpha_t \qq{(since $f(t) \leq \frac{2C_\tref{lem:bound inner}^2}{\beta}\alpha_t$ for sufficiently large $t$)}\\
        \leq& \qty(1-D_{\tref{thm:stan},2}\alpha_t)\E\qty[L(w_t)] + D_{\tref{thm:stan},1}\alpha_t.
    \end{align}
    where $D_{\tref{thm:stan},1}\doteq \frac{4}{\beta}C_\tref{lem:bound inner}^2$ and $D_{\tref{thm:stan},2}\doteq\frac{\beta}{2}$. 
    Unfolding the recursion from time $\bar t$ to $t$ yields
    \begin{align}
        \E\qty[L(w_t)] \leq \prod_{k=\bar t}^{t-1} (1 - D_{\tref{thm:stan},2}\alpha_k) \mathbb{E}[L(w_{\bar t})] + D_{\tref{thm:stan},1} \sum_{k=\bar t}^{t-1} \alpha_k \prod_{j=k+1}^{t-1} (1 - D_{\tref{thm:stan},2} \alpha_j).
    \end{align}
    \textbf{Case 1: }$\epsilon_\alpha=1$. We can derive that
    \begin{align}
        \prod_{j=k+1}^{t-1} (1 - D_{\tref{thm:stan},2} \alpha_j) 
        &\leq \exp\qty( -D_{\tref{thm:stan},2} \sum_{j=k+1}^{t-1} \alpha_j )\\
        &\leq \exp\qty( -D_{\tref{thm:stan},2} \alpha \int_{k+1}^{t} \frac{1}{x + t_0} dx)\\
        &=\exp\qty( -D_{\tref{thm:stan},2} \alpha( \ln(t + t_0) - \ln(k + 1 + t_0) ) )\\
        &=\qty(\frac{k + 1 + t_0}{t + t_0})^{D_{\tref{thm:stan},2} \alpha}.
    \end{align}
    Substituting back, we obtain
    \begin{align}
        \E\qty[L(w_t)] &\leq \qty(\frac{t_0 + \bar{t}}{t + t_0})^{D_{\tref{thm:stan},2} \alpha}\E\qty[L(w_{\bar t})] + D_{\tref{thm:stan},1} \sum_{k=\bar t}^{t-1} \frac{\alpha}{k + t_0} \qty(\frac{k + 1 + t_0}{t + t_0})^{D_{\tref{thm:stan},2} \alpha}. 
    \end{align}
    The second term becomes
    \begin{align}
        \sum_{k=\bar t}^{t-1} \frac{\alpha}{k + t_0} \qty(\frac{k + 1 + t_0}{t + t_0})^{D_{\tref{thm:stan},2} \alpha}\leq&   \frac{2^{D_{\tref{thm:stan},2}}\alpha}{(t + t_0)^{D_{\tref{thm:stan},2} \alpha}} \sum_{k=\bar t}^{t-1} (k+t_0)^{D_{\tref{thm:stan},2} \alpha-1}.
    \end{align}
    Denote $c\doteq D_{\tref{thm:stan},2} \alpha$ and $S(t)\doteq \frac{1}{(t + t_0)^{c}} \sum_{k=\bar t}^{t-1} (k+t_0)^{c-1}$.
    
    1. When $c \neq 1$:
    \begin{align*}
        \sum_{k=\bar t}^{t-1} (k+t_0)^{c-1} &\leq \int_{\bar t-1}^{t} (x+t_0)^{c-1} dx \leq \frac{(t+t_0)^{c}}{c}, \\
        S(t) &\leq \frac{1}{(t+t_0)^{c}} \cdot \frac{(t+t_0)^{c}}{c} = \frac{1}{c}.
    \end{align*}

    2. When $c=1$:
    \begin{align*}
        S(t) \leq \frac{1}{t+t_0} \sum_{k=\bar t}^{t-1} k^{0} = \frac{t-\bar t}{t+t_0} < 1.
    \end{align*}

    Thus, we conclude that $S(t)\leq \frac{1}{D_{\tref{thm:stan},2} \alpha}$, that is 
    \begin{align}
        \mathbb{E}\qty[L(w_t)] &\leq \qty(\frac{t_0 + \bar{t}}{t + t_0})^{D_{\tref{thm:stan},2} \alpha}\mathbb{E}\qty[L(w_{\bar t})] + D_{\tref{thm:stan},1} 2^{D_{\tref{thm:stan},2}}\frac{1}{D_{\tref{thm:stan},2} \alpha}\\
        &= \frac{D_{\tref{thm:stan},3}}{(t + t_0)^{D_{\tref{thm:stan},2} \alpha}}\mathbb{E}\qty[L(w_{\bar t})] + D_{\tref{thm:stan},4}.
    \end{align}
    Furthermore, the last constant can be expanded as $D_{\tref{thm:stan},4} \leq \frac{4}{\beta}C_\tref{lem:bound inner}^2 \frac{2^{\beta}}{\beta/2} \leq ((\ns\na C_x)^2 (\gamma + 1) + \ns^2\na C_xC_r)^2 \frac{2^{\beta+3}}{\beta^2}$.
    For the $\mathbb{E}\qty[L(w_{\bar t})]$ term, since $\bar t$ is deterministic, 
starting from the update of $w_{t+1}$, we have  
\begin{align*}
    \norm{w_{t+1}}\leq \norm{w_t} + \alpha_t \norm{H(w_t,Y_{t+1})} \leq \norm{w_t} + \alpha_t C_\tref{lem H h linear growth}(\norm{w_t}+1).
\end{align*}
That is, $\norm{w_{t+1}} \leq \alpha_0 C_\tref{lem H h linear growth} + \sum_{i=0}^t(\alpha_0 C_\tref{lem H h linear growth}+1)\norm{w_i}$. Applying discrete Gronwall inequality, we obtain
\begin{align*}
    \norm{w_{\bar{t}}} \leq (C_\tref{lem H h linear growth} + \norm{w_0}) \exp(\sum_{t=0}^{\bar{t}-1} (1+\alpha_0 C_\tref{lem H h linear growth})) = (C_\tref{lem H h linear growth} + \norm{w_0}) \exp(\bar{t}+\bar{t}\alpha_0 C_\tref{lem H h linear growth})
\end{align*}

Furthermore, combining this with the current bound, denote $B_{\tref{thm:stan},1} \doteq D_{\tref{thm:stan},3}\exp(2\bar{t}(1+\alpha_0  C_\tref{lem H h linear growth}))$, $B_{\tref{thm:stan},2} \doteq D_{\tref{thm:stan},2}$ and $B_{\tref{thm:stan},3} \doteq 2\left(\frac{D_{\tref{thm:stan},3}}{(\bar t + t_0)^{D_{\tref{thm:stan},2} \alpha}}\times 2C_\tref{lem H h linear growth} \exp(2\bar{t}(1+\alpha_0  C_\tref{lem H h linear growth}))+D_{\tref{thm:stan},4}\right)$ then completes the proof of the first case.

    \textbf{Case 2:} $\epsilon_\alpha \in (0,1)$.
    \begin{align}
        \prod_{j=k+1}^{t-1} (1 - D_{\tref{thm:stan},2} \alpha_j) 
        &\leq \exp\qty( -D_{\tref{thm:stan},2} \sum_{j=k+1}^{t-1} \alpha_j )\\
        &\leq \exp\qty( -D_{\tref{thm:stan},2} \alpha \int_{k+1}^{t} \frac{1}{(x + t_0)^{\epsilon_\alpha}} dx)\\
        &=\exp\qty( -\frac{D_{\tref{thm:stan},2} \alpha}{1 - \epsilon_\alpha} \qty[ (t + t_0)^{1 - \epsilon_\alpha} - (k + 1 + t_0)^{1 - \epsilon_\alpha} ] )\\
        & =\frac{\exp\qty(\frac{D_{\tref{thm:stan},2} \alpha}{1 - \epsilon_\alpha} (k + 1 + t_0)^{1 - \epsilon_\alpha})}{\exp\qty(\frac{D_{\tref{thm:stan},2} \alpha}{1 - \epsilon_\alpha} (t + t_0)^{1 - \epsilon_\alpha})}.
    \end{align}
    Substituting back, we obtain
    \begin{align}
        \mathbb{E}\qty[L(w_t)] &\leq \exp\qty( -\frac{D_{\tref{thm:stan},2} \alpha}{1 - \epsilon_\alpha} \qty[ (t + t_0)^{1 - \epsilon_\alpha} - (\bar t+t_0)^{1 - \epsilon_\alpha} ] ) \mathbb{E}\qty[L(w_{\bar t})] \\
        &\quad + D_{\tref{thm:stan},1} \sum_{k=\bar t}^{t-1} \frac{\alpha}{(k + t_0)^{\epsilon_\alpha }} \exp\qty( -\frac{D_{\tref{thm:stan},2} \alpha}{1 - \epsilon_\alpha} \qty[ (t + t_0)^{1 - \epsilon_\alpha} - (k + 1 + t_0)^{1 - \epsilon_\alpha} ] ). 
    \end{align}
    For the second term,
    \begin{align}
        & \sum_{k=\bar t}^{t-1} \frac{\alpha}{(k + t_0)^{\epsilon_\alpha }} \exp\qty( -\frac{D_{\tref{thm:stan},2} \alpha}{1 - \epsilon_\alpha} \qty[ (t + t_0)^{1 - \epsilon_\alpha} - (k + 1 + t_0)^{1 - \epsilon_\alpha} ] )\\
        \leq& \frac{\alpha}{\exp\qty(\frac{D_{\tref{thm:stan},2} \alpha}{1 - \epsilon_\alpha} (t + t_0)^{1 - \epsilon_\alpha})}\sum_{k=\bar t}^{t-1} \frac{1}{(k + t_0)^{\epsilon_\alpha }} \exp\qty(\frac{D_{\tref{thm:stan},2} \alpha}{1 - \epsilon_\alpha} (k + 1 + t_0)^{1 - \epsilon_\alpha}).
    \end{align}
    To make the notation cleaner, we define $M = \frac{D_{\tref{thm:stan},2} \alpha}{1 - \epsilon_\alpha}$, then
    \begin{align*}
        RHS =& \frac{\alpha}{\exp\left(M (t + t_0)^{1 - \epsilon_\alpha}\right)} \sum_{k=\bar t + t_0}^{t+t_0-1} \frac{1}{k^{\epsilon_\alpha}} \exp\left(M (k + 1)^{1 - \epsilon_\alpha}\right)\\
        \leq& \frac{\alpha}{\exp\left(M (t + t_0)^{1 - \epsilon_\alpha}\right)} \sum_{k=\bar t + t_0}^{t+t_0-1} \frac{2^{\epsilon_\alpha}}{(k+1)^{\epsilon_\alpha}} \exp\left(M (k + 1)^{1 - \epsilon_\alpha}\right)\\
        \leq& \frac{2^{\epsilon_\alpha} \alpha }{\exp\left(M (t + t_0)^{1 - \epsilon_\alpha}\right)} \int_{1}^{t+t_0+1} x^{-\epsilon_\alpha} \exp\qty(M x^{1 - \epsilon_\alpha}) dx.
    \end{align*}
    Now we perform a substitution to simplify the integral, let $u \doteq M x^{1 - \epsilon_\alpha}$, then $
    du = M (1 - \epsilon_\alpha) x^{ -\epsilon_\alpha} dx$, thus
    \begin{align*}
        \int_{1}^{t+t_0+1} x^{-\epsilon_\alpha} \exp\qty(M x^{1 - \epsilon_\alpha}) dx = \frac{1}{M (1 - \epsilon_\alpha)} \int_{M}^{M (t+t_0+1)^{1 - \epsilon_\alpha}} \exp\qty(u) du \leq \frac{\exp\qty(M (t+t_0+1)^{1 - \epsilon_\alpha})}{M (1 - \epsilon_\alpha)}.
    \end{align*}

    Finally, we have 
    \begin{align}
        \mathbb{E}\qty[L(w_t)] &\leq \exp\qty( -\frac{D_{\tref{thm:stan},2} \alpha}{1 - \epsilon_\alpha} \qty[ (t + t_0)^{1 - \epsilon_\alpha} - (\bar t+t_0)^{1 - \epsilon_\alpha} ] ) \mathbb{E}\qty[L(w_{\bar t})] \\
        &\quad + D_{\tref{thm:stan},1} \sum_{k=\bar t}^{t-1} \frac{\alpha}{(k + t_0)^{\epsilon_\alpha }} \exp\qty( -\frac{D_{\tref{thm:stan},2} \alpha}{1 - \epsilon_\alpha} \qty[ (t + t_0)^{1 - \epsilon_\alpha} - (k + 1 + t_0)^{1 - \epsilon_\alpha} ] )\\
        &\leq \exp\qty( -\frac{D_{\tref{thm:stan},2} \alpha}{1 - \epsilon_\alpha} \qty[ (t + t_0)^{1 - \epsilon_\alpha} - (\bar t + t_0)^{1 - \epsilon_\alpha} ] ) \mathbb{E}\qty[L(w_{\bar t})] \\
        &\quad + D_{\tref{thm:stan},1} \frac{2^{\epsilon_\alpha} \alpha}{\exp\left(M (t + t_0)^{1 - \epsilon_\alpha}\right)} \frac{\exp\qty(M (t+t_0)^{1 - \epsilon_\alpha})}{M (1 - \epsilon_\alpha)}\\
        &= \exp\qty( -\frac{D_{\tref{thm:stan},2} \alpha}{1 - \epsilon_\alpha} \qty[ (t + t_0)^{1 - \epsilon_\alpha} - (\bar t+t_0)^{1 - \epsilon_\alpha} ] ) \mathbb{E}\qty[L(w_{\bar t})]
        + \frac{2^{\epsilon_\alpha}D_{\tref{thm:stan},1}\alpha}{M (1 - \epsilon_\alpha)}\\
        &= \exp\qty( -\frac{D_{\tref{thm:stan},2} \alpha}{1 - \epsilon_\alpha} \qty[ (t + t_0)^{1 - \epsilon_\alpha} - (\bar t + t_0)^{1 - \epsilon_\alpha} ] ) \mathbb{E}\qty[L(w_{\bar t })]
        + \frac{2^{\epsilon_\alpha}D_{\tref{thm:stan},1}}{D_{\tref{thm:stan},2} }\\
        &= \exp\qty( -\frac{D_{\tref{thm:stan},2} \alpha}{1 - \epsilon_\alpha} \qty[ (t + t_0)^{1 - \epsilon_\alpha} - (\bar t + t_0)^{1 - \epsilon_\alpha} ] ) \mathbb{E}\qty[L(w_{\bar t})]
        + D_{\tref{thm:stan},5}.
    \end{align}
    The last constant can be expanded as $D_{\tref{thm:stan},5} =\frac{2^{\epsilon_\alpha+3}C_\tref{lem:bound inner}^2}{\beta^2} = \frac{2^{\epsilon_\alpha+3}}{\beta^2}((\ns\na C_x)^2 (\gamma + 1) + \ns^2\na C_xC_r)^2$.
    Then Theorem~\ref{thm:stan} follows from using the Gronwall inequality to bound $L(w_{\bar t})$ with $\norm{w_0}$ and the equivalence of norms, with $B_{\tref{thm:stan},4}$, $B_{\tref{thm:stan},5}$, and $B_{\tref{thm:stan},6}$ selected similarly. 
\end{proof}

\section{Proofs in Section~\ref{sec:proof_thm2}}

\subsection{Proof of Lemma~\tref{lem pseudo contract}}
\label{proof:pseudo-contraction}
\begin{proof}
    It is obvious that $q_*$ is the unique fixed point because $\inf_{q, s, a} d_{\mu_q}(s, a) > 0$ thanks to the fact that $\epsilon > 0$.
    We also have 
    \begin{align}
    \norm{\fT'q - q_*}_\infty &= \norm{\fT'q - D_{\mu_q}(q_* - q_*) - q_*}_\infty \\
    &= \norm{D_{\mu_q}\fT q - D_{\mu_q} q + q - q_* + D_{\mu_q}(q_* - q_*)}_\infty \\
    &= \norm{D_{\mu_q}(\fT q - q_*) + (I - D_{\mu_q})(q - q_*)}_\infty \\
    &\leq \max_{s,a} \abs{d_{\mu_q}(s,a)(\fT q - q_*)(s,a) + (1 - d_{\mu_q}(s,a))(q - q_*)(s,a)} \\
    &\leq \max_{s,a} d_{\mu_q}(s,a)\norm{\fT q - q_*}_\infty + (1 - d_{\mu_q}(s,a))\norm{q - q_*}_\infty \\
    &\leq \max_{s,a} d_{\mu_q}(s,a)\gamma\norm{q - q_*}_\infty + (1 - d_{\mu_q}(s,a))\norm{q - q_*}_\infty \\
    &= \max_{s,a}(1 - (1 - \gamma)d_{\mu_q}(s,a))\norm{q - q_*}_\infty\\
    &\leq \qty(1-(1-\gamma)\inf_{q, s, a}d_{\mu_q}(s, a))\norm{q - q_*}_\infty,
    \end{align}
    which completes the proof.
\end{proof}

\subsection{Proof of Lemma~\ref{lem:bound w}}
\label{proof:bound w}
\begin{proof}
   Recalling the update rule for tabular $Q$-learning in \eqref{eq:Q_update_H_background} and \eqref{eq:H_function}, we have
    \begin{align}
        \norm{q_{t+1}}_\infty &\leq (1-\alpha_t)\norm{q_t}_\infty + \alpha_t (C_r + \gamma \norm{q_t}_\infty )\\
        &=(1-\alpha_t +\gamma \alpha_t)\norm{q_t}_\infty + \alpha_t C_r.
    \end{align}
    If $\norm{q_t}_m$ is unbounded, then $\norm{q_t}_\infty$ is also unbounded, that is for each constant $M > \norm{q_0}_\infty$, we can find some time $t$ such that $\norm{q_t}_\infty < M \leq \norm{q_{t+1}}_\infty$. Therefore,
    \begin{align}
        M < (1-\alpha_t + \gamma \alpha_t) M +  \alpha_t C_r,
    \end{align}
    which is equivalent to
    \begin{align}
        \alpha_t (1-\gamma) M < \alpha_t C_r. 
    \end{align}
    Since $C_r$ is a constant, we will get a contradiction for $M>\frac{C_r}{1-\gamma}$. This completes the proof.
\end{proof}

\subsection{Proof of Lemma~\tref{lem:tabular_h}}
\label{proof:tabular_h}
\begin{proof}
    Recalling the definition of $h(w)$ in \eqref{eq:h_tabular}, we have
    \begin{align}
        \norm{h(q_1) - h(q_2)}_\infty &= \norm{\fT'q_1 - q_1 - (\fT'q_2 - q_2)}_\infty\\
        &= \norm{D_{\mu_{q_1}}(\fT q_1 - q_1) - D_{\mu_{q_1}}(\fT q_2 - q_2) + D_{\mu_{q_1}}(\fT q_2 - q_2) - D_{\mu_{q_2}}(\fT q_2 - q_2)}_\infty \\
        &\leq \norm{D_{\mu_{q_1}}(\fT q_1 - q_1) - D_{\mu_{q_1}}(\fT q_2 - q_2)}_\infty + \norm{D_{\mu_{q_1}}(\fT q_2 - q_2) - D_{\mu_{q_2}}(\fT q_2 - q_2)}_\infty\\
        &\leq \norm{(\fT q_1 - \fT q_2)-(q_1 - q_2)}_\infty + \norm{D_{\mu_{q_1}} - D_{\mu_{q_2}}}_\infty \norm{\fT q_2 - q_2}_\infty.
    \end{align}
    The first term in the RHS can be bounded with $\norm{q_1 - q_2}_\infty$ easily.
    For the second term in the RHS, $\norm{D_{\mu_{q_1}} - D_{\mu_{q_2}}}$ can be bounded with $\norm{q_1 - q_2}_\infty$ thanks to Lemma~\ref{breaking lem 9}.
    Noticing that $\norm{\fT q_2 - q_2}_\infty$ is bounded because $q_2$ lies in a compact set then completes the proof.
\end{proof}
    
\subsection{Proof of Lemma~\ref{lem:bound inner tabular}}
\label{proof:bound inner tabular}
\begin{proof}
Recall that $\beta_m=(1-\gamma)\inf_{q, s, a}d_{\mu_q}(s, a)$, we have
    \begin{align}
        &\langle \nabla M(q_t - q_*), h(q_t) \rangle \\
        = &\langle \nabla M(q_t - q_*), \fT'q_t - q_t \rangle \\
        = &\langle \nabla M(q_t - q_*), \fT'q_t - q_* + q_* - q_t \rangle \\
        = &\langle \nabla M(q_t - q_*), \fT'q_t - q_* \rangle - \langle \nabla M(q_t - q_*), q_t - q_* \rangle \\
        \leq & \norm{q_t - q_*}_m \norm{\fT'q_t - q_*}_m - \norm{q_t - q_*}_m^2  \quad\text{(Lemma~\ref{lem property of M})}\\
        \leq &\norm{q_t - q_*}_m \cdot \frac{1}{l_{im}}\norm{\fT'q_t - q_* }_\infty - \norm{q_t - q_*}_m^2 \\
        \leq &\norm{q_t - q_*}_m \cdot \frac{1}{l_{im}} \beta_m \norm{q_t - q_*}_\infty - \norm{q_t - q_*}^2_m\\
        = &-\qty(1 - \frac{u_{im}}{l_{im}}\beta_m) \norm{q_t - q_*}_m^2.
    \end{align}
    Since $\beta_m<1$,
    we can choose a sufficiently small $\xi$ (defined in Lemma~\ref{lem property of M}) such that
    $C_{\tref{lem:bound inner tabular}} \doteq 1-\frac{u_{im}}{l_{im}}\beta_m >0$, which completes the proof.
\end{proof}

\subsection{Proof of Theorem~\tref{thm:stan_tabular}}
\label{proof:thm stan_tabular}
\begin{proof}
    Combining \eqref{eq:key_inequality_tabular} and Lemma~\ref{lem:bound inner tabular} yields
    \begin{align}
        \E\qty[L(q_{t+1})]\leq& \E\qty[L(q_t)] + \alpha_t\E\qty[\langle \nabla L(q_t), h(q_t)\rangle] + f(t)(1+\E\qty[L(q_t)])\\
        =& \qty(1-2C_\tref{lem:bound inner tabular}\alpha_t+f(t))\E\qty[L(q_t)] + f(t)\\
        \leq& \qty(1-D_{\tref{thm:stan_tabular},1}\alpha_t)\E\qty[L(q_t)] + D_{\tref{thm:stan_tabular},2}\frac{\ln^2(t+t_0)}{(t+t_0)^{2\epsilon_\alpha}}.
    \end{align}

    Recall we have:
    \begin{equation}\label{eq:step_size}
        \alpha_t = \frac{\alpha}{(t + t_0)^{\epsilon_\alpha}},\quad  \alpha_{t-\tau, t-1}=\fO\qty(\frac{\ln(t + t_0)}{(t + t_0)^{\epsilon_\alpha}}),
    \end{equation}
    where $ \epsilon_\alpha \in (0.5, 1] $ and $ t_0 > 0 $ is chosen sufficiently large.
    Then for any $0.5<\epsilon_\alpha' < \epsilon_\alpha$, for $t$ large enough, the recursive inequality can be simplified to:
    \begin{align}
        \E\qty[L(q_{t+1})] &\leq \qty(1-D_{\tref{thm:stan_tabular},1}\alpha_t)\E\qty[L(q_t) ] + D_{\tref{thm:stan_tabular},2}\frac{\alpha^2}{(t+t_0)^{2\epsilon_\alpha'}} .
    \end{align}
    Thus,
    \begin{align}
        \E\qty[L(q_t)] \leq \qty(\prod_{k=\bar t}^{t-1} (1-D_{\tref{thm:stan_tabular},1}\alpha_k)) \E\qty[L(q_{\bar t})] + D_{\tref{thm:stan_tabular},2} \sum_{k=\bar t}^{t-1} \frac{\alpha^2}{(t+t_0)^{2\epsilon_\alpha'}} \prod_{j=k+1}^{t-1} (1-D_{\tref{thm:stan_tabular},1}\alpha_k).
    \end{align}
    Therefore we have:
    \begin{align}
        \prod_{j=k+1}^{t-1} (1 - D_{\tref{thm:stan_tabular},1} \alpha_j) \leq \exp\qty( -D_{\tref{thm:stan_tabular},1} \sum_{j=k+1}^{t-1} \alpha_j ).
    \end{align}
    
    \textbf{Case 1: $\epsilon_\alpha = 1$}.
    With the step size $\alpha_j = \frac{\alpha}{j + t_0}$, the sum can be approximated by an integral:
    \begin{align}
        \sum_{j=k+1}^{t-1} \alpha_j \geq \alpha \int_{k+1}^{t} \frac{1}{x + t_0} dx = \alpha \qty( \ln(t + t_0) - \ln(k + 1 + t_0) ).
    \end{align}
    Thus, the product term becomes:
    \begin{align}
        \prod_{j=k+1}^{t-1} (1 - D_{\tref{thm:stan_tabular},1} \alpha_j) \leq \exp\qty[ -D_{\tref{thm:stan_tabular},1}\alpha \left( \ln(t + t_0) - \ln(k + 1 + t_0) \right) ]
        = \qty(\frac{k + 1 + t_0}{t + t_0})^{D_{\tref{thm:stan_tabular},1}\alpha}.
    \end{align}
    Substituting back into the recursive inequality:
    \begin{align}
        \E\qty[L(q_t)] &\leq \qty(\frac{\bar t + t_0}{t + t_0})^{D_{\tref{thm:stan_tabular},1}\alpha} \E\qty[L(q_{\bar t})] + D_{\tref{thm:stan_tabular},2} \sum_{k=\bar t}^{t-1} \frac{\alpha^2}{(k + t_0)^2} \qty(\frac{k + 1 + t_0}{t + t_0})^{D_{\tref{thm:stan_tabular},1}\alpha}.
    \end{align}
    Now the summation term can be bounded by:
    \begin{align}
        \sum_{k=\bar t}^{t-1} \frac{\alpha^2}{(k + t_0)^2} \qty(\frac{k + 1 + t_0}{t + t_0})^{D_{\tref{thm:stan_tabular},1}\alpha} \leq \sum_{k=t_0+\bar t}^{t+t_0-1} \frac{1}{k^2} \left(\frac{2k}{t + t_0}\right)^{D_{\tref{thm:stan_tabular},1}\alpha} = \frac{2^{D_{\tref{thm:stan_tabular},1}\alpha}}{(t + t_0)^{D_{\tref{thm:stan_tabular},1}\alpha}} \sum_{k=t_0+\bar t}^{t + t_0-1} k^{D_{\tref{thm:stan_tabular},1}\alpha - 2}.
    \end{align}
    Define:
    \begin{align}
        S(t)\doteq\sum_{k=t_0+\bar t}^{t + t_0-1} k^{D_{\tref{thm:stan_tabular},1}\alpha - 2}.
    \end{align}
    The behavior of $S(t)$ depends on the value of $D_{\tref{thm:stan_tabular},1}\alpha$:
    \begin{enumerate}
        \item When $D_{\tref{thm:stan_tabular},1}\alpha < 1$, then $D_{\tref{thm:stan_tabular},1}\alpha - 2 < -1$. The sum $S(t)$ converges to a constant as $t \to \infty$:
        \begin{align}
            S(t) \leq D_{\tref{thm:stan_tabular},3}.
        \end{align}
        Therefore,
        \begin{align}
            \frac{2^{D_{\tref{thm:stan_tabular},1}\alpha}S(t)}{(t+t_0)^{D_{\tref{thm:stan_tabular},1}\alpha}} \leq \frac{2^{D_{\tref{thm:stan_tabular},1}\alpha}D_{\tref{thm:stan_tabular},3}}{(t+t_0)^{D_{\tref{thm:stan_tabular},1}\alpha}}.
        \end{align}
        \item When $D_{\tref{thm:stan_tabular},1}\alpha = 1 $, then $ D_{\tref{thm:stan_tabular},1}\alpha - 2 = -1 $. The sum $ S(t) $ behaves like the harmonic series:
        \begin{align}
            S(t) \leq \ln(t+t_0-1).
        \end{align}
        Thus,
        \begin{align}
            \frac{2^{D_{\tref{thm:stan_tabular},1}\alpha}S(t)}{t+t_0} \leq \frac{2^{D_{\tref{thm:stan_tabular},1}\alpha}\ln (t+t_0-1)}{t+t_0}.
        \end{align}
        \item When $D_{\tref{thm:stan_tabular},1}\alpha > 1 $, then $D_{\tref{thm:stan_tabular},1}\alpha - 2 > -1 $. The sum $S(t)$ grows polynomially:
        \begin{align}
            S(t) \leq \frac{(t+t_0)^{D_{\tref{thm:stan_tabular},1}\alpha - 1}}{D_{\tref{thm:stan_tabular},1}\alpha - 1} .
        \end{align}
        Therefore,
        \begin{align}
            \frac{2^{D_{\tref{thm:stan_tabular},1}\alpha}S(t)}{(t+t_0)^{D_{\tref{thm:stan_tabular},1}\alpha}} = \frac{2^{D_{\tref{thm:stan_tabular},1}\alpha}(t+t_0)^{D_{\tref{thm:stan_tabular},1}\alpha - 1}}{(D_{\tref{thm:stan_tabular},1}\alpha - 1)(t+t_0)^{D_{\tref{thm:stan_tabular},1}\alpha}} \leq \frac{2^{D_{\tref{thm:stan_tabular},1}\alpha}}{(D_{\tref{thm:stan_tabular},1}\alpha - 1)(t+t_0)} .
        \end{align}
    \end{enumerate}
    Substituting the bounded summation term back into the recursive inequality, we obtain:
    \begin{align}
    \label{eq:q bar t}
        \E\qty[L(q_t)] & \leq \qty(\frac{2 t_0}{t + t_0})^{D_{\tref{thm:stan_tabular},1}\alpha} \E\qty[L(q_{\bar t})] + \frac{2^{D_{\tref{thm:stan_tabular},1}\alpha}S(t)}{(t+t_0)^{D_{\tref{thm:stan_tabular},1}\alpha}}\notag \\
        & \leq \frac{D_{\tref{thm:stan_tabular},4}}{(t+t_0)^{D_{\tref{thm:stan_tabular},1}\alpha}} \E\qty[L(q_{\bar t})]+ \frac{D_{\tref{thm:stan_tabular},5}\ln(t+t_0)}{(t+t_0)^{\min\qty(1, D_{\tref{thm:stan_tabular},1}\alpha)}}.
    \end{align}
    For the $\E\qty[L(q_{\bar t})]$ term, since $\bar t$ is deterministic, following the similar derivation as above, we can obtain
    \begin{align*}
        \E\qty[L(q_{\bar t})]\leq \qty(\frac{t_0}{\bar t + t_0})^{D_{\tref{thm:stan_tabular},1}\alpha} \E\qty[L(q_{0})] + D_{\tref{thm:stan_tabular},2} \sum_{k=0}^{\bar t-1} \frac{\alpha^2}{(k + t_0)^2} \qty(\frac{k + 1 + t_0}{\bar t + t_0})^{D_{\tref{thm:stan_tabular},1}\alpha}. 
    \end{align*}
    Similarly, the summation term can be bounded by:
    \begin{align}
        \sum_{k=0}^{\bar t-1} \frac{\alpha^2}{(k + t_0)^2} \qty(\frac{k + 1 + t_0}{\bar t + t_0})^{D_{\tref{thm:stan_tabular},1}\alpha} 
        &\leq  \frac{2^{D_{\tref{thm:stan_tabular},1}\alpha}}{(\bar t + t_0)^{D_{\tref{thm:stan_tabular},1}\alpha}} \sum_{k=t_0}^{\bar t + t_0-1} k^{D_{\tref{thm:stan_tabular},1}\alpha - 2}\\
        &\leq \frac{2^{D_{\tref{thm:stan_tabular},1}\alpha}\bar t}{(\bar t + t_0)^{D_{\tref{thm:stan_tabular},1}\alpha}}\max(t_0^{D_{\tref{thm:stan_tabular},1}\alpha-2}, (\bar t+t_0-1)^{D_{\tref{thm:stan_tabular},1}\alpha-2}).
    \end{align}
    Therefore, $\E\qty[L(q_{\bar t})]\leq D_{\tref{thm:stan_tabular},6} \E\qty[L(q_{0})] + D_{\tref{thm:stan_tabular},7}$. Combining this with \eqref{eq:q bar t}, denote $B_{\tref{thm:stan_tabular},1} \doteq D_{\tref{thm:stan_tabular},4}D_{\tref{thm:stan_tabular},6}$, $B_{\tref{thm:stan_tabular},2} \doteq D_{\tref{thm:stan_tabular},1}$ and $B_{\tref{thm:stan_tabular},3} \doteq D_{\tref{thm:stan_tabular},4}D_{\tref{thm:stan_tabular},7}+D_{\tref{thm:stan_tabular},5}$ then completes the proof of the first case.

    \textbf{Case 2: $\epsilon_\alpha \in (0.5, 1)$}.
    With the step size $\alpha_j = \frac{\alpha}{(j + t_0)^{\epsilon_\alpha}}$, the sum can be approximated by an integral:
    \begin{align}
        \sum_{j=k+1}^{t-1} \alpha_j \geq \alpha \int_{k+1}^{t} \frac{1}{(x + t_0)^{\epsilon_\alpha}} dx = \frac{\alpha}{1 - \epsilon_\alpha} \qty[ (t + t_0)^{1 - \epsilon_\alpha} - (k + 1 + t_0)^{1 - \epsilon_\alpha} ].
    \end{align}
    Thus, the product term becomes:
    \begin{align}
        \prod_{j=k+1}^{t-1} (1 - D_{\tref{thm:stan_tabular},1} \alpha_j) \leq \exp\qty( -\frac{D_{\tref{thm:stan_tabular},1} \alpha}{1 - \epsilon_\alpha} \left[ (t + t_0)^{1 - \epsilon_\alpha} - (k + 1 + t_0)^{1 - \epsilon_\alpha} \right] ).
    \end{align}
    Substituting back into the recursive inequality, we can get:
    \begin{align}
        \E\qty[L(q_t)] &\leq \exp\left( -\frac{D_{\tref{thm:stan_tabular},1} \alpha}{1 - \epsilon_\alpha} (t + t_0)^{1 - \epsilon_\alpha} \right) \exp\left(\frac{D_{\tref{thm:stan_tabular},1} \alpha}{1 - \epsilon_\alpha} t_0^{1 - \epsilon_\alpha} \right) \E\qty[L(q_{\bar t})] \\
        &\quad + D_{\tref{thm:stan_tabular},2} \sum_{k=\bar t}^{t-1} \frac{\alpha^2}{(k + t_0)^{2\epsilon_\alpha'}} \exp\left( -\frac{D_{\tref{thm:stan_tabular},1} \alpha}{1 - \epsilon_\alpha} \qty[ (t + t_0)^{1 - \epsilon_\alpha} - (k + 1 + t_0)^{1 - \epsilon_\alpha} ] \right).
    \end{align}
    Denote $D_{\tref{thm:stan_tabular},6} \doteq \exp\left(\frac{D_{\tref{thm:stan_tabular},1} \alpha}{1 - \epsilon_\alpha} t_0^{1 - \epsilon_\alpha} \right)$.
    Now we bound the summation term. 
    Since $2\epsilon_\alpha'>1$, this term can be bounded by:
    \begin{align}
        &\sum_{k=\bar t}^{t-1} \frac{\alpha^2}{(k + t_0)^{2\epsilon_\alpha'}} \exp\left( -\frac{D_{\tref{thm:stan_tabular},1} \alpha}{1 - \epsilon_\alpha} \left[ (t + t_0)^{1 - \epsilon_\alpha} - (k + 1 + t_0)^{1 - \epsilon_\alpha} \right] \right) \\
        = &\sum_{k=\bar t}^{\bar t+\lfloor\frac{t}{2}\rfloor-1} \frac{\alpha^2}{(k + t_0)^{2\epsilon_\alpha'}} \exp\left( -\frac{D_{\tref{thm:stan_tabular},1} \alpha}{1 - \epsilon_\alpha} \left[ (t + t_0)^{1 - \epsilon_\alpha} - (k + 1 + t_0)^{1 - \epsilon_\alpha} \right] \right) \\
        &\quad + \sum_{k=\bar t+\lfloor\frac{t}{2}\rfloor}^{t-1} \frac{\alpha^2}{(k + t_0)^{2\epsilon_\alpha'}} \exp\left( -\frac{D_{\tref{thm:stan_tabular},1} \alpha}{1 - \epsilon_\alpha} \left[ (t + t_0)^{1 - \epsilon_\alpha} - (k + 1 + t_0)^{1 - \epsilon_\alpha} \right] \right) \\
        \leq &\lfloor\frac{t}{2}\rfloor \frac{\alpha^2}{t_0^{2\epsilon_\alpha'}} \exp\left( -\frac{D_{\tref{thm:stan_tabular},1} \alpha}{1 - \epsilon_\alpha} \left[ (t + t_0)^{1 - \epsilon_\alpha} - (\bar t+\lfloor\frac{t}{2}\rfloor + t_0)^{1 - \epsilon_\alpha} \right] \right) + \sum_{k=\bar t+\lfloor\frac{t}{2}\rfloor}^{t-1} \frac{\alpha^2}{(k + t_0)^{2\epsilon_\alpha'}}\\
        \leq& \lfloor\frac{t}{2}\rfloor\frac{\alpha^2}{t_0^{2\epsilon_\alpha'}}\exp(-D_{\tref{thm:stan_tabular},7} (t+t_0)^{1-\epsilon_\alpha}) +  \int_{\bar t+\lfloor\frac{t}{2}\rfloor-1}^{t-1} \frac{\alpha^2}{(x+t_0)^{2\epsilon_\alpha'}}dx \\
        \leq& \lfloor\frac{t}{2}\rfloor\frac{\alpha^2}{t_0^{2\epsilon_\alpha'}}\exp(-D_{\tref{thm:stan_tabular},7} (t+t_0)^{1-\epsilon_\alpha}) +  \frac{\alpha^2}{1-2\epsilon_\alpha'} (t+t_0)^{1-2\epsilon_\alpha'} \\
        \leq& D_{\tref{thm:stan_tabular},8}(t+t_0)^{1-2\epsilon_\alpha'}.
    \end{align}
    Substituting the bounded summation term back into the recursive inequality,  we obtain:
    \begin{align}
        \E\qty[L(q_t)] & \leq \exp\qty( -\frac{D_{\tref{thm:stan_tabular},1} \alpha}{1 - \epsilon_\alpha} (t + t_0)^{1 - \epsilon_\alpha} ) D_{\tref{thm:stan_tabular},6}\E\qty[L(q_{\bar t})] + D_{\tref{thm:stan_tabular},2}D_{\tref{thm:stan_tabular},8}(t+t_0)^{1-2\epsilon_\alpha'} \\
        & \leq \exp\qty( -\frac{D_{\tref{thm:stan_tabular},1} \alpha}{1 - \epsilon_\alpha} (t + t_0)^{1 - \epsilon_\alpha} ) D_{\tref{thm:stan_tabular},6}\E\qty[L(q_{\bar t})] + D_{\tref{thm:stan_tabular},9}(t+t_0)^{1-2\epsilon_\alpha'}.
    \end{align}
    Then Theorem~\ref{thm:stan_tabular} follows from using the Gronwall inequality to bound $L(q_{\bar t})$ with $\norm{q_0}$ and the equivalence of norms, with $B_{\tref{thm:stan_tabular},4}$, $B_{\tref{thm:stan_tabular},5}$, and $B_{\tref{thm:stan_tabular},6}$ selected similarly.
\end{proof}

\section{Comparsion with other algorithms}
\label{sec:exp_compare}
We compare our unmodified linear $Q$-learning algorithm with several variants that incorporate common modifications. All experiments are conducted in the Baird's counterexample environment with 10 independent runs per algorithm. All algorithms use the same $\epsilon$-softmax behavior policy with adaptive temperature parameter $\kappa_0 = 10$ and $\epsilon = 0.1$ as described in our main paper.

We compare the following algorithms:
\begin{enumerate}
    \item \textbf{No Modification:} The original linear $Q$-learning algorithm as analyzed in our theoretical results.
    \item \textbf{Target Network}: A separate target network is used for computing the TD error, with the target network updated every 10 timesteps.
    \item \textbf{Weight Projection:} After each update, the weights are projected onto a ball with radius 10, ensuring $\norm{w_t}_2 \leq 10$ at all times\citep{chen2023target}.
    \item \textbf{Ridge Regularization:} A ridge regularization term is added to the update rule with coefficient $\eta = 0.01$\citep{zhang2021breaking}, penalizing large weight values.
\end{enumerate} 

Figure~\ref{fig:convergence_cmpr} shows the evolution of weight norms $\norm{w_t}^2_2$ for all four algorithms. While all methods eventually maintain bounded weights, our unmodified approach achieves comparable performance without the computational overhead or hyperparameter tuning required by the other methods.

\begin{figure}[h]
    \centering
    \includegraphics[width=0.8\textwidth]{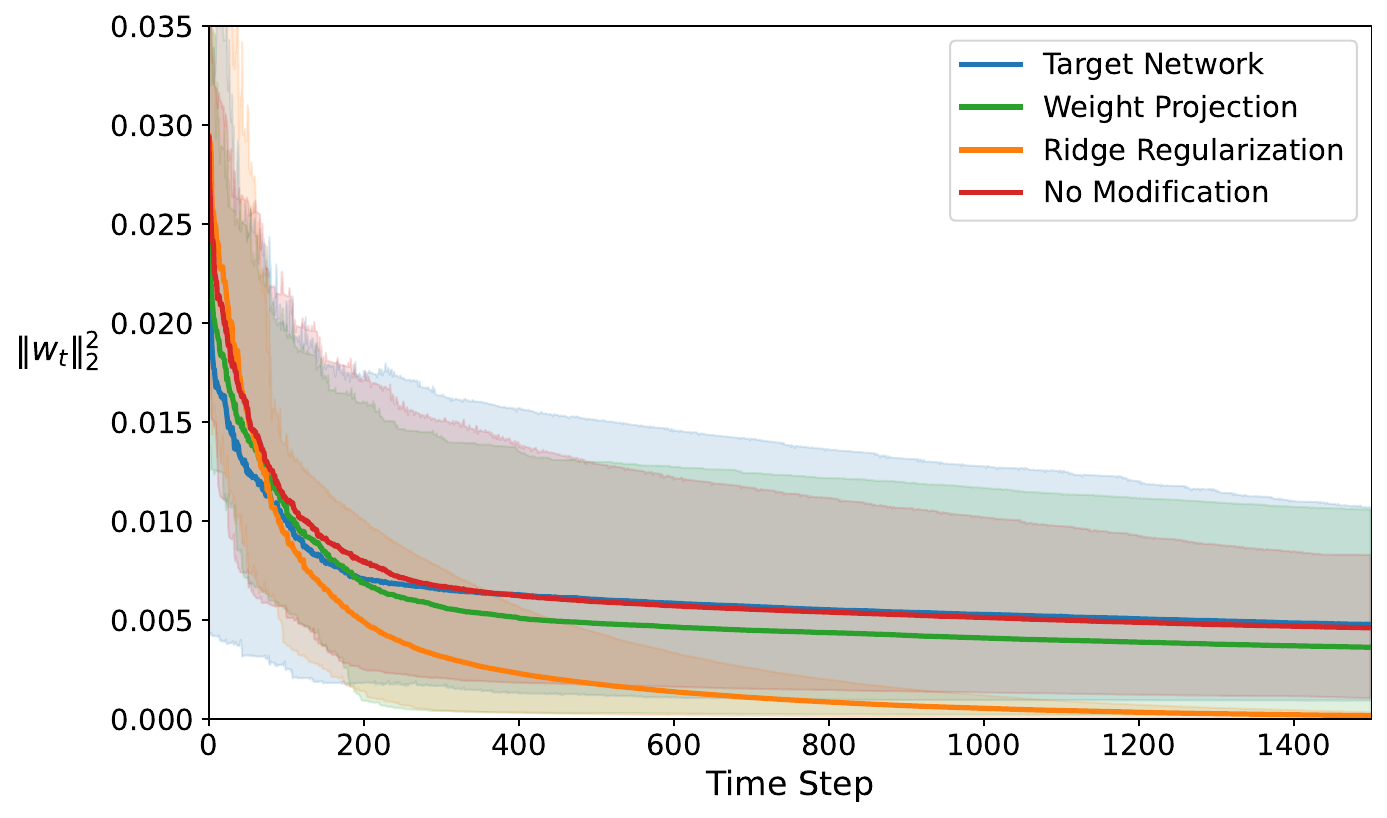}
    \caption{Each scenario is independently run 10 times, with the solid lines representing the averages of the squared $L^2$ norm of weights, and the shaded areas indicating the ranges between minimum and maximum values. This comparison illustrates the impact of different modification strategies on the algorithm's  convergence behavior.
    }
    \label{fig:convergence_cmpr}
\end{figure}

\end{document}